%% file: main.tex
\documentclass[sigconf]{aamas}

\usepackage{balance} 

\doi{TFHZ1924}

\makeatletter
\gdef\@copyrightpermission{
  \begin{minipage}{0.2\columnwidth}
   \href{https://creativecommons.org/licenses/by/4.0/}{\includegraphics[width=0.90\textwidth]{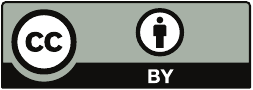}}
  \end{minipage}\hfill
  \begin{minipage}{0.8\columnwidth}
   \href{https://creativecommons.org/licenses/by/4.0/}{This work is licensed under a Creative Commons Attribution International 4.0 License.}
  \end{minipage}
  \vspace{5pt}
}
\makeatother

\setcopyright{ifaamas}
\acmConference[AAMAS '26]{Proc.\@ of the 25th International Conference
on Autonomous Agents and Multiagent Systems (AAMAS 2026)}{May 25 -- 29, 2026}
{Paphos, Cyprus}{C.~Amato, L.~Dennis, V.~Mascardi, J.~Thangarajah (eds.)}
\copyrightyear{2026}
\acmYear{2026}
\acmDOI{}
\acmPrice{}
\acmISBN{}

\title[AAMAS-2026 Formatting Instructions]{A Generic Framework for Fair Consensus Clustering in Streams}

\author{Diptarka Chakraborty}
\affiliation{
  \institution{National University of Singapore}
  \city{Singapore}
  \country{Singapore}}
\email{diptarka@nus.edu.sg}

\author{Kushagra Chatterjee}
\affiliation{
  \institution{National University of Singapore}
  \city{Singapore}
  \country{Singapore}}
\email{kushagra.chatterjee@u.nus.edu}

\author{Debarati Das}
\affiliation{
  \institution{Pennsylvania State University}
  \city{Pennsylvania}
  \country{USA}}
\email{debaratix710@gmail.com}

\author{Tien-Long Nguyen}
\affiliation{
  \institution{Pennsylvania State University}
  \city{Pennsylvania}
  \country{USA}}
\email{tfn5179@psu.edu}

\begin{abstract}
\emph{Consensus clustering} seeks to combine multiple clusterings of the same dataset, potentially derived by considering various non-sensitive attributes by different agents in a multi-agent environment, into a single partitioning that best reflects the overall structure of the underlying dataset. Recent work by Chakraborty et al. [COLT’25] introduced a \emph{fair} variant under proportionate fairness and obtained a constant-factor approximation by naively selecting the best \emph{closest fair} input clustering; however, their offline approach requires storing all input clusterings, which is prohibitively expensive for most large-scale applications.

In this paper, we initiate the study of fair consensus clustering in the streaming model, where input clusterings arrive sequentially and memory is limited. We design the first constant-factor algorithm that processes the stream while storing only a logarithmic number of inputs. 
En route, we introduce a new generic algorithmic framework that integrates \emph{closest fair clustering} with \emph{cluster fitting}, yielding improved approximation guarantees not only in the streaming setting but also when revisited offline.
Furthermore, the framework is fairness-agnostic: it applies to any fairness definition for which an approximately close fair clustering can be computed efficiently. Finally, we extend our methods to the more general \emph{$k$-median consensus clustering} problem.
\end{abstract}

\keywords{Consensus Clustering, Fairness, Approximation Algorithms, Streaming Algorithms, Combinatorial Optimization}

\hypersetup{
    linkcolor=blue
    ,citecolor=orange
    ,filecolor=purple
    ,urlcolor=orange
    ,menucolor=purple
    ,runcolor=purple
}

\usepackage[lined,linesnumbered,ruled,vlined]{algorithm2e}

\usepackage{amsmath, amsthm, amsfonts, graphicx, color, subcaption, enumerate, bm, array, mathtools}

\usepackage{xparse} 
\usepackage{enumitem} 

\makeatletter
\renewcommand{\nllabel}[1]
 {{\let\@currentlabel\algocf@currentlabel
  \let\@currentcounter\algocf@currentcounter
  \label{#1}}}%

\renewcommand{\algocf@nl@sethref}[1]{%
  \renewcommand{\theHAlgoLine}{\thealgocfproc.#1}%
  \hyper@refstepcounter{AlgoLine}%
  \gdef\algocf@currentlabel{#1}%
  \gdef\algocf@currentcounter{AlgoLine}%
 }%
\makeatother

\usepackage[capitalise,nameinlink]{cleveref}

\usepackage[colorinlistoftodos,prependcaption,textsize=tiny]{todonotes}

\usepackage{multicol, multirow}
\usepackage{thmtools, thm-restate}
\usepackage{ragged2e}
\usepackage{lineno}
\usepackage{framed}
\usepackage[framemethod=tikz]{mdframed}

\usepackage{subfiles}
\usepackage{dirtytalk}
\graphicspath{{graphics/}}

\crefname{claim}{Claim}{Claims}
\crefname{property}{Property}{Properties}

\input{newcommand}

\begin{document}


\pagestyle{fancy}
\fancyhead{}


\maketitle 


\input{introduction}
\input{Preliminaries}
\input{fair-consensus-clustering}

\input{A-faster-fair-consensus}

\input{k-median-consensus}

\input{streaming.tex}

\section{Conclusion and Future Work}
In this paper, we initiate the study of fair consensus clustering under the $k$-median objective in the streaming model and present the first constant-factor algorithm that operates in sublinear space. Relative to prior work on fair consensus clustering~\cite{chakraborty2025towards}, our contribution advances the state of the art by handling streaming data with sublinear memory and by delivering the first constant-factor guarantee for the more general $k$-median objective. We also introduce a new generic algorithmic framework that works irrespective of the specific fairness notion and the number of colored groups (whether disjoint or not) -- in fact, it works with any constraint -- provided the corresponding closest fair (constrained) clustering problem can be efficiently approximated.

Improving space usage, particularly for the $k$-median variant, and tightening approximation guarantees are among a few interesting open directions. An exciting avenue is to narrow the gap between the approximation factor achieved by our framework and the best attainable for the closest fair clustering problem. However, existing hardness results for fair consensus clustering, together with exact algorithms for closest fair clustering in certain special cases~\cite{chakraborty2025towards}, preclude any fully generic fair consensus algorithm from matching the approximation factor of the closest fair clustering exactly. Another intriguing direction of research is to explore learning-augmented approaches, leveraging machine-learned predictors further to improve the approximation quality in fair consensus clustering, at least empirically.

We would like to mention that in this paper we consider two variants of the streaming model, one in which all the pairs for a particular input clustering arrive together (in a stream), which we refer to as an insertion-only stream, and then a general model where pairs may appear in an arbitrary order, which we refer to as a generalized insertion-only stream. Our $1$-median algorithm works for this generalized model, while extension to the  $k$-median only works for the insertion-only model. Thus, designing a streaming algorithm for the $k$-median variant in the generalized insertion-only model would be an interesting open direction. It would also be intriguing to study other possible streaming models in the context of consensus clustering, and we leave this as a potential future direction.



\begin{acks}
Diptarka Chakraborty was supported in part by an MoE AcRF Tier 1 grant (T1 251RES2303) and a Google South \& South-East Asia Research Award. Kushagra Chatterjee was supported by an MoE AcRF Tier 1 grant (T1 251RES2303). Debarati Das was supported in part by NSF grant 2337832. 
\end{acks}



\bibliographystyle{ACM-Reference-Format} 
\bibliography{sample}

\clearpage
\newpage

\appendix

\input{appendix}
\input{improved-parameters}


\end{document}

%% file: newcommand.tex
\makeatletter
\g@addto@macro\bfseries{\boldmath}
\makeatother

\newtheorem{theorem}{Theorem}
\newtheorem{lemma}{Lemma}[section]
\newtheorem{claim}[lemma]{Claim}

\newtheorem{corollary}[lemma]{Corollary}
\newtheorem{observation}[lemma]{Observation}

\theoremstyle{definition}
\newtheorem{definition}[lemma]{Definition}

\theoremstyle{remark}

\newtheorem*{remark*}{Remark}

\def\polylog{\operatorname{polylog}}

\renewcommand{\epsilon}{\varepsilon}
\newcommand{\poly}{\operatorname{poly}}

\newcommand{\R}{\mathbb{R}}
\newcommand{\m}[1]{\mathcal{#1}}
\newcommand{\OO}{\widetilde{O}}

\newcommand{\n}{\nonumber}

\DeclareMathOperator{\dist}{dist}

\DeclareMathOperator*{\argmin}{arg\,min}
\renewcommand{\emptyset}{\varnothing}
\NewDocumentCommand{\expt}{m g}{%
  \IfNoValueTF{#2}
    {\mathbf{E}\left[#1\right]}%
    {\mathbf{E}_{#2}\left[ #1\right ]}%
} 

\newcommand{\fair}{\textit{Fair Clustering}}

\newcommand{\card}[1]{\left\lvert #1 \right\rvert}
\newcommand{\cls}[1]{\m{#1}}

\newcommand{\inpconclss}{\mathcal{I}}
\newcommand{\clusterings}{\cls{C}_{1},\cls{C}_{2}, \dots, \cls{C}_{m}} 
\NewDocumentCommand{\concls}{g}{%
  \IfNoValueTF{#1}
    {\cls{C}}%
    {\cls{C}_{#1}}%
} 
\NewDocumentCommand{\fairconcls}{g}{%
  \IfNoValueTF{#1}
    {\mathcal{F}}%
    {\mathcal{F}_{#1}}%
} 
\newcommand{\candidateSet}{\widetilde{\mathcal{F}}} 
\newcommand{\rcandidateSet}{\m{R}} 
\newcommand{\candidate}[1]{\widetilde{#1}} 
\newcommand{\tripleconcls}{T} 
\newcommand{\fairtriple}{\widetilde{T}} 
\NewDocumentCommand{\clsfitting}{g}{%
  \IfNoValueTF{#1}
    {\mathrm{ClusterFitting}}%
    {\mathrm{ClusterFitting}\left(#1\right)}%
} 

\newcommand{\outconcls}{\cls{F}} 
\newcommand{\obj}[1]{\mathrm{Obj}\left( #1 \right)} 
\newcommand{\approxFactorFairCorCls}{\eta}
\newcommand{\approxFactorClsFair}{\gamma}
\newcommand{\approxFactorCorCls}{\rho} 
\newcommand{\runtimeClsFair}[1]{t_{1}(#1)}
\newcommand{\runtimeFairCorCls}[1]{t_{2}(#1)}
\newcommand{\costcor}[1]{\operatorname{cost}(#1)} 
\newcommand{\optconval}{\mathrm{OPT}} 
\newcommand{\avgconval}{\mathrm{AVG}}
\newcommand{\optconcls}{\cls{F}^{*}} 
\newcommand{\unalignedSet}[1]{I_{#1}}
\newcommand{\optclsk}[1]{M_{#1}} 
\NewDocumentCommand{\fairmediank}{g}{%
  \IfNoValueTF{#1}
    {\cls{C}^{*}}%
    {\cls{C}^{*}_{#1}}%
} 

\NewDocumentCommand{\optobjk}{g}{%
  \IfNoValueTF{#1}
    {O}%
    {O_{#1}}%
} 

\newcommand{\coreset}{P} 
\newcommand{\errCoreset}{\epsilon} 
\newcommand{\impSet}{\m{X}} 
\newcommand{\metSpace}{\m{M}} 
\newcommand{\neigh}[1]{\mathscr{N}_{#1}} 
\newcommand{\farNeigh}[1]{\mathscr{F}_{#1}} 
\NewDocumentCommand{\sampSet}{g}{%
  \IfNoValueTF{#1}
    {\m{S}}%
    {\m{S}_{#1}}%
} 
\newcommand{\rate}{\tau} 
\newcommand{\sampRate}{p} 
\newcommand{\sampDist}{\ell} 
\newcommand{\sampDF}{\lambda} 
\newcommand{\optKFac}{\xi} 
\newcommand{\mfsSet}{F}
\newcommand{\mfskappa}{\kappa} 
\newcommand{\mfsrho}{\mu} 
\newcommand{\near}[1]{\mathrm{Near}(#1)} 
\newcommand{\midcl}[1]{\mathrm{Mid}(#1)} 
\newcommand{\middense}[1]{\mathrm{Mid}^{\mathrm{dense}}(#1)} 
\newcommand{\midsparse}[1]{\mathrm{Mid}^{\mathrm{sparse}}(#1)} 
\newcommand{\far}[1]{\mathrm{Far}(#1)} 

\newcommand{\infsmall}{\epsilon_{1}}

\newcommand{\BibTeX}{\rm B\kern-.05em{\sc i\kern-.025em b}\kern-.08em\TeX}

\newcommand{\objec}{\operatorname{Obj}}

\newcommand{\cset}{\mathcal{I}}
\newcommand{\oset}{\mathsf{Z}}
\newcommand{\canset}{\widetilde {\m{F}}}

\newcommand{\fc}{\texttt{find-candidates}}

\newcommand{\sone}{\texttt{st-1Med}}

\makeatletter
\newcommand{\setword}[2]{%
  \phantomsection
  #1\def\@currentlabel{\unexpanded{#1}}\label{#2}%
}
\makeatother

\newcommand{\loc}{\text{appendix}}

%% file: introduction.tex
\section{Introduction}
Clustering -- the task of partitioning data points into groups based on their mutual similarity or dissimilarity -- lies at the core of unsupervised learning and is pervasive across machine learning and data analysis applications. In many settings, each data point represents an individual endowed with protected attributes, which can be encoded by assigning a specific color to each point. While traditional clustering algorithms typically succeed in optimizing application-specific objectives, they often fall short in ensuring fairness in their outputs, risking the introduction or perpetuation of biases against marginalized groups delineated by sensitive attributes such as gender, ethnicity, and race~\cite{kay2015unequal, bolukbasi2016man}. These biases often stem not from the algorithms themselves, but rather from the historical marginalization inherent in the data used for training. Consequently, mitigating such biases to achieve fair outcomes has emerged as a central topic to guarantee demographic parity~\cite{dwork2012fairness} and equal opportunity~\cite{hardt2016equality}, not only in clustering but also in any machine learning driven process.

A seminal step toward ensuring fairness in several unsupervised learning tasks is the notion of fair clustering introduced by~\cite{chierichetti2017fair}, where points are colored and each partition/cluster must maintain a certain "balance" among various colored points. Intuitively, this balance promotes fair representation and helps address disparate impact. Motivated by this notion of fairness, recent work~\cite{chakraborty2025towards} has initiated the study of the fair variant of the \emph{consensus clustering} problem. Consensus clustering (also referred to as \emph{median partition}), a fundamental problem in machine learning and data analysis, seeks to aggregate a set of input clusterings -- often derived from distinct, non-sensitive attributes -- into a single clustering that best captures the collective structure of the data by minimizing the \emph{median objective}, i.e., the sum of distances to the inputs. Here, the distance between two clusterings is measured by counting the pairs of points co-clustered in one but not the other. This problem is particularly relevant in applications such as gene integration in bioinformatics~\cite{filkov2004integrating, filkov2004heterogeneous}, data mining~\cite{topchy2003combining}, and community detection~\cite{lancichinetti2012consensus}. However, the problem is \texttt{NP}-hard~\cite{kvrivanek1986np, swamy2004correlation} and even \texttt{APX}-hard -- precluding any $(1+\epsilon)$-approximation for arbitrary $\epsilon > 0 $ -- even with only three input clusterings~\cite{BonizzoniVDJ08}. The best-known approximation factor to date is $11/7$~\cite{ailon2008aggregating}, although several heuristics have been proposed that are known to work well in practice (e.g., \cite{goder2008consensus, monti2003consensus, wu2014k}). 

The standard consensus clustering procedures do not explicitly ensure fairness, whether it is \emph{proportional fairness} or \emph{statistical fairness}, in the aggregated clustering. For instance, consider community detection in social networks: valid partitions might arise from non-sensitive attributes, such as age groups or food preferences; yet, the final consensus partition should be fair, with each community reflecting an appropriate representation across protected attributes, like race or gender. The known standard consensus algorithms, whether being approximation algorithms or heuristics, fail to achieve such fairness. Addressing this gap, recent work~\cite{chakraborty2025towards} has initiated the incorporation of fairness constraints into consensus clustering and provided constant-factor approximation algorithms.

However, modern large-scale applications, ranging from federated learning systems to continuous data collection on online platforms, rarely permit full access to the entire input at once.
This motivates the study of space-efficient algorithms in the streaming model, where the input arrives sequentially and the algorithm processes points on the fly while retaining only a small subset or a compact \emph{sketch} of the data. At the end of the stream, all downstream computation is performed using this sketch. 
This naturally raises the question: can we maintain fairness efficiently in the streaming setting, achieving sublinear, ideally logarithmic, space per reported output, while preserving strong approximation guarantees?
Although multiple passes over the stream can sometimes be permitted, single-pass algorithms are preferred because additional passes are often cost-prohibitive in real-world applications (see~\cite{Muthukrishnan05, BabcockEtAl02}).

Further, in the insertion-only streaming model, which is the most commonly studied setting for clustering (e.g.,~\cite{CharikarOCallaghanPanigrahy03, rosman2014coresets, braverman2019streaming, schmidt2019fair}), points from an underlying metric space arrive one by one, and deletions are disallowed. In this paper, we initiate the study of fair consensus clustering in the streaming model.
While~\cite{chakraborty2025towards} provides constant-factor algorithms for fair consensus, their approach effectively computes a closest fair clustering for each input clustering, evaluates the median objective for each of these candidates, and returns the best -- necessitating explicit storage of all input clusterings and their closest fair clusterings and thus precluding sublinear-space streaming computation. 
It raises a natural question: can we obtain comparable approximation guarantees in a streaming model using only sublinear space?
We answer this in the affirmative: with a single pass over the input, we obtain constant-factor approximations for fair consensus clustering not only for the (1-)median objective, as in~\cite{chakraborty2025towards}, but also for the more general \emph{$k$-median} objective. Informally, $k$-median (fair) consensus clustering seeks $k$ (fair) representative partitions, generalizing the standard (fair) consensus clustering framework. As a natural extension, it has been applied to model selection and clustering ensemble summarization, image segmentation ensembles, bioinformatics, and community detection~\cite{topchy2005clustering, caruana2006meta, bellec2010multi, lancichinetti2012consensus}.

\subsection{Our Contribution}

\textbf{Fair Consensus Clustering.}
In this problem, we are given a collection of input clusterings 
$\mathcal{I} = \{\mathcal{C}_1, \mathcal{C}_2, \ldots, \mathcal{C}_m\}$ defined on a common ground set $V$ (where $|V| = n$), 
arriving sequentially in a stream, together with a fairness constraint $\mathcal{P}$. 
The objective is to compute a \emph{fair consensus clustering} $\mathcal{C}$ that minimizes the total distance 
between $\mathcal{C}$ and each input clustering in $\mathcal{I}$, while satisfying the fairness constraint 
$\mathcal{P}$ and using small space. Formally, we establish the following result:
\begin{itemize}
    \item Suppose there exists a $\gamma$-approximation algorithm for the closest fair clustering problem 
    under constraint $\mathcal{P}$, for some parameter $\gamma > 1$. 
    Then there exists a randomized polynomial time streaming algorithm that, given the input clusterings $\mathcal{I}$, 
    computes a fair clustering $\mathcal{F}$ in $ O(n\log( mn ))$ space, achieving a $(\gamma + 1.995)$-approximation with probability at least 
    $1 - 1/poly(m)$.
\end{itemize}

It is worth noting that since an intended output of the fair consensus clustering problem is a clustering on $V$, it is of size $\Omega(n)$. Furthermore, it follows from a simple counting argument that the number of fair clusterings, even in the equi-proportionate case, is $n^{\Omega(n)}$, and thus a standard information-theoretic (encoding-decoding based) argument shows that the space requirement must be $\Omega(n \log n)$ bits. So the space usage of our streaming algorithm is nearly optimal.

\textbf{$k$-Median Fair Consensus Clustering.}
Next, we consider a more general setting where, instead of finding a single representative clustering, 
the goal is to identify a collection of $k$ representative clusterings 
$\mathcal{S} = \{\mathcal{S}_1, \ldots, \mathcal{S}_k\}$ 
that together minimize the total distance between each input clustering and its nearest representative, 
while ensuring that each representative $\mathcal{S}_i$ satisfies the fairness constraint $\mathcal{P}$. Formally, we show the following:

\begin{itemize}
    \item Suppose there exists a $\gamma$-approximation algorithm for the closest fair clustering problem 
    under constraint $\mathcal{P}$, for some $\gamma > 1$. 
    Then there exists a randomized polynomial time streaming algorithm that, given the input clusterings $\mathcal{I}$, 
    computes a $k$-median fair consensus clustering $\mathcal{S}$ in 
    $O(k^{2}n\polylog(mn))$ space, achieving a $(1.0151\gamma + 1.99951)$-approximation with probability at least 
    $1 - 1/poly(m)$.
\end{itemize}

To the best of our knowledge, this is the first work to study \emph{fair consensus clustering} in the streaming model, and also the first to address its more general variant, \emph{$k$-median fair consensus clustering}. In fact, for the offline version of the \emph{$k$-median fair consensus clustering} problem, we obtain an improved approximation guarantee of $(\gamma + 1.92)$.

Building on the recent results of~\cite{chakraborty2025towards}, which study the closest fair clustering problem under two-color fairness constraints, 
as a corollary, we obtain the following approximation guarantees for the $k$-median fair consensus clustering problem in the streaming model: 
a $3.01461$-approximation when the population is equi-proportionate; 
a $19.25621$-approximation when the population ratio (of two colored groups) is $p\!:\!1$ for any positive integer $p > 1$; 
and a $35.49781$-approximation for the general case where the population ratio is 
$p\!:\!q$ where $p, q > 1$ are positive integers. 


We further remark that our proposed algorithms are \emph{robust to the specific definition of the fairness constraint}. 
This is because the fair component is used in a black-box manner: as long as there exists a good exact or approximation algorithm for the underlying closest fair clustering problem under a given fairness constraint, our streaming framework remains applicable. 
Consequently, the approach generalizes naturally to various settings, 
such as those involving an arbitrary number of colors (even with potentially overlapping colored groups), heterogeneous population ratios, 
or alternative notions like statistical fairness. In fact, our framework works for any constraint, not necessarily for fairness constraints, provided we have an algorithm to find an (approximately) closest constrained clustering (a nearest neighbor in the constrained set).

\textbf{Framework Overview.}
To design our algorithms, we introduce a \emph{new two-phase framework} for fair consensus clustering, adapting the frameworks of~\cite{chakraborty2023clustering, chakraborty2025improved}. 
In the first phase, we construct a candidate set of consensus clusterings by combining two ideas: \emph{cluster fitting} and \emph{closest fair clustering}, and then select the one minimizing the overall objective. 
The primary challenge is adapting this process to the streaming setting, where it is not possible to store all the input clusterings simultaneously. 
To overcome this, we use the power of uniform sampling 
that retains only a logarithmic number of input clusterings while still guaranteeing a good approximation to the true consensus. 
Moreover, we show that using an additional uniformly sampled subset of the input, again of only logarithmic size, is sufficient to reliably identify the best candidate consensus clusterings. This results in a sublinear-space algorithm that stores only a logarithmic number of inputs while maintaining strong approximation guarantees.


Further to ensure fairness, our approach treats any available closest fair clustering algorithm as a \emph{black box}, integrating it seamlessly into our consensus framework. 
This modularity ensures that our overall approximation inherits the fairness-approximation factor of the underlying subroutine. 
Consequently, whenever the closest fair clustering subroutine admits an exact or improved approximation (e.g., in the equi-proportionate two-color case due to~\cite{chakraborty2025towards}), our overall algorithm immediately yields a more efficient and tighter approximation.

We further extend this framework to the \emph{$k$-median fair consensus clustering} problem, where the goal is to output $k$ fair representative clusterings rather than one. 
Here, we develop a more carefully structured sampling method to ensure sufficient representation across all $k$ clusters, along with fairness enforcement for each representative. Furthermore, to identify the best $k$-representative clusterings, we employ a more sophisticated technique, \emph{monotone faraway sampling}~\cite{braverman2021metric} which, once again, requires only a logarithmic number of input samples.
The combination of these sampling and fairness components yields the first streaming algorithms providing constant-factor approximations for both 1-median and $k$-median fair consensus clustering.

\subsection{Other Related Works}
Since the concept of fair clustering was introduced in~\cite{chierichetti2017fair}, different variants of the clustering problems have been explored while imposing fairness constraints, including $k$-center/median/means clustering~\cite{chierichetti2017fair, HuangJV19}, scalable methods~\cite{BackursIOSVW19}, proportional clustering~\cite{ChenFLM19}, fair representational clustering~\cite{BeraCFN19, bercea2019cost}, pairwise fairness~\cite{bandyapadhyay2024coresets, bandyapadhyay2024polynomial, bandyapadhyay2025constant}, correlation clustering~\cite{pmlr-v108-ahmadian20a, ahmadi2020fair, ahmadian2023improved}, median rankings~\cite{wei22, chakraborty2022, chakraborty2025improved}, and consensus clustering~\cite{chakraborty2025towards, chakraborty2026}, among others.

Clustering problems have been well-explored in streaming models, mainly in the insertion-only streaming model. A central algorithmic paradigm in this model is to maintain \emph{coresets} or other succinct summaries that preserve the clustering structure, enabling approximate solutions for objectives such as $k$-median, $k$-means, and $k$-center with provable guarantees. Different coreset constructions are known even with fairness constraints~\cite{huang2019coresets, schmidt2019fair, chhaya2022coresets, braverman2022power, xiong2024fair}. Seminal work established streaming approaches for clustering data streams and introduced merge-and-reduce frameworks for constructing coresets with polylogarithmic space and update time~\cite{bentley1980decomposable, HarPeledMazumdar04, FeldmanLangberg11, braverman2019streaming}. Unfortunately, to the best of our knowledge, none of the above implies any sublinear algorithm for the fair consensus clustering problem with a non-trivial approximation guarantee.

%% file: Preliminaries.tex
\section{Preliminaries}\label{sec:preliminary}

In this section, we define key terms and concepts that are essential for understanding the proofs and algorithms presented.

\begin{definition}[$\fair$]\label{def:fair-clustering}
Given a set of points $V$ and a fairness constraint $\m{P}$, we call a clustering $\m{F}$ of $V$ a $\fair$ if every cluster $F \in \m{F}$ satisfies the constraint $\m{P}$.
\end{definition}

For example, Chierichetti et al.~\cite{chierichetti2017fair} considered the case where the set $V$ is partitioned into two disjoint groups (or colors), red and blue, denoted by $R$ and $B$, respectively. In this setting, the fairness constraint $\m{P}$ requires that each cluster $F \in \m{F}$ preserves the global color ratio, i.e.,
\[
    \frac{|F \cap R|}{|F \cap B|} = \frac{|R|}{|B|}.
\]
Equivalently, every cluster must contain the same proportion of red and blue points as in the entire dataset.

For two clustering $\m{C}$ and $\m{C}'$ of $V$ we define $\dist(\m{C}, \m{C'})$ as the distance between two clustering $\m{C}$ and $\m{C}'$. The distance is measured by the number of pairs $(u,v)$ that are together in $\m{C}$ but separated by $\m{C}'$ and the number of pairs $(u,v)$ that are separated by $\m{C}$ but together in $\m{C}'$. More specifically,
\begin{align*}
\dist(\m{C}, \m{C}') = 
\big| \big\{ \{u,v\} \mid\,& u,v \in V,\; 
[u \sim_{\m{C}} v \land u \not\sim_{\m{C}'} v] \\
&\lor [u \not\sim_{\m{C}} v \land u \sim_{\m{C}'} v] \big\} \big|
\end{align*}
where $u \sim_{\m{C}} v$ denotes whether both $u$ and $v$ belong to the same cluster in $\m{C}$ or not.

\begin{definition}[Closest $\fair$]
    Given an arbitrary clustering $\m{D}$, a clustering $\m{N}^*$ is called a closest $\fair$ to $\m{D}$ if for all $\fair$ $\m{N}$ we have $\dist(\m{D}, \m{N}) \geq \dist(\m{D}, \m{N}^*)$. 
\end{definition}

We denote a closest $\fair$ by $\m{N}^*$.

\paragraph{$\gamma$-close $\fair$} We call a $\fair$ $\m{N}$ a $\gamma$-close $\fair$ to a clustering $\m{D}$ if
\[
    \dist(\m{D}, \m{N}) \leq \gamma \dist(\m{D}, \m{N}^*).
\]

\begin{definition}[$1$-median Consensus Clustering problem]
    Given a set of clusterings $\m{I} = \{\m{C}_1, \m{C}_2, \ldots, \m{C}_m\}$, the goal is find a clustering $\m{C}$ such that it minimizes the total distance between each input clustering and $\m{C}$. Formally, we seek to minimize the following objective
    \[
        \obj{\m{I}, \m{C}} = \sum_{C_i \in \m{I}} \dist(C_i, \m{C})
    \]
\end{definition}

The above objective is called the \emph{median} objective. In addition to the objective, when we want the clustering to be fair, it is called the \emph{$1$- median fair consensus clustering problem.}

 We generalize this problem to \emph{$k$-median consensus clustering} problem.

\begin{definition}[$k$-median Consensus Clustering problem]
    Given a collection of input clusterings \(\cset = \{\mathcal{C}_1, \ldots, \mathcal{C}_m\}\), the objective is to construct a set of \(k\) representative clusterings  
\[
    \oset = \{\mathcal{Z}_1, \ldots, \mathcal{Z}_k\}
\]
that minimizes the total distance between each input clustering and its nearest representative. Formally, we seek to minimize
\[
    \objec(\cset, \oset)
    = \sum_{\mathcal{C}_i \in \cset} \min_{\mathcal{Z}_j \in \oset} \dist(\mathcal{C}_i, \mathcal{Z}_j).
\]
\end{definition}

In addition to the objective, when we want the clustering to be fair, we call it the \emph{$k$-median fair consensus clustering problem.}

\paragraph{($\beta, k$)-approximate Fair Consensus Clustering}: A set of representatives $\oset = \{\m{Z}_1, \ldots, \m{Z}_k\}$ such that each $\m{Z}_i$ is fair is called a $(\beta, k)$-approximate Fair Consensus Clustering if the following is true
    \begin{align*}
        \objec(\cset, \oset) \leq \beta \, \, \obj{\cset, \mathsf{Z}^*}
    \end{align*}
    where $\mathsf{Z}^* = \{\m{Z}_1^*, \ldots, \m{Z}_k^*\}$ be the set of optimal representatives.

Let us now define our streaming models.


\paragraph{Streaming Model.}
In this paper, we consider an \emph{insertion-only} streaming model.
The input is a sequence of triples $(p,j,b)$, where:
\begin{itemize}
  \item $p=(u,v)$ is an unordered pair of vertices in $V$,
  \item $j\in[m]$ identifies a clustering $\m{C}_j\in\m{I}$, and
  \item $b\in\{0,1\}$ indicates whether $u$ and $v$ lie in the same cluster in the clustering $\m{C}_j$ ($b=0$) or in different clusters ($b=1$).
\end{itemize}

Here, a triple $((u,v),j,b)$ refers an information about the clustering $\m{C}_j$. For each clustering $\m{C}_j$, we have $n^2$ such tuples, hence the length of the stream is $mn^2$.

In the insertion-only streaming framework commonly used in clustering, elements or points of a metric space are revealed sequentially. In consensus clustering, the metric space is the set of clusterings, so each clustering is treated as a single point in that space. Accordingly, following standard practice, we assume the stream delivers each clustering as a contiguous unit. We formalize this assumption by introducing a \emph{contiguity property} in our streaming model.

\noindent \textbf{Contiguity Property}: For every $j\in[m]$, all triples referring to clustering $\m{C}_j$
appear consecutively in the stream.
Formally, if
\[
  (p_\alpha,j_\alpha,b_\alpha),\;
  (p_\beta,j_\beta,b_\beta),\;
  (p_\gamma,j_\gamma,b_\gamma)
\]
are three triples with $\alpha<\beta<\gamma$ and $j_\alpha=j_\gamma=i$,
then $j_\beta=i$ as well.
Equivalently, the stream can be viewed as a concatenation of $m$ \emph{blocks},
one for each clustering:

\begin{gather*}
    \underbrace{(p_{1,1}, j_1, b_{1,1}), \ldots, (p_{1,t_1}, j_1, b_{1,t_1})}_{B_1} \\
    \underbrace{(p_{2,1}, j_2, b_{2,1}), \ldots, (p_{2,t_2}, j_2, b_{2,t_2})}_{B_2} \\
    \vdots \\
    \underbrace{(p_{m,1}, j_m, b_{m,1}), \ldots, (p_{m,t_m}, j_m, b_{m,t_m})}_{B_m}
\end{gather*}

where block $B_i$ contains all $O(n^2)$ triples associated with $\m{C}_{j_i}$ where $j_i \in [m]$.

We will simply call any streaming model that satisfies the contiguity property an \emph{insertion-only stream}. On the other hand, a model that does not satisfy this property -- where triples can arrive in a fully arbitrary order -- will be referred to as a \emph{generalized insertion-only stream}.

In this paper, we provide an algorithm for the $k$-median fair consensus clustering in the insertion-only streaming model. Furthermore, for a special case of $k=1$, i.e., the standard fair consensus clustering problem, our algorithm works even in the generalized insertion-only streaming model.

%% file: fair-consensus-clustering.tex
\section{Fair Consensus Clustering: A New Algorithmic Framework}\label{sec:fair_consensus_clustering} 
In this section, we provide a generic framework for solving the $ 1 $-median fair consensus clustering problem. Let $ V $ be a set of $ n $ colored points, and let $ \inpconclss = \{\clusterings\} $ be an input set of $ m $ clusterings over $ V $. The goal of a fair consensus clustering problem is to find a fair clustering $ \outconcls $ that minimizes the total distance between $ \outconcls $ and all clusterings in $ \inpconclss $, that is, 
$
    \obj{\inpconclss, \outconcls} = \sum_{i=1}^{m} \dist(\concls{i}, \outconcls). \nonumber
$

\begin{theorem}\label{thm:explicit.fair.consensus.clustering}
    Suppose that there is an $ \approxFactorClsFair $-approximation closest fair clustering with running time $ \runtimeClsFair{n} $, then there is a $ (\approxFactorClsFair+1.92) $-approximation algorithm for fair consensus clustering that runs in time $ O(m^{4}n^{2} + m^{3}\runtimeClsFair{n}) $.
\end{theorem}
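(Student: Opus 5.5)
The plan is to build a polynomial-size candidate set of fair clusterings and output the one of minimum objective. The running time $O(m^4n^2 + m^3\runtimeClsFair{n})$ suggests the set is indexed by triples of input clusterings.

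\textbf{Candidates.} For every triple $(\concls{i},\concls{j},\concls{k})$ of inputs, first run a cluster fitting step. Pick the pair $(\concls{i},\concls{j})$ as reference, fix the pairs of $V$ on which they agree, and resolve the disagreeing pairs using $\concls{k}$ as a tie-breaker, in the spirit of the cluster fitting of Chakraborty et al. This produces a clustering $\tripleconcls_{ijk}$ that is cheap to compute in $O(n^2)$ time. Then apply the $\approxFactorClsFair$-approximate closest fair clustering to each $\tripleconcls_{ijk}$, and also to each single input $\concls{i}$, to get fair candidates $\fairtriple_{ijk}$ and $\candidate{\fairconcls}_i$. There are $O(m^3)$ candidates. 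Evaluating each objective costs $O(mn^2)$, for $O(m^4n^2)$ in total, plus $O(m^3\runtimeClsFair{n})$ for the fair subroutine. This matches the stated bound.

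\textbf{Approximation.} Let $\optconcls$ be an optimal fair consensus and $\optconval$ its cost. For any clustering $\tripleconcls$ and its $\approxFactorClsFair$-close fair clustering $\fairtriple$, the triangle inequality gives $\dist(\concls{l},\fairtriple)\le \dist(\concls{l},\tripleconcls)+\approxFactorClsFair\,\dist(\tripleconcls,\optconcls)$, since $\optconcls$ is fair. Summing over $l$,
\[
\obj{\inpconclss,\fairtriple}\le \obj{\inpconclss,\tripleconcls}+\approxFactorClsFair\, m\,\dist(\tripleconcls,\optconcls).
\]
So it suffices to exhibit a $\tripleconcls$ with $\obj{\inpconclss,\tripleconcls}\le a\cdot\optconval$ and $m\,\dist(\tripleconcls,\optconcls)\le b\cdot \optconval$ such that $a+\approxFactorClsFair b\le \approxFactorClsFair+1.92$. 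It is enough to have $b\le 1$ and $a\le 1.92$, or a tradeoff via a convex combination across candidate types.

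\textbf{Case split.} Let $d_l=\dist(\concls{l},\optconcls)$. If the closest input $\concls{i}$ satisfies $d_i\le (1-\delta)\optconval/m$, then the single-input candidate $\candidate{\fairconcls}_i$ gives $b\le 1-\delta$ and $a\le 1+(1-\delta)$ by the triangle inequality. This yields $(2-\delta)+\approxFactorClsFair(1-\delta)$. Otherwise most inputs are at distance about $\optconval/m$ from $\optconcls$, so inputs are nearly ``equidistant.'' Here I would argue, by averaging over random triples, that some cluster-fitted $\tripleconcls_{ijk}$ is much closer to $\optconcls$ than any input. The reason is that on pairs where two of the three inputs agree with $\optconcls$, the fitting recovers $\optconcls$. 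The expected number of wrong pairs is then roughly the count of pairs where at least two of three random inputs err, which is quadratically small when per-pair error rates are small. Fitting also lets $\obj{\inpconclss,\tripleconcls}$ be bounded by $(2-\delta')\optconval$ with $\dist(\tripleconcls,\optconcls)\le \optconval/m$.

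\textbf{Main obstacle.} The hardest step is the averaging argument in the equidistant case. It requires a per-pair accounting, with $x_e$ the fraction of inputs disagreeing with $\optconcls$ on pair $e$, bounding the fitted error by $\sum_e \Pr[\text{majority of 3 wrong}]\approx \sum_e(3x_e^2-2x_e^3)$. It also requires showing that the fitted clustering is a valid partition, meaning transitivity is handled by the fitting procedure. Balancing $\delta$ between the two cases then gives the constant $1.92$. I would first try to optimize the single parameter $\delta$ numerically against the majority bound, and rely on the known cluster-fitting lemma from the prior frameworks to ensure transitivity costs at most a constant factor.
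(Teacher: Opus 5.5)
Your first case is correct and coincides with the paper's bound $\gamma+2-(\gamma+1)\beta$. The second case does not hold.

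\textbf{The gap.} You claim that when every $d_i>(1-\delta)\mathrm{AVG}$ (with $\mathrm{AVG}=\mathrm{OPT}/m$), some fitted $T_{ijk}$ is much closer to $\mathcal{F}^{*}$ than the inputs. This is false in general. The per-pair error rates $x_e$ need not be small, because fairness can force $\mathcal{F}^{*}$ to disagree with most or even all inputs on a pair. For $x_e\ge 1/2$ one has $3x_e^2-2x_e^3\ge x_e$, so majority voting gains nothing.

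A concrete instance: let all $m$ inputs equal one unfair clustering $\mathcal{C}$. Then every $d_i=\mathrm{AVG}$, so your first case never fires. Every triple's majority is $\mathcal{C}$ itself, at distance exactly $\mathrm{AVG}$ from $\mathcal{F}^{*}$. Both of your estimates therefore certify only $(\gamma+2)\mathrm{OPT}$, although the single-input candidate actually costs at most $\gamma\,\mathrm{OPT}$. The reason is that your bounds route every distance through $\mathcal{F}^{*}$, which discards exactly the information that makes this instance easy.

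\textbf{A second, separate problem.} The per-pair majority of three partitions need not be transitive. For $\{\{a,b\},\{c\}\}$, $\{\{a\},\{b,c\}\}$, $\{\{a,b,c\}\}$ it joins $ab$ and $bc$ but splits $ac$. So your $O(n^2)$ tie-breaking step does not output a clustering, and this cannot be ``shown''. The paper instead runs an $\eta$-approximate fair correlation clustering on the majority graph. It realises this as a $\rho$-approximate correlation clustering ($\rho=1.4371$) followed by a $\gamma$-close fair clustering, which is your fit-then-fair pipeline, giving $\eta+1=(\gamma+1)(\rho+1)$. Since $\mathcal{F}^{*}$ has correlation cost equal to the number $|B|$ of pairs where the majority errs, the output lies within $(\eta+1)|B|$ of $\mathcal{F}^{*}$.

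\textbf{The missing idea.} Let $I_i$ be the set of pairs on which $\mathcal{C}_i$ and $\mathcal{F}^{*}$ disagree. The exact identity $\mathrm{dist}(\mathcal{C}_i,\mathcal{C}_j)=|I_i|+|I_j|-2|I_i\cap I_j|$ gives $\mathrm{Obj}(\mathcal{I},\mathcal{F}_i)\le \mathrm{OPT}+(\gamma+1)m|I_i|-2\sum_j|I_i\cap I_j|$. Heavy overlaps, which is precisely when majority voting fails, make single-input candidates cheap; light overlaps make cluster fitting accurate. The paper turns this into a dichotomy:
\begin{itemize}
\item Some triple has all pairwise overlaps at most $\alpha\,\mathrm{AVG}$. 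Then $|B|\le 3\alpha\,\mathrm{AVG}$ and the fitted candidate costs at most $(1+3(\eta+1)\alpha)\mathrm{OPT}$.
\item Every triple contains a pair with overlap above $\alpha\,\mathrm{AVG}$. Let $\mathcal{C}_1$ be the closest input. Either $\mathcal{C}_1$ overlaps heavily with at least $m/c$ inputs, giving cost at most $(\gamma+2-2\alpha/c)\mathrm{OPT}$. Or the first $\mathcal{C}_\ell$ overlapping lightly with $\mathcal{C}_1$ has $|I_\ell|\le(1+\beta/(c-1))\mathrm{AVG}$; since no triple $\{\mathcal{C}_1,\mathcal{C}_\ell,\mathcal{C}_i\}$ has all overlaps light, $\mathcal{C}_\ell$ overlaps heavily with at least $(1-1/c)m$ inputs.
\end{itemize}
Taking $c=3$, $\alpha=3/(10(\rho+1))$ and $\beta=2/(5(\gamma+1)(\rho+1))$ yields $\gamma+1.92$.

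\textbf{How to complete your own approach.} Your per-pair bookkeeping can be finished along the same lines. Since $\mathbb{E}_{i,j}|I_i\cap I_j|=\sum_e x_e^2$, a random triple has expected majority error at most $3\sum_e x_e^2$. Averaging the identity over $i$ shows the average single-input candidate costs at most $(\gamma+2)\mathrm{OPT}-2m\sum_e x_e^2$. Balancing these two bounds already suffices for $\gamma+1.92$, but the second bound is exactly the ingredient your plan lacks.
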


\cref{thm:explicit.fair.consensus.clustering} is implied from our following key technical result, which is a framework leveraging on algorithms for closest fair clustering and fair correlation clustering.

\begin{theorem}\label{thm:fair.consensus.clustering}
Fix parameters $\approxFactorClsFair > 0$, $\approxFactorFairCorCls > 0$, $\alpha>0$, $0<\beta<1$, and $c>1$. 
Suppose there exists an algorithm that, given a clustering over $n$ points, computes a $\approxFactorClsFair$--close fair clustering in time $\runtimeClsFair{n}$, and suppose there exists a $\approxFactorFairCorCls$--approximation algorithm for fair correlation clustering on a graph with $n$ vertices that runs in time $\runtimeFairCorCls{n}$.

Then, for any collection of $m$ clusterings over $n$ points, there exists an algorithm running in time
\[
O\bigl(m^{4}n^{2} + m\,\runtimeClsFair{n} + m^{3}\runtimeFairCorCls{n}\bigr)
\]
that outputs an $(r,1)$--approximate fair consensus clustering, where
\begin{align*}
r = \max \biggl\{
   &1 + 3(\approxFactorFairCorCls+1)\alpha,\\[0.5ex]
   &\approxFactorClsFair + 2 - (\approxFactorClsFair+1)\beta,\\[0.5ex]
   &\approxFactorClsFair + 2 - \frac{2\alpha}{c},\\[0.5ex]
   &\approxFactorClsFair + 2 + (\approxFactorClsFair+1)\frac{\beta}{c-1}
        - 2\Bigl(1 - \frac{1}{c}\Bigr)\alpha
\biggr\}.
\end{align*}
\end{theorem}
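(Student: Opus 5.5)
We follow the two-phase template of the cluster-fitting framework of~\cite{chakraborty2023clustering, chakraborty2025improved}. The algorithm builds a candidate set $\candidateSet$ in two ways and returns the member of minimum objective; evaluating all candidates costs $O(m^{4}n^{2})$, which accounts for that term in the running time.

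The first family of candidates consists of the $\approxFactorClsFair$-close fair clusterings $\candidate{\concls{i}}$ of each input $\concls{i}$, computed with $m$ calls to the closest-fair subroutine. The second family comes from cluster fitting. For every triple $\tripleconcls=(\concls{i},\concls{j},\concls{l})$ of inputs, we build a weighted correlation-clustering instance on $V$. Each pair $(u,v)$ is labelled ``together'' or ``apart'' by majority over the triple. Its weight is $|w^+_{uv}-w^-_{uv}|$, computed from the inputs, where $w^\pm_{uv}$ count the input clusterings that put $u,v$ together or apart. We then run the $\approxFactorFairCorCls$-approximate fair correlation clustering on this instance, which gives $m^{3}$ calls.

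Let $\optconcls$ be an optimal fair consensus, $\optconval=\obj{\inpconclss,\optconcls}$, and $\avgconval=\frac1m\sum_i\dist(\concls{i},\optconcls)$. The analysis splits into cases via the parameters.

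\textbf{Case 1 (some input is very close to $\optconcls$).} Suppose some $\concls{i}$ has $\dist(\concls{i},\optconcls)\le\beta\,\avgconval$, or more generally the inputs are not ``spread''. Since $\optconcls$ is fair, $\dist(\concls{i},\candidate{\concls{i}})\le\approxFactorClsFair\dist(\concls{i},\optconcls)$. The triangle inequality then gives
\[
\obj{\inpconclss,\candidate{\concls{i}}}\le \optconval+m\,\dist(\candidate{\concls{i}},\optconcls)+\dots
\]
Summing $\obj{\inpconclss,\candidate{\concls{i}}}\le\sum_j\dist(\concls{j},\concls{i})+m(\approxFactorClsFair)\dist(\concls{i},\optconcls)$ and averaging over $i$ in the usual way gives a bound of the form $(\approxFactorClsFair+2)\optconval$ minus a savings term. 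The savings come from how the distances $\dist(\concls{i},\optconcls)$ are distributed. When a $\beta$-fraction of the mass is near $\optconcls$, we get $\approxFactorClsFair+2-(\approxFactorClsFair+1)\beta$. The other regime is measured through $\alpha$: the number of pairs on which the inputs disagree with $\optconcls$ in a correlated way. Choosing the best $i$ rather than averaging, with a threshold $c$ separating ``close'' from ``far'' inputs, should yield the terms $\approxFactorClsFair+2-2\alpha/c$ and $\approxFactorClsFair+2+(\approxFactorClsFair+1)\frac{\beta}{c-1}-2(1-\frac1c)\alpha$. The $-\frac{2\alpha}{c}$ savings come from the pairs where $\optconcls$ agrees with the majority but a chosen input does not.

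\textbf{Case 2 (all inputs are far and agree little).} Here $\alpha$ is small, and we use the fitting candidates. Pick the triple $\tripleconcls$ maximizing agreement with $\optconcls$; its existence follows by averaging over random triples. Let $\unalignedSet{\tripleconcls}$ be the set of pairs where the triple majority disagrees with $\optconcls$. We show that $\sum_{(u,v)\in\unalignedSet{\tripleconcls}}|w^+_{uv}-w^-_{uv}|\le 3\alpha\,\optconval$ for a suitable triple. The objective of any clustering equals a constant plus its correlation cost on this instance. Hence $\optconcls$ has correlation cost at most $3\alpha\optconval$ beyond the baseline, so the $\approxFactorFairCorCls$-approximate output $\outconcls$ has cost at most $\approxFactorFairCorCls$ times that. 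Translating back gives $\obj{\inpconclss,\outconcls}\le(1+3(\approxFactorFairCorCls+1)\alpha)\optconval$.

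Taking the best candidate over all cases yields the maximum of the four expressions. The main obstacle is the precise definition of $\alpha$ and $\beta$ as quantities of the instance. They must simultaneously make Case 2 (the $3\alpha$ bound on misaligned weight for some triple) and the refined Case 1 savings hold. I would first try defining $\alpha$ as the fraction of $\optconval$ contributed by pairs where the global majority disagrees with $\optconcls$. I would then prove the triple bound by a probabilistic argument over a uniformly random triple, and derive the $c$-dependent terms via Markov-type splitting of inputs at distance $c\,\avgconval$.
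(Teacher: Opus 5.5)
Your proposal has a genuine gap at its core. It never sets up an exhaustive case split, and it treats $\alpha,\beta,c$ as instance quantities still to be defined. In the statement they are arbitrary fixed thresholds, and the bound must hold for every choice.

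The missing idea is to let $I_i$ be the set of pairs on which $\mathcal{C}_i$ and the optimal fair clustering $\mathcal{F}^*$ disagree. Then $|I_i|=\dist(\mathcal{C}_i,\mathcal{F}^*)$ and $\dist(\mathcal{C}_i,\mathcal{C}_j)=|I_i|+|I_j|-2|I_i\cap I_j|$. You then split on whether some triple has $|I_r\cap I_s|\le\alpha\,\mathrm{AVG}$ for all three of its pairs.
\begin{itemize}
\item If such a triple exists, cluster fitting gives the first term.
\item If not, every triple contains a pair with overlap above $\alpha\,\mathrm{AVG}$, and this drives the other three terms.
\item If some $|I_i|\le(1-\beta)\mathrm{AVG}$, routing through $\mathcal{F}^*$ gives the second term. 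The threshold must be $(1-\beta)\mathrm{AVG}$, not your $\beta\,\mathrm{AVG}$, because the complementary assumption $|I_i|>(1-\beta)\mathrm{AVG}$ for all $i$ is needed later.
\item Otherwise sort $|I_1|\le\dots\le|I_m|$ and set $S_i=\{j : |I_i\cap I_j|>\alpha\,\mathrm{AVG}\}$. If $|S_1|\ge m/c$, routing through $\mathcal{C}_1$ and using the $-2|I_1\cap I_j|$ terms for $j\in S_1$ gives the third term.
\item If $|S_1|<m/c$, let $\ell$ be the first index with $|I_1\cap I_\ell|\le\alpha\,\mathrm{AVG}$, so $\ell-1<m/c$. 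Counting with $|I_i|>(1-\beta)\mathrm{AVG}$ gives $|I_\ell|\le(1+\beta/(c-1))\mathrm{AVG}$. The no-good-triple condition on $\{\mathcal{C}_1,\mathcal{C}_\ell,\mathcal{C}_j\}$ forces $j\in S_1\cup S_\ell$ for every $j$. Hence $|S_\ell|\ge(1-1/c)m$, and the fourth term follows.
\end{itemize}
Your ``should yield'' for the $c$-dependent terms is exactly this step. Without the triple dichotomy, nothing makes $|S_\ell|$ large.

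Your cluster-fitting step also fails as written, for three reasons.
\begin{itemize}
\item The hypothesis gives an $\eta$-approximation for fair correlation clustering on an unweighted complete $\pm$ graph, not on a weighted instance.
\item You take labels from the triple majority but weights $|w^+_{uv}-w^-_{uv}|$ from all inputs. Then the identity ``objective $=$ constant $+$ correlation cost'' fails on every pair where the triple majority differs from the global majority. If you switch to global-majority labels, the triple plays no role and you are simply assuming a weighted fair correlation clustering oracle, which is essentially the problem itself.
\item No averaging over triples can give misaligned weight at most $3\alpha\,\mathrm{OPT}$ for an arbitrary prescribed $\alpha$. With all inputs equal, the misaligned weight is all of $\mathrm{OPT}$. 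Such a bound can only hold conditionally.
\end{itemize}
The paper's argument is simpler: build the unweighted majority graph $G$ of the triple. Under the first branch, $G$ disagrees with $\mathcal{F}^*$ exactly on $B=\bigcup_{r\ne s}(I_r\cap I_s)$. So $\mathcal{F}^*$ has correlation cost $|B|\le3\alpha\,\mathrm{AVG}$, and the output $\widetilde{T}$ has cost at most $\eta|B|$. Hence $\dist(\widetilde{T},\mathcal{F}^*)\le(1+\eta)|B|$. Summing $\dist(\mathcal{C}_i,\widetilde{T})\le\dist(\mathcal{C}_i,\mathcal{F}^*)+\dist(\mathcal{F}^*,\widetilde{T})$ over $i$ gives $(1+3(\eta+1)\alpha)\mathrm{OPT}$.
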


We remind the reader that in a fair correlation clustering problem, we are given as input a complete graph $ G = (V, E^{+}(G)\cup E^{-}(G)) $, where any two vertices are connected by an edge that is either in $ E^{+}(G) $ (labeled ``$+$'') or in $ E^{-}(G) $ (labeled ``$-$''). The goal is to find a fair clustering $ \fairconcls $ minimizing the correlation cost, $ \costcor{\fairconcls} $, that is, the number of pairs of vertices connected by a ``$+$'' edge but placed in different clusters, and the number of pairs of vertices connected by a ``$-$'' edge but placed in the same cluster.

\textbf{Description of the algorithm.} We next present the algorithm that, given a closest fair clustering algorithm for any definition of fairness, obtains a strictly better approximation guarantee for the fair consensus clustering problem. At a high level, our algorithm (~\cref{alg.fair.consensus.clustering}) first forms a candidate set consisting of fair clusterings, and selects from this set a clustering with minimum objective value, among the candidates. We establish this candidate set by iterating through each input clustering, and add to this set a fair clustering that is close to this input. A $ \approxFactorClsFair $-close fair clustering to each input clustering $C_i \in \m{C}$ can be found by employing a $ \approxFactorClsFair $-approximation algorithm for closest fair clustering. 

Additionally, we also include in this set a fair clustering constructed as follows: for every triple of input clustering $ \{ \concls{i}, \concls{j}, \concls{k} \} $, we produce a fair clustering that maximizes the number of point pairs whose clustering relation (whether they are in the same cluster or in different clusters) agrees with the majority of the three clustering $ \{ \concls{i}, \concls{j}, \concls{k} \} $. 

Let us elaborate. A fair clustering produced from such a triple is obtained by applying the procedure $ \clsfitting $ (\cref{alg.cluster.fitting}). This procedure takes as input a set of three clusterings $ \tripleconcls = \{ \concls{i}, \concls{j}, \concls{k}  \} $, and constructs a graph $ G $ with vertex set $ V $ and edge set $ E(G) = E^{+}(G) \cup E^{-}(G) $, where
\begin{align}
    E^{+}(G) &= \left\{(a,b)\mid a,b\in V,\, 
        a \text{ and } b \text{ are together} \right. \nonumber\\
    &\quad \left. \text{in at least two} \text{ clusterings}\right\}, \text{ and} \nonumber\\
    E^{-}(G) &= \left\{(a,b)\mid a,b\in V,\, 
        a \text{ and } b \text{ are separated} \right. \nonumber\\
    &\quad \left. \text{in at least two} \text{ clusterings}\right\}.\nonumber
\end{align}
Procedure $ \clsfitting{T} $ computes a fair clustering $ \fairtriple $ by running a fair correlation clustering algorithm on $ G $.

\begin{algorithm}[htbp]
    \SetKwInOut{KwIn}{Input}
    \SetKwInOut{KwOut}{Output}
    \KwIn{Clusterings $ T = \{\concls{i}, \concls{j}, \concls{k}\} $}
    \KwOut{A fair clustering $ \fairtriple $}
        $ V(G)\gets V $

        $ E^{+}(G) \gets \{(a,b)| a,b \in V,\ a \text{ and } b \text{ are together in at least } 2 \text{ clusterings}\} $

        $ E^{-}(G) \gets \{(a,b)| a,b \in V,\ a \text{ and } b \text{ are separated in at least } 2 \text{ clusterings}\} $

        $ \fairtriple\gets $ a $ \approxFactorFairCorCls $-approximation fair correlation clustering of $ G = \left(V(G), E^{+}(G)\cup E^{-}(G)\right) $.

        \KwRet{$ \fairtriple $}
        \caption{$ \clsfitting{T} $}
    \label{alg.cluster.fitting}
\end{algorithm}

In summary,~\cref{alg.fair.consensus.clustering} computes a candidate set $ \candidateSet $, defined as
\begin{align}
    \candidateSet = &\{ \fairconcls{i}| \fairconcls{i} \text{ is a $ \approxFactorClsFair $-close fair clustering to }\concls{i}\in \inpconclss \} \nonumber \\
    \>\>\> \cup &\{ \fairtriple| \fairtriple = \clsfitting{ \tripleconcls } \text{, for each } \tripleconcls\subseteq \inpconclss \text{ such that } \card{\tripleconcls} = 3 \}. \label{eq.def.candidateSet}
\end{align}

The output of~\cref{alg.fair.consensus.clustering} is a clustering in this candidate set minimizing the objective value $ \obj{\inpconclss} $.

\begin{algorithm}[htbp]
    \SetKwInOut{KwIn}{Input}
    \SetKwInOut{KwOut}{Output}
    \KwIn{A list of clusterings $ \inpconclss = \clusterings $ over $ V $}
    \KwOut{A fair clustering $ \outconcls $}
    Initialize an empty set $ \candidateSet $\;
    \For{each $ \concls{i}\in \inpconclss  $ \nllabel{alg.line.start.loop.best.from.input}}{
        $ \fairconcls{i}\gets $ a $ \approxFactorClsFair $-close fair clustering to $ \concls{i} $ \;
        $ \candidateSet \gets \candidateSet\cup \{\fairconcls{i}\} $ \nllabel{alg.line.end.loop.best.from.input} \;
    }

    \For{each triple $ \tripleconcls = \{\concls{i}, \concls{j}, \concls{k}\} $ of $ \inpconclss $}{ \label{alg.line.start.cls.fitting}
        $ \fairtriple \gets \clsfitting{ \tripleconcls } $\;
        $ \candidateSet \gets \candidateSet\cup \{\fairtriple\} $ \nllabel{alg.line.end.cls.fitting}
    }
    \KwRet{$ \argmin_{\outconcls\in \candidateSet}\obj{\inpconclss,\outconcls} $}
    \caption{A generic framework for fair consensus clustering}
    \label{alg.fair.consensus.clustering}
\end{algorithm}

We provide the analysis in the $\loc$.

%% file: A-faster-fair-consensus.tex
\section{Streaming Algorithm for 1--median fair consensus clustering}\label{sec:streaming_one_median} 

In this section, we present an algorithm for the $1$-median fair consensus clustering
problem in the streaming setting.
Recall that, in $1$-median fair consensus clustering, we are given a set of input
clusterings $\m{I} = \{ \m{C}_1, \m{C}_2, \ldots, \m{C}_m \}$ defined over a common ground set $V$ (with $|V| = n$).
The goal is to output a fair clustering $\mathcal{F}$ that minimizes the objective
$
  \sum_{\m{C}_i \in \m{I}} \mathrm{dist}(\m{C}_i, \mathcal{F}),
$
where $\mathrm{dist}(\cdot,\cdot)$ denotes the distance between a candidate clustering and an
input clustering.

\begin{theorem}\label{thm:explicit.fair.consensus.clustering.randomized}
    Suppose that here is an $ \approxFactorClsFair $-approximation closest fair clustering with running time $ \runtimeClsFair{n} $, then there is a $ (\approxFactorClsFair+1.995) $-approximation algorithm for fair consensus clustering in the generalized insertion-only streaming model, that uses $ O(n\log m) $ space, and has a query time of $ O(n^{2}\log^{4}m + \runtimeClsFair{n}\log^{3}m) $.
\end{theorem}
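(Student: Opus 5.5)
Our plan is to simulate \cref{alg.fair.consensus.clustering} on two small uniform samples of the input, chosen before the stream arrives. We choose a candidate sample $S_1\subseteq[m]$ of size $s_1=\Theta(\log m)$ and an evaluation sample $S_2\subseteq[m]$ of size $s_2=\Theta(\log m)$, both as sets of indices. We pick them first because in the generalized insertion-only model the triples of different clusterings interleave arbitrarily. Each incoming triple $((u,v),j,b)$ with $j\in S_1\cup S_2$ is recorded and all others are discarded. A clustering over $V$ can be stored in $O(n)$ words as a label vector, so storing the $O(\log m)$ sampled clusterings costs $O(n\log m)$ words. The label vectors can be built on the fly in any arrival order, for instance with a union--find over the ``together'' pairs. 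At query time we build the candidate set $\candidateSet_{S_1}$ of \eqref{eq.def.candidateSet} restricted to $S_1$. This consists of $|S_1|$ calls to the $\approxFactorClsFair$-close fair subroutine and $O(\log^3 m)$ calls to $\clsfitting$ on triples from $S_1$. We then output the candidate minimizing $\sum_{i\in S_2}\dist(\concls{i},\cdot)$. Evaluating $O(\log^3 m)$ candidates against $O(\log m)$ clusterings in $O(n^2)$ time each gives the claimed $O(n^2\log^4 m)$ term. The correlation-clustering step is instantiated with the same fair subroutine as in the derivation of \cref{thm:explicit.fair.consensus.clustering}, which gives the $\runtimeClsFair{n}\log^3 m$ term.

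The correctness argument has two parts.

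\textbf{Part (a): the candidate set.} With high probability $\candidateSet_{S_1}$ contains a fair clustering of objective at most $(\approxFactorClsFair+1.92+\epsilon)\optconval$ on the full input $\inpconclss$. To show this we revisit the analysis behind \cref{thm:fair.consensus.clustering}. That analysis is a case split on how the inputs sit around $\optconcls$. Either some single input $\concls{i}$ is close to $\optconcls$, meaning roughly within the average distance $\avgconval=\optconval/m$, in which case its $\approxFactorClsFair$-close fair clustering is good. Or there is a triple of inputs whose majority graph is near $\optconcls$, in which case $\clsfitting$ is good. We would check that each case only needs a witness from a constant-fraction subset of the inputs. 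For example, by Markov at least a constant fraction of inputs lie within $c\cdot\avgconval$ of $\optconcls$, and the triple bound is an averaging argument over random triples. So sampling $\Theta(\log m)$ indices hits a good witness (or a good triple) with probability $1-1/\poly(m)$. The small loss from requiring a constant fraction rather than the single best witness is absorbed by re-optimizing the parameters $\alpha,\beta,c$ of \cref{thm:fair.consensus.clustering}, moving the constant from $1.92$ to $1.995$.

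\textbf{Part (b): selection.} Picking the best candidate using $S_2$ must lose only a small factor. This is the step we expect to be the main obstacle, because $\dist$ values are unbounded relative to $\optconval$, so a plain Chernoff bound on $\sum_{i\in S_2}\dist(\concls{i},\mathcal{F})$ does not concentrate. We would first use the triangle inequality to write
\[
\dist(\concls{i},\mathcal{F})\in\bigl[\dist(\mathcal{F},\optconcls)-\dist(\concls{i},\optconcls),\ \dist(\mathcal{F},\optconcls)+\dist(\concls{i},\optconcls)\bigr].
\]
This reduces the comparison between two candidates $\mathcal{F},\mathcal{F}'$ to their distances from $\optconcls$, plus a noise term $\sum_{i\in S_2}\dist(\concls{i},\optconcls)$ that is shared by both candidates. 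That shared term has mean $s_2\,\avgconval$; by Markov it is at most a constant times its mean with constant probability, and repeating/medianing (or a union bound over the $O(\log^3 m)$ candidates with a slightly larger sample) boosts this to $1-1/\poly(m)$. Consequently, any candidate that looks better on $S_2$ than the good candidate from part (a) has true objective at most a constant factor worse, and tuning the sample size and the slack yields the overall $\approxFactorClsFair+1.995$ guarantee. If this crude coupling is too lossy, we would fall back on a robust mean (median-of-means over $\Theta(\log m)$ groups) of the centered quantities $\dist(\concls{i},\mathcal{F})-\dist(\concls{i},\optconcls)$, which are bounded in absolute value by $\dist(\mathcal{F},\optconcls)$.
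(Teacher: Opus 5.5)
Your algorithm and its space and query-time accounting match the paper's streaming algorithm, but Part (a) is missing the key idea. The offline analysis is not a collection of constant-fraction witnesses. Its second branch (\cref{lem:first.closest.fair.clustering.is.good,lem:ell.closest.fair.clustering.is.good}) uses two specific inputs: $\concls{1}$, the input closest to $\optconcls$, and $\concls{\ell}$, the first input with $\card{\unalignedSet{1}\cap\unalignedSet{\ell}}\le\alpha\avgconval$. It gets $\card{S_1}+\card{S_\ell}\ge m$ only from the \emph{global} hypothesis \eqref{eq.no.bad.set.condition} that no triple at all has pairwise small overlaps. When good triples exist but are rare, a $\Theta(\log m)$ sample is unlikely to contain one. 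The single-input branch is then unavailable too, because its hypothesis fails and $\concls{1},\concls{\ell}$ are unlikely to be sampled. So ``checking'' the offline cases does not go through.

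Your proposed tools do not repair this. Markov gives a constant fraction of inputs with $\card{\unalignedSet{i}}\le c\,\avgconval$ only for $c>1$. Such a witness yields $(1+(\approxFactorClsFair+1)c)\optconval\ge(\approxFactorClsFair+2)\optconval$, so it never beats $\approxFactorClsFair+2$; doing so requires the overlap terms $-2\card{\unalignedSet{i}\cap\unalignedSet{h}}$. Averaging over random triples says nothing in exactly the rare-good-triple regime.

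What you need is a dichotomy that survives sampling, built by sequential conditioning as in the paper:
\begin{enumerate}
\item If at least $m/g$ inputs have $\card{\unalignedSet{i}}\le(1-\beta)\avgconval$, one of them is sampled.
\item Otherwise the sample contains some $\concls{h}$ of rank in $(m/g,m/s]$, and \cref{lem:extended.S1.small.Sl.large} bounds $\card{\unalignedSet{h}}$.
\item Either $\card{S_h}\ge m/c$, so $\fairconcls{h}$ is good, or $\card{\neigh{h}}\ge m/s$ and some sampled $\concls{k}$ lies in $\neigh{h}$.
\item In the latter case, if $\card{\farNeigh{h,k}}<m/s$, then $\card{S_k}\ge m(1-\frac1c-\frac1s)$. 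The rank bound $k\le h+\frac mc+\frac ms$ controls $\card{\unalignedSet{k}}$, so $\fairconcls{k}$ is good.
\item If instead $\card{\farNeigh{h,k}}\ge m/s$, a third sampled input lands in $\farNeigh{h,k}$. This gives a pairwise low-overlap triple for $\clsfitting$.
\end{enumerate}
The extra windows $s,g$ are what move the constant from $1.92$ to $1.995$. Without such an argument, Part (a) only restates the claim.

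\textbf{Part (b).} Your primary coupling through $\optconcls$ is too lossy, and no tuning fixes it. Let $\mathcal{G}$ be the good candidate and $N=\sum_{i\in S_2}\dist(\concls{i},\optconcls)$. Then you only obtain $\dist(\mathcal{F},\optconcls)\le\dist(\mathcal{G},\optconcls)+2N/s_2$. Since $N/s_2$ concentrates around $\avgconval$, not $0$, this is an additive $\Theta(\optconval)$ loss for every sample size.

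Your fallback is the right idea, but median-of-means is unnecessary, and the algorithm could not compute it anyway without knowing $\optconcls$. Compare candidates directly:
\[
\card{\dist(\concls{i},\mathcal{F})-\dist(\concls{i},\mathcal{G})}\le\dist(\mathcal{F},\mathcal{G})\le\bigl(\obj{\inpconclss,\mathcal{F}}+\obj{\inpconclss,\mathcal{G}}\bigr)/m .
\]
Hoeffding's inequality with $O(\epsilon^{-2}\log m)$ samples, independent of $S_1$, plus a union bound over the $O(\log^3 m)$ candidates, shows that the plain sample-sum minimizer is within a factor $1+O(\epsilon)$ of the best candidate. This is the Indyk sampling result the paper invokes.
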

The above theorem follows from fixing parameters in the following lemma; the setting of parameters is provided in the $\loc$.

\begin{lemma}\label{lem:fair.consensus.clustering.randomized}
    Suppose there is an algorithm with running time $ \runtimeClsFair{n} $ that, given a clustering over $n$ points, computes a $\approxFactorClsFair$--close fair clustering, and let there be a $ \approxFactorFairCorCls $--approximation fair correlation clustering algorithm with running time $ \runtimeFairCorCls{n} $ on a graph with $n$ vertices.
Let $\alpha>0$, $1>\beta>0$, $c > 1$, $ s>1 $ and $g > s$ be fixed parameters such that $ sc-s -2c > 0 $.
Then, there exists an algorithm that, for any set of $m$ clusterings over $n$ points and $\epsilon \in (0,1)$, with probability at least $ 1 - \frac{1}{m} $, outputs a $ \left(\frac{1+\errCoreset}{1-\errCoreset}r, 1\right) $-approximate fair consensus clustering, where
    { \small
    \begin{align}
        r = \max \Biggl\{ &(\approxFactorClsFair + 2 - (\approxFactorClsFair+1)\beta), \nonumber \\
                        &\left( \approxFactorClsFair + 2 + \left(\approxFactorClsFair+1\right)\dfrac{\beta(g-s)+s}{g(s-1)} - \dfrac{2 \alpha}{c}\right), \nonumber \\
                        &\left( \approxFactorClsFair + 2 + (\approxFactorClsFair + 1 )\dfrac{\beta(g(2c+s)-sc)+sc}{g(cs-s-2c)} - 2\left( 1- \dfrac{1}{s} - \dfrac{1}{c}  \right)\alpha \right), \nonumber\\
                        &\left( 1 + 3(\approxFactorFairCorCls+1 )\alpha \right) \Biggr\} \nonumber.
    \end{align} }
    Moreover, the algorithm runs in time
    \begin{align}
     O( n^{2}\log^{4} m + \runtimeClsFair{n} \log m + \runtimeFairCorCls{n}\log^{3}m ) 
\end{align}
and uses $ O(n\log m) $ space.
\end{lemma}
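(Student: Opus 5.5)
We would prove \cref{lem:fair.consensus.clustering.randomized} by running a sampled version of \cref{alg.fair.consensus.clustering}. The algorithm draws a uniform sample $S\subseteq\inpconclss$ of $O(\log m)$ clusterings, to be used as candidate generators. It draws an independent uniform sample $Q$ of $O(\epsilon^{-2}\log m)$ clusterings, to be used as an estimator of $\obj{\inpconclss,\cdot}$. Both samples can be maintained in $O(n\log m)$ space even in the generalized insertion-only model. The reason is that sampling can be done on indices $j\in[m]$ in advance (or by reservoir-style hashing of indices), and each sampled clustering is stored as a label vector of size $O(n)$. The vector is reconstructed from the pair triples $((u,v),j,0)$ by union-find, which tolerates arbitrary order.

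At query time we form the candidate set $\candidateSet_S$. It contains a $\approxFactorClsFair$-close fair clustering for each $\concls{}\in S$ (this accounts for the $\runtimeClsFair{n}\log m$ term) and $\clsfitting{T}$ for every triple $T\subseteq S$ (the $\runtimeFairCorCls{n}\log^3 m$ term). We return the candidate minimizing $\obj{Q,\cdot}$. Evaluating every candidate against $Q$ costs $O(\log^3 m\cdot\log m\cdot n^2)$, which gives the $n^2\log^4 m$ term.

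The analysis has two parts.

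\textbf{Estimation.} For a fixed candidate $\outconcls$, the value $\frac{m}{|Q|}\obj{Q,\outconcls}$ estimates $\obj{\inpconclss,\outconcls}$. Uniform sampling of raw distances does not concentrate multiplicatively, since one input can dominate. We therefore use the triangle inequality: $\dist(\concls{i},\outconcls)$ lies in $\dist(\concls{i},\outconcls')\pm\dist(\outconcls,\outconcls')$ for a fixed reference candidate. Alternatively, we use a bound of the form $\dist(\concls{i},\outconcls)\le \dist(\concls{i},\optconcls)+\dist(\optconcls,\outconcls)$, where $\dist(\optconcls,\outconcls)\le 2\,\mathrm{AVG}$-type quantities; this gives bounded relative variance. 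Chernoff plus a union bound over the $O(\log^3 m)$ candidates then yields a $(1\pm\epsilon)$ estimate for all of them simultaneously with probability $1-1/(2m)$. Hence the selected candidate is within $\frac{1+\epsilon}{1-\epsilon}$ of the best candidate in $\candidateSet_S$.

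\textbf{Quality of $\candidateSet_S$.} We mirror the case analysis behind \cref{thm:fair.consensus.clustering}, which we take from the appendix, replacing statements of the form ``some input $\concls{i}$ satisfies $\dist(\concls{i},\optconcls)\le$ (average)'' with sampled versions. Let $d_i=\dist(\concls{i},\optconcls)$ and $\mathrm{AVG}=\optconval/m$. Call $\concls{i}$ \emph{$s$-good} if $d_i\le s\cdot\mathrm{AVG}$; by Markov, at least a $(1-1/s)$ fraction of inputs are $s$-good. The parameter $g$ is used similarly for a finer threshold: a $(1-1/g)$-type fraction of inputs are close, so that with $O(\log m)$ samples we hit such inputs with probability $1-1/(2m)$.

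The offline cases translate as follows.
\begin{enumerate}[label=(\roman*)]
\item If many inputs are within $\beta\,\mathrm{AVG}$-type distance, then picking the closest fair clustering of a sampled one gives the bound $\approxFactorClsFair+2-(\approxFactorClsFair+1)\beta$.
\item The offline cases using thresholds $\alpha$ and $c$ become the second and third expressions. Restricting to a sampled, $s$-good subset changes the averages: the sampled triple's constituents can only be guaranteed distance $\le s\,\mathrm{AVG}$ rather than the true minimum. This produces the correction terms $\frac{\beta(g-s)+s}{g(s-1)}$ and $\frac{\beta(g(2c+s)-sc)+sc}{g(cs-s-2c)}$, and the factor $1-\frac1s-\frac1c$. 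The condition $sc-s-2c>0$ is exactly what keeps the last denominator positive.
\item When three sampled inputs are all close to $\optconcls$, \clsfitting\ on that triple, with the $\approxFactorFairCorCls$-approximation, gives $1+3(\approxFactorFairCorCls+1)\alpha$ as in the offline proof.
\end{enumerate}

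\textbf{Main obstacle.} The main obstacle is step (ii): redoing the offline charging argument when only the sampled inputs are available. One must verify that, with high probability, the sample contains inputs falling into the right distance buckets with sufficient frequency. The key fact is that a constant fraction of inputs land in each needed bucket, so $O(\log m)$ samples hit one with failure probability $1/\mathrm{poly}(m)$. One must then track the degradation in the averages to recover exactly the stated expressions. A union bound over the two failure events gives overall success probability $1-1/m$.
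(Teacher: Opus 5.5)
\textbf{There is a genuine gap.} Your step (ii) is where the whole lemma lives, and the mechanism you sketch for it cannot give the stated bounds. Suppose all you know about a sampled $\concls{h}$ is a Markov-type bound $\card{\unalignedSet{h}}\le s\avgconval$. Then the charging $\obj{\inpconclss,\fairconcls{h}}\le\sum_i\bigl(\card{\unalignedSet{i}}+(\approxFactorClsFair+1)\card{\unalignedSet{h}}-2\card{\unalignedSet{i}\cap\unalignedSet{h}}\bigr)$ has leading term $1+(\approxFactorClsFair+1)s$, not $\approxFactorClsFair+2$ plus a small correction.

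In the paper, $s$ and $g$ are \emph{rank} thresholds, not distance thresholds. Sort $\card{\unalignedSet{1}}\le\dots\le\card{\unalignedSet{m}}$.
\begin{itemize}
\item If at least $m/g$ inputs have $\card{\unalignedSet{i}}\le(1-\beta)\avgconval$, the $O(g\log m)$ samples hit one, and this gives the first term.
\item Otherwise, for every rank $m/g<h\le m/s$, writing $m\avgconval=\sum_i\card{\unalignedSet{i}}$ forces $\card{\unalignedSet{h}}\le\bigl(1+\frac{\beta(g-s)+s}{g(s-1)}\bigr)\avgconval$. Since $g>s$, this rank window is a constant fraction of the input, so it is sampled.
\end{itemize}
That averaging bound is the source of the correction terms.

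Your (iii) also has the wrong hypothesis. $\clsfitting$ needs pairwise small overlaps $\card{\unalignedSet{r}\cap\unalignedSet{q}}\le\alpha\avgconval$. The reason is that the majority graph disagrees with $\optconcls$ only on pairs lying in two of the three unaligned sets. Inputs that are merely close to $\optconcls$ are already handled by their closest fair clusterings.

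The deeper missing idea is how to replace the offline hypothesis that \emph{no} triple of inputs has pairwise overlaps at most $\alpha\avgconval$. You cannot invoke it, because only triples inside the sample are available. The paper replaces it by a chained frequency dichotomy. Let $S_h$ be the set of inputs overlapping $\unalignedSet{h}$ in more than $\alpha\avgconval$ pairs.
\begin{itemize}
\item If $\card{S_h}\ge m/c$, the closest fair clustering of $\concls{h}$ gives the second term.
\item Otherwise, at least $m/s$ inputs with rank in $(h,h+m/c+m/s]$ overlap $\unalignedSet{h}$ in at most $\alpha\avgconval$ pairs, so one of them, $\concls{k}$, is sampled. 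Its rank is at most $m(2c+s)/(sc)$. The condition $sc-s-2c>0$ is exactly $sc/(2c+s)>1$, which is what lets the same averaging bound apply to $\card{\unalignedSet{k}}$.
\item Now either fewer than $m/s$ inputs overlap both $\unalignedSet{h}$ and $\unalignedSet{k}$ in at most $\alpha\avgconval$ pairs, or at least $m/s$ do. In the first case $\card{S_k}\ge m(1-1/s-1/c)$, and $\fairconcls{k}$ gives the third term. In the second case one such input $\concls{\ell}$ is sampled, and $\clsfitting{\{\concls{h},\concls{k},\concls{\ell}\}}$ gives the fourth term.
\end{itemize}
Without this structure your case analysis does not close.

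\textbf{Estimation step.} Simultaneous $(1\pm\epsilon)$ estimates of each objective are false: a single far input that the sample misses breaks them. Also, $\dist(\optconcls,\outconcls)\le 2\avgconval$ fails for arbitrary candidates. What does work is your first idea: compare every candidate against a fixed best candidate $\outconcls'$, using $\card{\dist(\concls{i},\outconcls)-\dist(\concls{i},\outconcls')}\le\dist(\outconcls,\outconcls')\le(\obj{\inpconclss,\outconcls}+\obj{\inpconclss,\outconcls'})/m$ together with Hoeffding. This is the Indyk argument the paper cites, and it yields exactly the $\frac{1+\epsilon}{1-\epsilon}$ factor.
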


\paragraph{Algorithm Description.}

We work in the generalized insertion-only stream described in \cref{sec:preliminary}. Let us now describe our streaming algorithm for the ($1$-median) fair consensus clustering, $\sone$. 
The pseudocode and complexity analysis are presented in the $\loc$.

The algorithm maintains two memory structures in parallel:
\begin{enumerate}
    \item \textbf{Sampled Store $1$ ($\mathsf{M}_1$):}
          Prior to the arrival of the stream, we independently sample 
          $4g \log m$ indices $j_1, \ldots, j_{4g \log m}$ from $[m]$. 
          For these sampled indices, all their corresponding triples that appear in the stream 
          are stored in $\mathsf{M}_1$. 

    \item \textbf{Sampled Store $2$ ($\mathsf{M}_2$):}
          Similarly, before the stream starts, we sample 
          $64\varepsilon^{-2}\log m$ indices 
          $k_1, \ldots, k_t$ from $[m]$, where $t = 64\varepsilon^{-2}\log m$ 
          and $0 < \varepsilon < 1$. 
          For these sampled indices, all their triples are stored in $\mathsf{M}_2$ as they appear in the stream.
\end{enumerate}

After the stream ends.
\begin{enumerate}
    \item We use union-find, to construct the clusterings $\m{C}_{j_1}, \ldots, \m{C}_{j_{\log m}}$ from its triples stored in $\mathsf{M}_1$. Then we apply a subroutine $\fc$ on these $\lceil\log m \rceil$ clusterings to get a candidate set of fair clusterings $\widetilde{\m{F}}$. The subroutine $\fc$ assumes access to a $\gamma$-close fair clustering algorithm.
    \item We use union-find, to construct the clusterings $\m{C}_{k_1}, \ldots, \m{C}_{k_t}$ from its triples stored in $\mathsf{M}_2$. Let, $\m{W} = \{\m{C}_{k_1}, \ldots, \m{C}_{k_t}\}$.

    We use $\m{W}$ to find the best fair clustering $\m{F} \in \widetilde{\m{F}}$ with respect to $\m{W}$, more specifically we find
    \[
       \m{F} = \argmin_{\m{F}' \in \canset} \sum_{\m{C}_i \in \m{W}}  \dist(\m{C}_i, \m{F}')
    \]
    \item Return $\m{F}$.
\end{enumerate}

\paragraph{Analyzing the approximation factor.}
If there is a constant $ g $ such that at least $ \frac{m}{g} $ clusterings $ \concls{i} $ satisfy $ \card{\unalignedSet{i}}\leq (1-\beta)\avgconval $, then with high probability, our algorithm will sample at least one such clustering. Let $ \fairconcls $ be a $ \approxFactorClsFair $-close fair clustering to such sampled clustering, we can show that $ \obj{\inpconclss, \fairconcls}\leq (\approxFactorClsFair + 2 - (\approxFactorClsFair+1)\beta)\optconval $. Hence, in the remaining part of the analysis, we assume that the number of clusterings $ \concls{i} $ with $ \card{\unalignedSet{i}}\leq (1-\beta)\avgconval $ is less than $ \frac{m}{g} $. More specifically, we have the following ordering
\begin{align}
    \card{\unalignedSet{1}}\leq \card{\unalignedSet{2}}\leq \dots \card{\unalignedSet{t}} \leq (1-\beta)\avgconval < \card{\unalignedSet{t+1}}\leq \dots \leq \card{\unalignedSet{m}}, \label{eq.ordering.unalignedSet}
\end{align}
where $ t < \frac{m}{g} \leq t+1 $.

\begin{lemma}\label{lem:extended.S1.small.Sl.large}
    Suppose that~\eqref{eq.ordering.unalignedSet} holds. For any $ t < h\leq \frac{m}{s} $, we have
    \begin{align}
        \card{\unalignedSet{h}} \leq \left( 1 +\dfrac{\beta(g-s)+s}{g(s-1)} \right)\avgconval. \nonumber
    \end{align}
\end{lemma}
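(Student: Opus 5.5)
We interpret $\unalignedSet{i}$, as in the preceding analysis, as the set of pairs on which $\concls{i}$ disagrees with a fixed optimal fair consensus clustering $\optconcls$. Then $\card{\unalignedSet{i}} = \dist(\concls{i},\optconcls)$ and $\sum_{i=1}^m \card{\unalignedSet{i}} = \optconval = m\,\avgconval$. The plan is a pure averaging argument over the ordering \eqref{eq.ordering.unalignedSet}. No property of fairness or of the algorithm is used, only the sorted order of the $\card{\unalignedSet{i}}$ and the threshold $(1-\beta)\avgconval$.

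\textbf{Step 1 (upper bound from the tail).} Since the sequence is nondecreasing, every index $j\ge h$ has $\card{\unalignedSet{j}}\ge\card{\unalignedSet{h}}$. Hence
\[
(m-h+1)\card{\unalignedSet{h}} \le \sum_{j\ge h}\card{\unalignedSet{j}} = m\,\avgconval - \sum_{j<h}\card{\unalignedSet{j}}.
\]

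\textbf{Step 2 (lower bound on the head).} For the indices $j<h$ we use two facts. Indices $1,\dots,t$ contribute at least $0$. Indices $t+1,\dots,h-1$ each exceed $(1-\beta)\avgconval$ by \eqref{eq.ordering.unalignedSet}. This range is well defined because $h>t$. So $\sum_{j<h}\card{\unalignedSet{j}} \ge (h-1-t)(1-\beta)\avgconval$. Substituting into Step 1 and rewriting $m=(m-h+1)+(h-1)$ gives
\[
\card{\unalignedSet{h}} \le \left(1+\frac{\beta(h-1)+(1-\beta)t}{m-h+1}\right)\avgconval .
\]

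\textbf{Step 3 (plug in the parameter ranges).} The fraction increases in $h$ and in $t$. We use $h\le m/s$, so $h-1< m/s$ and $m-h+1\ge m(1-1/s)$, together with $t<m/g$. This bounds the fraction by
\[
\frac{\beta/s+(1-\beta)/g}{(s-1)/s} = \frac{\beta g+(1-\beta)s}{g(s-1)} = \frac{\beta(g-s)+s}{g(s-1)},
\]
which is exactly the claimed bound.

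The only step needing care is Step 2: the head sum must be bounded correctly, using zero for the first $t$ terms and the threshold for the rest. After that, the result is routine monotonicity bookkeeping, which relies on $s>1$ and $g>s$ so that all quantities are positive.
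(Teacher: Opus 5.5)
Your proposal is correct and is essentially the paper's proof. It uses the same averaging identity $m\avgconval=\sum_i\card{\unalignedSet{i}}$, drops the first $t$ terms, bounds indices $t+1,\dots,h-1$ below by $(1-\beta)\avgconval$ and the tail by $(m-h+1)\card{\unalignedSet{h}}$, and then substitutes $h\le m/s$ and $t<m/g$. The only cosmetic difference is that you bound the combined fraction in one step instead of splitting it into $\beta/(s-1)$ and $s(1-\beta)/(g(s-1))$. This also avoids the paper's division by $h-1$ when $h=1$.
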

\begin{proof}
    Representing $ \optconval=m\avgconval $ as the sum off all $ \unalignedSet{i} $'s, we get that
    \begin{align}
        m\avgconval &= \sum_{i=1}^{m}\card{ \unalignedSet{i} } \geq \sum_{i=t+1}^{h-1}\card{\unalignedSet{i}} + \sum_{i=h}^{m}\card{\unalignedSet{i}} \nonumber \\
                   &> (h-t-1)(1-\beta)\avgconval + (m-h+1)\card{\unalignedSet{h}}. \nonumber 
    \end{align}
    This implies 
    \begin{align}
        \card{\unalignedSet{h} } &< \dfrac{m-(1-\beta)(h-t-1)}{m-h+1}\avgconval \nonumber \\
                                 &= \left(1 +  \dfrac{\beta(h - 1)}{m-h+1} + \dfrac{t(1-\beta)}{m-h+1} \right)\avgconval. \nonumber
    \end{align}
    As $ h\leq \frac{m}{s} $, we have $ \frac{\beta(h-1)}{m-h+1} = \frac{\beta}{(\frac{m}{h-1})-1} \leq \frac{\beta}{s-1} $. Moreover, using $ t < \frac{m}{g} $, we obtain
    \begin{align}
        \dfrac{t}{m-h+1} < \dfrac{m/g}{m-m/s+1} \leq \dfrac{s}{g(s-1)}.\nonumber
    \end{align}
    Since $ 0<\beta<1 $, we have $ \frac{t(1-\beta)}{m-h+1} < \frac{s(1-\beta)}{g(s-1)} $. Combining these two bounds, we get
    \begin{align}
        \card{\unalignedSet{h}} &< \left(1 + \dfrac{\beta}{s-1} + \dfrac{s(1-\beta)}{g(s-1)}\right)\avgconval \nonumber \\
                                 &= \left(1 + \dfrac{\beta (g-s) + s}{g(s-1)}\right)\avgconval. \nonumber
    \end{align}

\end{proof}


\begin{lemma}\label{lem:Sh.large.is.good}
    Suppose that~\eqref{eq.ordering.unalignedSet} holds. If there is a clustering $ \concls{h} $ with $ t< h\leq \frac{m}{s} $ satisfying $ \card{S_{h}} \geq \frac{m}{c} $, then
    \begin{align}
        \obj{\inpconclss, \fairconcls{h}} \leq \left( \approxFactorClsFair + 2 + \left(\approxFactorClsFair+1\right)\dfrac{\beta(g-s)+s}{g(s-1)} - \dfrac{2 \alpha}{c}\right)\optconval.\nonumber
    \end{align}
\end{lemma}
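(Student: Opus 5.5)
The plan is to bound $\obj{\inpconclss, \fairconcls{h}}$ with the triangle inequality through $\concls{h}$. I then expand the pairwise distances $\dist(\concls{j},\concls{h})$ in terms of the unaligned sets, and finally use \cref{lem:extended.S1.small.Sl.large} to control $\card{\unalignedSet{h}}$.

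I use the conventions of the appendix analysis, which I take to be the following. $\unalignedSet{i}$ is the set of pairs on which $\concls{i}$ disagrees with an optimal fair consensus $\optconcls$, so $\card{\unalignedSet{i}}=\dist(\concls{i},\optconcls)$, $\optconval=\sum_i\card{\unalignedSet{i}}$ and $\avgconval=\optconval/m$. $S_h$ is the set of indices $j$ with $\card{\unalignedSet{h}\cap\unalignedSet{j}}\geq \alpha\,\avgconval$. If $S_h$ is defined with a slightly different overlap threshold, only the last step changes.

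\textbf{Step 1 (closeness of $\fairconcls{h}$).} Since $\optconcls$ is itself fair, the closest fair clustering $\m{N}^*$ to $\concls{h}$ satisfies $\dist(\concls{h},\m{N}^*)\le\card{\unalignedSet{h}}$. Hence $\dist(\concls{h},\fairconcls{h})\le\approxFactorClsFair\card{\unalignedSet{h}}$. By the triangle inequality,
\[
\obj{\inpconclss,\fairconcls{h}}\le \sum_{j=1}^m \dist(\concls{j},\concls{h}) + m\approxFactorClsFair\card{\unalignedSet{h}}.
\]

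\textbf{Step 2 (pairwise distances via symmetric differences).} A pair is classified differently by $\concls{j}$ and $\concls{h}$ exactly when it lies in precisely one of $\unalignedSet{j}$ and $\unalignedSet{h}$. Therefore $\dist(\concls{j},\concls{h})=\card{\unalignedSet{j}}+\card{\unalignedSet{h}}-2\card{\unalignedSet{j}\cap\unalignedSet{h}}$. Summing over $j$, dropping the nonnegative overlap terms for $j\notin S_h$, and using $\card{S_h}\ge m/c$ for the remaining ones gives
\[
\sum_j\dist(\concls{j},\concls{h})\le \optconval + m\card{\unalignedSet{h}} - 2\cdot\frac{m}{c}\,\alpha\,\avgconval = \optconval + m\card{\unalignedSet{h}}-\frac{2\alpha}{c}\optconval.
\]
Combining with Step 1 yields
\[
\obj{\inpconclss,\fairconcls{h}}\le \Bigl(1-\frac{2\alpha}{c}\Bigr)\optconval+(\approxFactorClsFair+1)\,m\card{\unalignedSet{h}}.
\]

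\textbf{Step 3 (bounding $\card{\unalignedSet{h}}$).} Since $t<h\le m/s$ and \eqref{eq.ordering.unalignedSet} holds, \cref{lem:extended.S1.small.Sl.large} gives
\[
m\card{\unalignedSet{h}}\le\Bigl(1+\frac{\beta(g-s)+s}{g(s-1)}\Bigr)\optconval.
\]
Substituting this into the bound from Step 2 produces exactly
\[
\Bigl(\approxFactorClsFair+2+(\approxFactorClsFair+1)\frac{\beta(g-s)+s}{g(s-1)}-\frac{2\alpha}{c}\Bigr)\optconval.
\]

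The main obstacle is Step 2: it relies on the precise definition of $S_h$ and on the identity for $\dist(\concls{j},\concls{h})$. I would check first that the overlap threshold in $S_h$ is stated as $\alpha\,\avgconval$, not $\alpha\card{\unalignedSet{h}}$. With the latter, the subtracted term would be $\frac{2\alpha}{c}m\card{\unalignedSet{h}}$, and one would need the lower bound $\card{\unalignedSet{h}}>(1-\beta)\avgconval$ instead, which does not match the stated constant. Everything else is routine algebra.
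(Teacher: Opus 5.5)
Your proof is correct and matches the paper's argument: the triangle inequality through $\concls{h}$ with $\dist(\concls{h},\fairconcls{h})\le\approxFactorClsFair\card{\unalignedSet{h}}$, the identity $\dist(\concls{i},\concls{h})=\card{\unalignedSet{i}}+\card{\unalignedSet{h}}-2\card{\unalignedSet{i}\cap\unalignedSet{h}}$, lower-bounding the subtracted overlaps by $\card{S_h}\cdot 2\alpha\avgconval$, and \cref{lem:extended.S1.small.Sl.large} to bound $\card{\unalignedSet{h}}$. The paper defines $S_h$ by the strict threshold $\card{\unalignedSet{h}\cap\unalignedSet{j}}>\alpha\avgconval$, which is exactly what your Step 2 needs, so the concern in your last paragraph does not arise.
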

\begin{proof}
    As~\eqref{eq.ordering.unalignedSet} holds, applying~\cref{lem:extended.S1.small.Sl.large} with $ h\leq \frac{m}{s} $, we obtain $ \card{\unalignedSet{h} }\leq (1 + \frac{\beta(g-s)+s}{g(s-1)})\avgconval $. Moreover, we have 
    \begin{align}
        \obj{\inpconclss, \fairconcls{h} } &\leq \sum_{i=1}^{m}\left( \card{\unalignedSet{i} } + (\approxFactorClsFair+1 )\card{\unalignedSet{h} } - 2\card{\unalignedSet{i}\cap \unalignedSet{h} }\right) \nonumber \\
                                           &\leq \optconval + (\approxFactorClsFair+1 )\left(1+\dfrac{\beta(g-s)+s}{g(s-1)}\right)\optconval - \card{S_{h}} 2 \alpha\avgconval \nonumber \\
                                           &\leq \left( \approxFactorClsFair + 2 + (\approxFactorClsFair+1 )\dfrac{\beta(g-s)+s}{g(s-1)} - \dfrac{2 \alpha}{c} \right) \optconval. \nonumber
    \end{align}
\end{proof}


We define
\begin{align}
    \neigh{h} := \{\concls{j}| h< j \leq h + \dfrac{m}{c} + \dfrac{m}{s}:\ \card{\unalignedSet{j}\cap \unalignedSet{h} } \leq \alpha\avgconval \}, \nonumber \\
    \farNeigh{h,k} := \{ \concls{j}| \card{\unalignedSet{j}\cap \unalignedSet{h} } \leq \alpha\avgconval \text{ and } \card{\unalignedSet{j}\cap \unalignedSet{k} }\leq \alpha\avgconval \}.\nonumber
\end{align}

\begin{lemma}\label{lem:far.neigh.small.is.good}
    Suppose that~\eqref{eq.ordering.unalignedSet} holds. If for some $ h\leq \frac{m}{s} $ with $ \card{S_{h}}< \frac{m}{c} $, there exists a clustering $ \concls{k}\in \neigh{h} $ such that $ \card{\farNeigh{h,k}} < \frac{m}{s} $, then
    { \scriptsize
    \begin{align}
        \obj{\inpconclss, \fairconcls{k} } \leq \left( \approxFactorClsFair + 2 + (\approxFactorClsFair + 1 )\dfrac{\beta(g(2c+s)-sc)+sc)}{g(cs-s-2c)} - 2\left( 1- \dfrac{1}{s} - \dfrac{1}{c}  \right)\alpha \right) \optconval. \nonumber
    \end{align} 
    }
\end{lemma}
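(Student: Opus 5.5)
The plan is to run the argument of \cref{lem:Sh.large.is.good}, but centered at $\concls{k}$ instead of $\concls{h}$. This needs two ingredients: an upper bound on $\card{\unalignedSet{k}}$, and a lower bound on the number of inputs $\concls{j}$ with $\card{\unalignedSet{j}\cap\unalignedSet{k}} > \alpha\avgconval$.

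\textbf{Starting inequality.} As in the proof of \cref{lem:Sh.large.is.good}, I use the triangle inequality through $\concls{k}$, together with $\dist(\concls{k},\fairconcls{k})\le \approxFactorClsFair\card{\unalignedSet{k}}$. The latter holds because $\optconcls$ is itself fair, so the closest fair clustering to $\concls{k}$ is at distance at most $\dist(\concls{k},\optconcls)=\card{\unalignedSet{k}}$. This gives
\[
\obj{\inpconclss,\fairconcls{k}} \le \sum_{i=1}^m\Bigl(\card{\unalignedSet{i}} + (\approxFactorClsFair+1)\card{\unalignedSet{k}} - 2\card{\unalignedSet{i}\cap\unalignedSet{k}}\Bigr).
\]
It remains to bound the middle term from above and the last sum from below.

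\textbf{Bounding $\card{\unalignedSet{k}}$.} Since $\concls{k}\in\neigh{h}$ and $h\le m/s$, we have $k\le h+\frac{m}{c}+\frac{m}{s}\le m\bigl(\frac{2}{s}+\frac{1}{c}\bigr)=\frac{m}{s'}$, where $s'=\frac{sc}{2c+s}$. The hypothesis $sc-s-2c>0$ is exactly $s'>1$. The proof of \cref{lem:extended.S1.small.Sl.large} only uses $t<h\le m/s$ and $s>1$, so it applies verbatim with $s$ replaced by $s'$ and $h$ replaced by $k$. Here $k>h>t$; and if ever $k\le t$, then $\card{\unalignedSet{k}}\le(1-\beta)\avgconval$, which is even better. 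This yields
\[
\card{\unalignedSet{k}}\le\Bigl(1+\tfrac{\beta(g-s')+s'}{g(s'-1)}\Bigr)\avgconval.
\]
Substituting $s'=sc/(2c+s)$ and multiplying numerator and denominator by $2c+s$ turns the fraction into $\frac{\beta(g(2c+s)-sc)+sc}{g(cs-s-2c)}$, which is precisely the expression in the statement.

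\textbf{Lower-bounding the overlap sum.} Call an index $j$ bad if $j\in\farNeigh{h,k}$; there are fewer than $m/s$ of them. Every other index has $\card{\unalignedSet{j}\cap\unalignedSet{h}}>\alpha\avgconval$ or $\card{\unalignedSet{j}\cap\unalignedSet{k}}>\alpha\avgconval$. I take $S_h$ to be the set of indices with large overlap with $\unalignedSet{h}$, as in \cref{lem:Sh.large.is.good}; by hypothesis $\card{S_h}<m/c$. Hence at least $m-\frac{m}{s}-\frac{m}{c}$ indices $j$ satisfy $\card{\unalignedSet{j}\cap\unalignedSet{k}}>\alpha\avgconval$, and so
\[
\sum_i \card{\unalignedSet{i}\cap\unalignedSet{k}} \ge \bigl(1-\tfrac1s-\tfrac1c\bigr)\alpha\,\optconval.
\]
Plugging this and the bound on $\card{\unalignedSet{k}}$ into the starting inequality, and using $\sum_i\card{\unalignedSet{i}}=\optconval=m\avgconval$, gives the claimed factor.

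The main obstacle is the bookkeeping in the bounding step: checking that the index range of $\neigh{h}$ places $k$ within $m/s'$, so that \cref{lem:extended.S1.small.Sl.large} transfers with the effective parameter $s'$, and that the algebra reproduces exactly the stated fraction. The counting step is a simple union bound once $S_h$ is identified with the set of large overlaps with $\unalignedSet{h}$.
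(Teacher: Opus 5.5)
Your proposal is correct and follows the paper's proof essentially step for step. You bound $\card{\unalignedSet{k}}$ by applying \cref{lem:extended.S1.small.Sl.large} with the effective parameter $sc/(2c+s)$, lower-bound $\card{S_k}$ by $m-\frac{m}{c}-\frac{m}{s}$ using that the complement of $\farNeigh{h,k}$ is $S_h\cup S_k$, and substitute both into the triangle-inequality expansion through $\concls{k}$; your remark covering a possible $k\le t$, which the paper leaves implicit, is a harmless extra bit of care.
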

\begin{proof}
    With $ \concls{k}\in \neigh{h} $, by definition of $ \neigh{h} $, it follows that $ k\leq h + \frac{m}{c} + \frac{m}{s}\leq \frac{m}{sc/(2c+s)} $, where the last inequality is obtained by using the assumption $ h\leq \frac{m}{s} $. Applying~\cref{lem:extended.S1.small.Sl.large} with $ k \leq \frac{m}{sc/(2c+s)} $, we get $ \card{\unalignedSet{k}}\leq \left( 1+ \frac{\beta(g(2c+s)-sc)+sc}{g(sc-s-2c)} \right) \avgconval $.

    We now turn to give a lower bound for $ \card{S_{k}} $. Note that by definition of $ \farNeigh{h,k} $, for every clustering $ \concls{j}\in \inpconclss $, if $ \concls{j}\notin \farNeigh{h,k} $, then at least one of $ \card{\unalignedSet{j}\cap \unalignedSet{h} } > \alpha\optconval $ or $ \card{\unalignedSet{j}\cap \unalignedSet{k} }> \alpha\optconval $ must hold. In other words, it must be the case that $ \concls{j}\in S_{h} $ or $ \concls{j}\in S_{k} $ (or both) holds. Hence, $ \card{S_{h}} + \card{S_{k}} \geq \card{\inpconclss } - \card{\farNeigh{h,k}} $. Using the assumptions $ \card{S_{h}} < \frac{m}{c} $ and $ \card{\farNeigh{h,k} } < \frac{m}{s} $, we obtain $ \card{S_{k}}\geq m - \frac{m}{c} - \frac{m}{s} $.

    Finally, using $ \card{\unalignedSet{k} } \leq \left(1 +  \frac{\beta(g(2c+s)-sc)+sc}{g(sc-s-2c)}\right)\avgconval $ and $ \card{S_{k} }\geq m - \frac{m}{c} - \frac{m}{s} $, we have
    { \small
    \begin{align}
        &\>\>\>\>\>\obj{\inpconclss, \fairconcls{k} }  \nonumber \\
        &\leq \sum_{i=1}^{m}(\card{\unalignedSet{i} } + (1+\approxFactorClsFair )\card{\unalignedSet{k} } - 2\card{\unalignedSet{i}\cap \unalignedSet{k} }) \nonumber \\ 
                                           &\leq \optconval + (\approxFactorClsFair+1 )\left(1+ \dfrac{\beta(g(2c+s)-sc)+sc}{g(cs-s-2c)}\right)\optconval - \card{S_{k}}2 \alpha \avgconval \nonumber \\ 
                                           &\leq \left(\approxFactorClsFair + 2 + (\approxFactorClsFair+1 ) \dfrac{\beta(g(2c+s)-sc)+sc}{g(cs-s-2c)} - 2 \left(1 - \dfrac{1}{s} - \dfrac{1}{c} \right) \alpha \right) \optconval. \nonumber
    \end{align} 
    }
\end{proof}

\begin{lemma}\label{lem:cls.fitting.helps}
    Suppose that $\sone$ sampled two clusterings $ \concls{h} $ and $ \concls{k} $ such that $ k \in \neigh{h} $ and $ \card{\farNeigh{h,k}} \geq \frac{m}{s} $. Then, with probability at least $ 1 - m^{-4} $,$\sone$ also sampled a clustering $ \concls{\ell} $ such that, for $ \fairtriple = \clsfitting{\{\concls{h}, \concls{k}, \concls{\ell}\}} $, the following holds:
    \begin{align}
        \obj{\inpconclss, \fairtriple} \leq \left( 1 + 3(\approxFactorFairCorCls + 1)\alpha \right)\optconval. \nonumber
    \end{align}
\end{lemma}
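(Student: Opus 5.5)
We split the argument into a probabilistic part and a deterministic part. The probabilistic part shows that a third sampled clustering lands in $\farNeigh{h,k}$. The deterministic part shows that cluster fitting on any such triple is good.

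\emph{Probabilistic step.} The indices in $\mathsf{M}_1$ are drawn independently and uniformly from $[m]$. Conditioning on $\concls{h}$ and $\concls{k}$ having been sampled, each of the remaining $4g\log m - 2$ draws falls into $\farNeigh{h,k}$ with probability at least $\card{\farNeigh{h,k}}/m \geq 1/s$. The probability that none does is therefore at most $(1-1/s)^{4g\log m-2} \leq e^{-(4g\log m -2)/s}$. Since $g>s$, this is at most $m^{-4}$, up to the constant slack that the choice of $4g\log m$ is meant to absorb. So with probability at least $1-m^{-4}$ some sampled $\concls{\ell}$ satisfies $\card{\unalignedSet{\ell}\cap\unalignedSet{h}}\le\alpha\avgconval$ and $\card{\unalignedSet{\ell}\cap\unalignedSet{k}}\le\alpha\avgconval$. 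Because $\concls{k}\in\neigh{h}$, we also have $\card{\unalignedSet{h}\cap\unalignedSet{k}}\le\alpha\avgconval$. Hence all three pairwise intersections of $\unalignedSet{h},\unalignedSet{k},\unalignedSet{\ell}$ have size at most $\alpha\avgconval$.

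\emph{Deterministic step.} Here $\unalignedSet{i}$ is the set of pairs on which $\concls{i}$ disagrees with the optimal fair consensus $\optconcls$, so $\card{\unalignedSet{i}}=\dist(\concls{i},\optconcls)$. Let $G$ be the majority graph of the triple and $\mathcal{G}$ its (possibly unfair) clustering view.

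First, a pair on which the majority of the triple disagrees with $\optconcls$ must lie in at least two of $\unalignedSet{h},\unalignedSet{k},\unalignedSet{\ell}$. So the set $D$ of such pairs has $\card{D}\le 3\alpha\avgconval$. Since $\optconcls$ is fair, its correlation cost on $G$ equals $\card{D}$. The $\approxFactorFairCorCls$-approximation then gives $\costcor{\fairtriple}\le\approxFactorFairCorCls\card{D}$.

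Second, every pair is either in $D$ or agrees with the majority. So the set of pairs on which $\fairtriple$ and $\optconcls$ disagree is contained in $D$ together with the pairs where $\fairtriple$ disagrees with the majority edge label. This yields
\[
\dist(\fairtriple,\optconcls)\le \card{D}+\costcor{\fairtriple}\le(\approxFactorFairCorCls+1)\,3\alpha\avgconval.
\]

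Finally, by the triangle inequality,
\[
\obj{\inpconclss,\fairtriple}\le\sum_i\bigl(\card{\unalignedSet{i}}+\dist(\fairtriple,\optconcls)\bigr)\le \optconval+ m\cdot 3(\approxFactorFairCorCls+1)\alpha\avgconval=(1+3(\approxFactorFairCorCls+1)\alpha)\optconval.
\]

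\emph{Main obstacle.} The deterministic part is routine. The delicate point is the probability bound: we must get exactly $m^{-4}$ from the sample size $4g\log m$ with only $1/s$ density. This relies on $g>s$ and on the base of the logarithm; if necessary, the sample size constant is adjusted. A second subtlety is to use independence of the draws, so that conditioning on $\concls{h}$ and $\concls{k}$ being sampled does not bias the remaining draws.
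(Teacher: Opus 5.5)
Your proof is correct and follows the paper's route. First, a sampling bound shows that some sampled $\concls{\ell}$ lands in $\farNeigh{h,k}$, using $\card{\farNeigh{h,k}}\ge m/s$ and $g>s$. Then you run exactly the bad-set argument of \cref{lem:cluster.fitting.helps}: $\card{B}\le 3\alpha\avgconval$ and $\costcor{\optconcls}=\card{B}$, hence $\dist(\fairtriple,\optconcls)\le(1+\approxFactorFairCorCls)\card{B}$, and the triangle inequality finishes. The only difference is in the probability step. The paper bounds the miss probability directly by $\bigl(1-\frac{4g\log m}{m}\bigr)^{m/s}\le m^{-4g/s}\le m^{-4}$ and does not address the conditioning on $\concls{h},\concls{k}$. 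Your more careful conditioning discards two draws and picks up a factor $e^{2/s}$. The condition $g>s$ absorbs this factor only when $4(g-s)\log m\ge 2$. That holds for the parameter values the paper actually uses, but you should state it rather than leave it as ``constant slack''.
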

\begin{proof}[Proof Sketch]
    Since $ \card{\farNeigh{h,k}}\geq \frac{m}{s} $, the probability that at least one clustering $ \concls{\ell} $ in $ \farNeigh{h,k} $ is sampled is at least
    \begin{align}
        1 - \left(1 - \dfrac{4g\log m}{m} \right)^{\frac{m}{s}} \geq 1 - e^{-4g\log m/s} = 1 - \dfrac{1}{m^{4g/s}} \geq 1 - m^{-4}. \nonumber
    \end{align}
    By definition of $ \neigh{h} $ and $ \farNeigh{h,k} $, for all $ r,s\in \{h,k,\ell\} $, we have $ \card{\unalignedSet{r}\cap \unalignedSet{s} }\leq \alpha\avgconval $. Using this, we can show that $ \obj{\inpconclss, \fairtriple }\leq (1+3(\approxFactorFairCorCls+1 ) \alpha)\optconval $. A detailed proof is provided in the $\loc$.
\end{proof}

\begin{proof}[Proof of~\cref{lem:fair.consensus.clustering.randomized}]
    If the number of clusterings $ \concls{i} $ with $ \card{\unalignedSet{i}}\leq (1-\beta)\avgconval $ is at least $ \frac{m}{g} $, then with probability at least $ 1 - m^{-4} $, our algorithm samples at least one such clustering. Let $ \fairconcls $ be a $ \approxFactorClsFair $-close fair clustering to such sampled clustering, we have $ \obj{\inpconclss, \fairconcls}\leq (\approxFactorClsFair + 2 - (\approxFactorClsFair+1)\beta)\optconval $.

    It remains to consider the case when the number of such clusterings $ \concls{i} $ is less than $ \frac{m}{g} $. With probability at least $ 1 - m^{-4(1-s/g)} $, our algorithm samples at least one clustering $ \concls{h} $ with $ \frac{m}{g} < h\leq \frac{m}{s} $. If $ \card{S_{h}}\geq \frac{m}{c} $, then by~\cref{lem:Sh.large.is.good}, let $ \fairconcls{h} $ be a $ \approxFactorClsFair $-close fair clustering to $ \concls{h} $, we have
    \begin{align}
        \obj{\inpconclss, \fairconcls{h}} \leq \left( \approxFactorClsFair + 2 + \left(\approxFactorClsFair+1\right)\dfrac{\beta(g-s)+s}{g(s-1)} - \dfrac{2 \alpha}{c}\right)\optconval. \nonumber
    \end{align}
    If $ \card{S_{h}} < \frac{m}{c} $, then $ \card{\neigh{h}}\geq \frac{m}{s} $. It follows that with probability at least $ 1 - m^{-4g/s} $, we sample a clustering $ \concls{k}\in \neigh{h} $. If $ \card{\farNeigh{h,k}} < \frac{m}{s} $, then by~\cref{lem:far.neigh.small.is.good}, let $ \fairconcls{k} $ be a $ \approxFactorClsFair $-close fair clustering to $ \concls{k} $, we have
    { \scriptsize
    \begin{align}
        \obj{\inpconclss, \fairconcls{k} } \leq \left( \approxFactorClsFair + 2 + (\approxFactorClsFair + 1 )\dfrac{\beta(g(2c+s)-sc)+sc)}{g(cs-s-2c)} - 2\left( 1- \dfrac{1}{s} - \dfrac{1}{c}  \right)\alpha \right) \optconval. \nonumber
    \end{align} 
    }
    If $ \card{\farNeigh{h,k}} \geq \frac{m}{s} $, then by~\cref{lem:cls.fitting.helps}, with probability at least $ 1 - m^{-4} $, there exists a sampled clustering $ \concls{\ell} $ such that, for $ \fairtriple = \clsfitting{\{\concls{h}, \concls{k}, \concls{\ell}\}} $, we have
    \begin{align}
        \obj{\inpconclss, \fairtriple} \leq \left( 1 + 3(\approxFactorFairCorCls + 1)\alpha \right)\optconval. \nonumber
    \end{align}

    Combining all the cases, with probability at least $ 1-m^{-3} $, there is a fair clustering $ \fairconcls\in \widetilde{\m{F}} $ such that $ \obj{\inpconclss, \fairconcls}\leq r\optconval $, where 
    { \small
    \begin{align}
        r = \max \Biggl\{ &(\approxFactorClsFair + 2 - (\approxFactorClsFair+1)\beta), \nonumber \\
                        &\left( \approxFactorClsFair + 2 + \left(\approxFactorClsFair+1\right)\dfrac{\beta(g-s)+s}{g(s-1)} - \dfrac{2 \alpha}{c}\right), \nonumber \\
                        &\left( \approxFactorClsFair + 2 + (\approxFactorClsFair + 1 )\dfrac{\beta(g(2c+s)-sc)+sc)}{g(cs-s-2c)} - 2\left( 1- \dfrac{1}{s} - \dfrac{1}{c}  \right)\alpha \right), \nonumber\\
                        &\left( 1 + 3(\approxFactorFairCorCls+1 )\alpha \right) \Biggr\} .\nonumber
    \end{align} 
    }

    As our algorithm uses a set $ \m{W} $ of $ \Omega(\varepsilon^{-2}\log m ) $ input clusterings sampled uniformly at random as a evaluation set, according to a result developed by Indyk~\cite{indyk1999sublinear},~\cite[Theorem 31]{indyk2001high}, with probability at least $ 1- \frac{1}{m} $, our algorithm outputs a clustering $ \fairconcls' $ such that $ \obj{\inpconclss, \fairconcls'} \leq (1+\varepsilon)\obj{\inpconclss, \fairconcls} $. Therefore, with probability at least $ 1 - \frac{1}{m} $, our algorithm outputs a fair clustering $ \fairconcls' $ with $ \obj{\inpconclss, \fairconcls'} \leq (1+\varepsilon)r\optconval $.
\end{proof}

%% file: k-median-consensus.tex
\section{$k$--Median Fair Consensus Clustering}\label{sec:k-median}

In this section, we formally introduce and study the \emph{$k$--median Consensus Clustering} problem.  
Given a collection of input clusterings \(\cset = \{\mathcal{C}_1, \ldots, \mathcal{C}_n\}\), the objective is to construct a set of \(k\) representative clusterings  $\oset = \{\mathcal{Z}_1, \ldots, \mathcal{Z}_k\}$ that minimizes the total distance between each input clustering and its nearest representative. Formally, we seek to minimize
\[
    \objec(\cset, \oset)
    = \sum_{\mathcal{C}_i \in \cset} \min_{\mathcal{Z}_j \in \oset} \dist(\mathcal{C}_i, \mathcal{Z}_j).
\]

In \cref{sec:fair_consensus_clustering}, given a $\gamma$-close fair clustering algorithm, we generated a $(\gamma + 1.92)$ approximation to the fair consensus clustering problem. The high-level idea of the algorithm was to generate a set of candidate fair clusterings $\canset$, which contained the union of all closest fair clusterings to each input clustering $C_i \in \m{I}$ and the set of clusterings we obtain through our cluster-fitting algorithm (\cref{alg.cluster.fitting}). We proved that one of these candidate clusterings would give us $(\gamma + 1.92)$ approximation. So, we find the best clustering $\m{C} \in \canset$ that is $\argmin_{\outconcls\in \candidateSet}\obj{\inpconclss,\outconcls}$.

Similar to this algorithm, we provide an algorithm that achieves a $(\gamma + 1.92)$ approximation to the $k$-median fair consensus clustering problem by considering $k$-sized tuples from the candidate list. Consequently, we have the following theorem, the proof of which is deferred to the $\loc$. We improve the running time in the next section when we describe the streaming variant.

\begin{theorem}\label{thm:explicit.k.fair.consensus.clustering}
    Suppose we have a  $t_1(n)$ time $\gamma$-close fair clustering algorithm and $t_2(n)$ time algorithm to find a $\eta$-approximate fair correlation clustering, then there exists an algorithm that, given a set of clusterings $\m{I} = \{ \m{C}_1, \ldots, \m{C}_m\}$ finds a set of representatives $\mathsf{Z} = \{ \m{Z}_1, \ldots, \m{Z}_k\}$ in 
    \[
        O(m \runtimeClsFair{n} + m^3 (n^2 + \runtimeFairCorCls{n}) + m^{3k + 1}n^2)
    \]
    time such that
    \[
        \objec(\m{I}, \oset) \leq (\gamma + 1.92) \objec(\m{I}, \m{Z}^*)
    \]
    where $\mathsf{Z}^* = \{ \m{Z}_1^*, \ldots, \m{Z}_k^*\}$ is the optimal set of representatives.
\end{theorem}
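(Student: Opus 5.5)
The plan is to reduce \cref{thm:explicit.k.fair.consensus.clustering} to \cref{thm:explicit.fair.consensus.clustering} (equivalently \cref{thm:fair.consensus.clustering}) applied cluster by cluster. Let $\oset^* = \{\m{Z}_1^*,\ldots,\m{Z}_k^*\}$ be an optimal set of fair representatives. Assign each input clustering to its nearest optimal representative, with ties broken arbitrarily. This partitions $\m{I}$ into groups $\m{I}_1,\ldots,\m{I}_k$, and
\[
\objec(\m{I},\oset^*)=\sum_{j=1}^k \obj{\m{I}_j,\m{Z}_j^*}.
\]
Since $\m{Z}_j^*$ is fair, $\obj{\m{I}_j,\m{Z}_j^*}$ is at least the optimal $1$-median fair consensus value $\optconval_j$ of the instance $\m{I}_j$.

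The algorithm builds a single global candidate set $\canset$ exactly as in \eqref{eq.def.candidateSet}, but over all of $\m{I}$. It contains a $\gamma$-close fair clustering $\fairconcls{i}$ for each $\concls{i}\in\m{I}$ and $\clsfitting{T}$ for every triple $T\subseteq\m{I}$. Computing it costs $O(m\runtimeClsFair{n} + m^3(n^2+\runtimeFairCorCls{n}))$, where the $n^2$ term is for building each majority graph. We then enumerate all $k$-tuples from $\canset$, of which there are $O(m^{3k})$. For each tuple we evaluate $\objec(\m{I},\cdot)$ in $O(mn^2)$ time, assuming distances are computed in $O(n^2)$ time per pair. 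We return the best tuple. This accounts for the $m^{3k+1}n^2$ term.

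The key step is that the candidate set built for a \emph{subset} $\m{I}_j$ is contained in $\canset$. Every single-input candidate and every triple candidate drawn from $\m{I}_j$ is also generated when processing all of $\m{I}$, because $\clsfitting{}$ depends only on the three clusterings in the triple, and the fixed $\gamma$-close choice of $\fairconcls{i}$ depends only on $\concls{i}$. The analysis behind \cref{thm:explicit.fair.consensus.clustering} shows more than that the argmin is good. It shows that, for any instance, the candidate set of that instance contains some $\tilde{\m{F}}$ with objective at most $(\gamma+1.92)$ times its optimum. This follows from the case analysis over $\card{\unalignedSet{i}}$, $S_h$, $\neigh{h}$ and $\farNeigh{h,k}$, with the parameters fixed as in the appendix. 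Applying this to each $\m{I}_j$ gives $\tilde{\m{F}}_j\in\canset$ with
\[
\obj{\m{I}_j,\tilde{\m{F}}_j}\le(\gamma+1.92)\,\optconval_j\le(\gamma+1.92)\,\obj{\m{I}_j,\m{Z}_j^*}.
\]
The tuple $(\tilde{\m{F}}_1,\ldots,\tilde{\m{F}}_k)$ is enumerated. Each input's nearest representative in this tuple is at least as close as its assigned $\tilde{\m{F}}_j$. Summing over $j$ therefore bounds $\objec(\m{I},\cdot)$ of this tuple by $(\gamma+1.92)\objec(\m{I},\oset^*)$, and the returned tuple is no worse. If some $\m{I}_j$ is empty, its slot is filled arbitrarily. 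If $\card{\m{I}_j}<3$, the triple-based candidates are unavailable, so we rely on the single-input case, where $\fairconcls{i}$ for $\concls{i}\in\m{I}_j$ already gives factor $\gamma+2$ or better. This needs a brief check, but the existence argument for small groups reduces to the first case.

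The main obstacle is making the existence claim explicit. \cref{thm:fair.consensus.clustering} is stated about the output of the algorithm, so I would restate its proof as: \emph{some} member of the instance's own candidate set achieves factor $r$. The proof in the appendix is an existence argument over candidates, so this should be immediate. The remaining work is checking that the candidate-set inclusion and the small-group edge cases are handled uniformly.
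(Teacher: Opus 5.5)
Your proposal is correct and follows the paper's proof. You partition $\m{I}$ by nearest optimal fair representative, apply the existence part of the $1$-median analysis (\cref{lem:approximation.factor.consensus.clustering}) to each super-cluster, note that each sub-instance's candidate set lies inside the global $\canset$, and brute-force over the $O(m^{3k})$ tuples. Inclusion is all that step needs, so you are right not to rely on the paper's disjointness and union claims.

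The small-group case you flag closes directly: a single input gives factor $\gamma$, and for two inputs the $\gamma$-close fair clustering of the one closer to $\m{Z}_j^*$ gives at most $(\gamma+1)\obj{\m{I}_j,\m{Z}_j^*}$ by the triangle inequality. State that bound rather than ``$\gamma+2$ or better'', which would not suffice for $\gamma+1.92$.
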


%% file: streaming.tex
\section{Streaming $ k $--Median Fair Consensus Clustering}
In this section, we present an algorithm for the $ k $--median fair consensus clustering problem in the insertion-only streaming model. 

\begin{theorem}\label{thm:explicit.streaming.main.theorem}
    Suppose that there is an $ \approxFactorClsFair $-approximation closest fair clustering with running time $ \runtimeClsFair{n} $, then there is a $ (1.0151\approxFactorClsFair+1.99951) $-approximation algorithm for $ k $--median fair consensus clustering in the insertion-only streaming model, which uses $ O(k^{2}n\polylog(mn)) $ space, has an update time of $ O((km)^{O(1)}n^{2}\log n) $, and a query time of $ O((k\log(mn))^{O(k)}n^{2} + k^{3}(n+\runtimeClsFair{n})\log^{12} m\log^{3} n) $.
\end{theorem}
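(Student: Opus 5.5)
We prove the theorem by combining three ingredients in a single pass: the offline $k$-median framework of \cref{thm:explicit.k.fair.consensus.clustering}, the uniform-sampling analysis of \cref{lem:fair.consensus.clustering.randomized}, and a small-space evaluation set for choosing the best $k$-tuple. Fix an optimal solution $\mathsf{Z}^*=\{\m{Z}_1^*,\ldots,\m{Z}_k^*\}$ and let $\m{I}_1,\ldots,\m{I}_k$ be the induced partition of $\m{I}$. Within each part, \cref{lem:fair.consensus.clustering.randomized} applies verbatim: sampling $\Theta(g\log m)$ clusterings \emph{uniformly from $\m{I}_j$} and forming the candidate set ($\gamma$-close fair clusterings of the samples plus $\clsfitting{}$ of sampled triples) yields, with high probability, a fair candidate $\widetilde{\m{Z}}_j$ with $\obj{\m{I}_j,\widetilde{\m{Z}}_j}\le r\,\obj{\m{I}_j,\m{Z}^*_j}$. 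Summing over $j$ gives $r\cdot\mathrm{OPT}$, so it remains to (i) obtain such samples from every part in the stream and (ii) select a good $k$-tuple from the candidates.

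\textbf{Step 1: representative samples.} Uniform sampling over all of $\m{I}$ fails to hit small parts, so we run a sampling scheme keyed to an online bicriteria solution. Using the contiguity property, each clustering $\m{C}_i$ arrives as one block; we reconstruct it with union-find in $O(n)$ words and feed it to a streaming $k$-median procedure on the metric $\dist$ (e.g.\ an online facility-location / Meyerson-style algorithm with geometric guessing of $\mathrm{OPT}$). This maintains $O(k\log m)$ centers together with the costs of the assigned points. Around each center we do sensitivity-style sampling at $O(\log m)$ geometric distance scales, keeping $O(k\,\polylog(m))$ stored clusterings per center. This accounts for the $k^{2}n\polylog(mn)$ space bound.

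The main obstacle is proving that, for every optimal part $\m{I}_j$, the stored samples contain a uniform-like sample of the ``core'' of $\m{I}_j$. The core is the clusterings whose distance to $\m{Z}^*_j$ is at most a constant times the part's average cost; the outliers contribute only a small fraction of the cost. I would first try a Markov-type argument. The core holds at least a $(1-\delta)$ fraction of $\m{I}_j$ and is captured by one bicriteria center, and restricting \cref{lem:fair.consensus.clustering.randomized} to the core costs only a $(1+O(\delta))$ factor on the $(\gamma+1)$-part of the bound. I expect this loss to be the source of the $1.0151\gamma$ coefficient, with the additive constant moving from $1.995$ to $1.99951$ after re-optimizing $\alpha,\beta,c,s,g$ as in the appendix.

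\textbf{Step 2: selection.} To pick the best $k$-tuple without storing $\m{I}$, we maintain, in parallel, a monotone faraway sampling coreset~\cite{braverman2021metric} of $O(k\,\polylog(mn)/\epsilon^2)$ weighted clusterings. This preserves $\objec(\m{I},\mathsf{Z})$ within $(1\pm\epsilon)$ for every fixed $k$-tuple, with failure probability low enough for a union bound over the $(k\log(mn))^{O(k)}$ tuples. At query time we:
\begin{enumerate}
\item compute the candidates, using $k^3\polylog$ triples and closest fair calls, which gives the $k^3(n+t_1(n))\log^{12}m\log^3 n$ term;
\item enumerate all $k$-tuples of candidates, evaluating each on the coreset, which gives the $(k\log(mn))^{O(k)}n^2$ term;
\item return the minimizer.
\end{enumerate}
Taking $\epsilon$ a small constant and a union bound over all failure events yields the stated approximation with probability $1-1/\poly(m)$.
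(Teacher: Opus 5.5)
There is a genuine gap in Step 1, which you yourself flag as the main obstacle and then leave at ``I would first try a Markov-type argument.'' To invoke \cref{lem:fair.consensus.clustering.randomized} on an optimal part $\mathcal{I}_j$, the stored samples must contain a \emph{uniform} sample of $\mathcal{I}_j$ at rate roughly $g\log m/|\mathcal{I}_j|$. Its proof has to hit specific constant-fraction subsets of $\mathcal{I}_j$: the clusterings with small $|I_i|$, some rank $h\in(|\mathcal{I}_j|/g,|\mathcal{I}_j|/s]$, and the sets $\mathcal{N}_h$ and $\mathcal{F}_{h,k}$. Extra clusterings from other parts would be harmless, since all triples are tried, but this rate on $\mathcal{I}_j$ is indispensable, and your bicriteria scheme does not deliver it. Nothing forces the core of $\mathcal{I}_j$ onto a single online center, since Meyerson-type facilities can split an optimal cluster. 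Conversely, a center with an $O(k\polylog m)$ budget may serve a huge part together with a small but costly $\mathcal{I}_j$ at comparable distances, so its samples contain essentially nothing from $\mathcal{I}_j$. Sampling at per-center, per-ring rates is not uniform over $\mathcal{I}_j$ either. Low-cost parts are not addressed at all. The constant $1.0151\gamma+1.99951$ is only anticipated, not derived. A core-restriction loss is charged to the outliers through $\dist(\mathcal{Z}_j^*,\widetilde{\mathcal{Z}}_j)$, so it scales with $1+r$ and not only with the $(\gamma+1)$-part. Moreover, the appendix's choices of $\alpha,\beta,c,s,g$ belong to the 1-median bounds. Finally, $O(k^2\polylog m)$ stored samples yield about $k^6$ triples, not the $k^3$ term in the query time.

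The paper never tries to isolate parts and does not reuse the 1-median lemma here. For every guessed scale $\ell$ and rate $p$ it keeps a set $\mathcal{S}_{\ell,p}$. Each arriving clustering is subsampled at rate $p$ and inserted only if it is at distance at least $\lambda\ell$ from every current member, and the set is emptied once it reaches $k\log^3 m$ elements. For a part $M_i$ with $O_i\ge\xi\mathrm{OPT}/k$, take $\ell\approx d_i=O_i/|M_i|$ and $p\approx\min(1,2\log^2 m/|M_i|)$. The separation rule keeps at most one clustering within $\lambda\ell/2$ of each other median. Markov then bounds the remaining intruders by $2\mathrm{OPT}/(\lambda\ell)=O(k|M_i|/(\lambda\xi))$ before subsampling, so the set survives the cap. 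Meanwhile $\Theta(\log^2 m)$ clusterings of $M_i$ are subsampled, each either retained or represented by a member within $\lambda\ell$. This suffices to hit subsets of $M_i$ of relative size $\Omega(1/\log m)$. The replacement effect is why the paper proves a new trichotomy (\cref{lem:either.input.or.triple}) instead of reusing the 1-median lemma:
(i) many clusterings lie near $\mathcal{C}^*_i$;
(ii) there is a dense middle band, costing $\gamma(1+\beta+\lambda)$, and $\beta=0.015$ here is exactly the source of $1.0151\gamma$;
(iii) there is a sparse middle band, which yields three clusterings with pairwise $|I_r\cap I_s|\le\alpha d_i$ to feed into $\mathrm{ClusterFitting}$.
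Parts with $O_i<\xi\mathrm{OPT}/k$ are covered by monotone faraway sampling, at total extra cost $\epsilon_1\mathrm{OPT}$. That is the tool's actual role, and your Step 2 misattributes it. \cref{lem:monotone.faraway.sampling} only guarantees one good representative per optimal cluster, up to an additive $\mu\mathrm{OPT}/k$; it is not a coreset. The paper selects the $k$-tuple with a separate merge-and-reduce coreset (\cref{lem:streaming.coreset}) taken with respect to the implicit candidate set of size $O(m^3)$.
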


We achieve this result by developing the following technical theorem.

\begin{theorem}\label{thm:streaming.main.theorem}
    Suppose there is an algorithm with running time $ \runtimeClsFair{n} $ that, given a clustering over $n$ points, computes a $\approxFactorClsFair$--close fair clustering, and let there be a $ \approxFactorFairCorCls $--approximation fair correlation clustering algorithm with running time $ \runtimeFairCorCls{n} $ on a graph with $n$ vertices.
Let $\alpha>0$, $ 1>\beta>0$, and $ \errCoreset,\infsmall > 0 $ be fixed parameters such that $ \alpha/2 > 2\beta(1+\beta)/(1-\beta) $. Then, for any set of $m$ clusterings over $n$ points, there exists a streaming algorithm that outputs $ ((1+\errCoreset)(r+\infsmall),k) $-approximate fair consensus clustering, where
    \begin{align}
        r = \max\bigl\{ &2+\approxFactorClsFair - ( \beta - \sampDF )(1+\approxFactorClsFair), \nonumber \\ 
                  &2+ \sampDF + \approxFactorClsFair(1+\beta+\sampDF) - \dfrac{\alpha\beta}{2(\beta+1)} + \dfrac{2\beta^{2}}{1-\beta} + o_{m}(1), \nonumber \\
                  &(1 + 3(\approxFactorClsFair+1)(\approxFactorCorCls+1)\alpha)\bigr\}.  \nonumber
    \end{align}
    The algorithm uses a space complexity of $ O(k^{2}n\polylog(mn)) $, has update time $ O((km)^{O(1)}n^{2}\log n) $, and query time $ O((k\log(mn))^{O(k)}n^{2} + k^{3}\runtimeFairCorCls{n}\log^{2} m\log^{3} n) $.
\end{theorem}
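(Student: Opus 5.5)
We would prove \cref{thm:streaming.main.theorem} in three stages: a structural analysis of the optimal solution, a sampling step, and a coreset-based selection step. Fix an optimal set of fair representatives $\mathsf{Z}^*=\{\m{Z}_1^*,\ldots,\m{Z}_k^*\}$ and let $\cset_1,\ldots,\cset_k$ be the induced partition of the input, with $\cset_j$ the clusterings assigned to $\m{Z}_j^*$. Write $\optconval_j=\objec(\cset_j,\m{Z}_j^*)$ and $\avgconval_j=\optconval_j/|\cset_j|$. On each $\cset_j$ we redo the 1-median analysis of \cref{sec:fair_consensus_clustering} and \cref{lem:fair.consensus.clustering.randomized}, using the sets $\unalignedSet{i}$ of pairs on which $\m{C}_i$ disagrees with $\m{Z}_j^*$. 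This should give three alternatives.
\begin{itemize}
\item Some $\beta$-fraction-type set of inputs has $|\unalignedSet{i}|\le(1-\beta)\avgconval_j$. Then the $\gamma$-close fair clustering of any such input costs at most $(2+\gamma-\beta(1+\gamma))\optconval_j$ on $\cset_j$.
\item A typical input $\m{C}_h$ has large overlap $|\unalignedSet{i}\cap\unalignedSet{h}|>\alpha\avgconval_j$ with many others. Then its fair proxy wins via the $-2|\unalignedSet{i}\cap\unalignedSet{h}|$ term, giving the second expression in $r$.
\item Three inputs have pairwise small overlaps. Then $\clsfitting$ on the triple costs at most $(1+3(\eta+1)\alpha)\optconval_j$, as in \cref{lem:cls.fitting.helps}. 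Here the fair correlation clustering is replaced by an unconstrained $\rho$-approximate correlation clustering followed by a $\gamma$-close fair clustering, which explains the $(\gamma+1)(\rho+1)$ factor.
\end{itemize}
The $\lambda$ slack and the $2\beta^2/(1-\beta)$ term come from the fact that we can only sample inputs whose cost is within a $(1\pm\lambda)$ band of the average rather than at prescribed ranks. The condition $\alpha/2>2\beta(1+\beta)/(1-\beta)$ is exactly what makes the middle case dominate.

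Next comes sampling. Unlike the 1-median case, uniform sampling fails, because small clusters $\cset_j$ may never be hit. We would instead maintain, in the insertion-only stream, a sensitivity or importance sample with respect to a bicriteria $O(1)$-approximate $k$-median solution maintained online. This yields $O(k\polylog(mn))$ samples per cluster with probability proportional to cost, and each sample stores $O(n)$ words, giving total space $O(k^2 n\polylog(mn))$. We then argue that every $\cset_j$ with non-negligible $\optconval_j$ receives, with high probability, $\Omega(\log m)$ samples distributed close enough to uniform within $\cset_j$ that the three-case analysis above goes through. Clusters with negligible cost are absorbed into the $\infsmall$ additive slack. The candidate set $\canset$ then consists of the $\gamma$-close fair clusterings of all samples together with $\clsfitting$ on all sampled triples.

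For selection, we would build a strong coreset of the input for $k$-median over the metric of clusterings, using monotone faraway sampling \cite{braverman2021metric} combined with merge-and-reduce. This has size $\poly(k\log(mn))$ and preserves the cost of every $k$-subset of candidates up to $(1\pm\epsilon)$. We enumerate all $k$-tuples from $\canset$, which takes $(k\log(mn))^{O(k)}n^2$ time, and return the one of minimum coreset cost. This yields the factor $(1+\epsilon)(r+\infsmall)$. Summing per-cluster guarantees works because each $\cset_j$ is charged to its own good candidate $\widetilde{\m{F}}_j$, and $\objec(\cset,\{\widetilde{\m{F}}_j\})\le\sum_j\objec(\cset_j,\widetilde{\m{F}}_j)$.

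The main obstacle is the sampling stage. We must guarantee, in one pass and without knowing $\mathsf{Z}^*$, that for every optimal cluster the samples land in the required rank windows of $|\unalignedSet{i}|$ with the right multiplicities. We would first try a two-level scheme: guess $\avgconval_j$ up to powers of $(1+\lambda)$, and within each guess sample uniformly among inputs whose distance to the online bicriteria centers lies in that band. We would then pay for the band width through the $\lambda$ terms in $r$.
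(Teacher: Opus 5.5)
\textbf{There is a genuine gap in the sampling stage.} Your per-cluster three-case picture is close to the paper's. However, the paper does not redo the rank-based 1-median analysis. It splits each $M_i$ by the thresholds $(1\pm\beta)d_i$, where $d_i=O_i/|M_i|$, into $\mathrm{Near}_i$, $\mathrm{Mid}_i$ and $\mathrm{Far}_i$, with $|\mathrm{Far}_i|\le|M_i|/(1+\beta)$. It then splits $\mathrm{Mid}_i$ into a dense and a sparse part, according to whether the ball $\{C'\in\mathrm{Mid}_i : |I_C\cap I_{C'}|\ge\alpha d_i\}$ has size at least $|\mathrm{Mid}_i|/4$.

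This decomposition is rank-free, so samples never need to land in prescribed rank windows. They only need to hit subsets of $M_i$ of density $\Omega(1/\log m)$, which takes about $\log^2 m$ uniform samples from $M_i$. The real obstacle, which you leave open, is getting these in one pass and small space. Your sensitivity-sampling / bicriteria-band idea does not, as stated, deliver this, for three reasons:
\begin{itemize}
\item distance to drifting online bicriteria centers is not $\dist(C,\mathcal{C}^*_i)$;
\item a single band mixes many optimal clusters;
\item the sampling distribution inside $M_i$ is governed by bicriteria distances rather than being uniform.
\end{itemize}
So nothing guarantees that $\mathrm{Near}_i$ or the dense part is hit.

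The missing idea is the paper's Step~1A. For guesses $\ell\approx d_i$ and $p\approx 2\log^2 m/|M_i|$ on geometric grids, keep each arriving input with probability $p$. Add it only if it is at distance at least $\lambda\ell$ from every stored input, and discard the whole set once it reaches size $k\log^3 m$. For the right guess the set survives. The $\lambda\ell$-separation allows at most one stored input per other cluster within $\lambda\ell/2$ of its center. By Markov, at most $2\mathrm{OPT}/(\lambda\ell)\le 4k|M_i|/(\lambda\xi)$ inputs lie farther out, and only a $p$-fraction of those is sampled.

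The price is that a rejected sample is represented only by a stored input within $\lambda\ell\le\lambda d_i$. This, not a sampling band, is where the $\lambda$ terms in $r$ come from. The term $2\beta^2/(1-\beta)$ is structural rather than a sampling artifact. It is the loss from the width of the Mid band, namely $|I_C|\le|I_{C'}|+\frac{2\beta(1+\beta)}{1-\beta}d_i$ for $C,C'\in\mathrm{Mid}_i$, set against the gain $\alpha d_i$ on at least $|\mathrm{Mid}_i|/4$ inputs.

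\textbf{Cheap clusters are not handled.} Clusters with $O_i\le\xi\mathrm{OPT}/k$ cannot just be absorbed into the $\epsilon_1$ slack. Your charging argument needs a good candidate for \emph{every} optimal cluster. If no candidate lies near $\mathcal{C}^*_i$, the inputs of a cheap but isolated $M_i$ pay their distance to other representatives, which is unbounded relative to $\mathrm{OPT}$. The paper uses monotone faraway sampling exactly here, not for the coreset. \cref{lem:monotone.faraway.sampling} with $\kappa=1/3$ gives a stored $C'_i$ with $\sum_{C\in M_i}\dist(C,C'_i)\le 5O_i+\mu\mathrm{OPT}/k$. Its $\gamma$-close fair proxy costs at most $(5+6\gamma)O_i+(1+\gamma)\mu\mathrm{OPT}/k$ on $M_i$, and this sums to at most $\epsilon_1\mathrm{OPT}$ over cheap clusters once $\xi$ and $\mu$ are small.

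\textbf{The coreset needs a restricted implicit set.} The paper builds the coreset via \cref{thm:coreset.construction} inside the streaming framework of \cref{lem:streaming.coreset}. It is taken with respect to the implicit set of all $O(m^3)$ possible candidates, namely the fair proxies and cluster-fitting outputs of all inputs, so its size depends only on $\log m$. A strong coreset over the whole space of clusterings would pay the logarithm of the number of clusterings of $V$, i.e.\ $\Theta(n\log n)$. That would break the $O(k^2n\polylog(mn))$ space bound.
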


Prior to presenting the algorithm, we introduce a few tools that will be used in the algorithm and its analysis.
\begin{definition}[$ ( k,\errCoreset ) $-coreset]
    Consider $ \inpconclss $ a set of points over a metric space $ \metSpace $ equipped with a distance function $ \dist(.,.) $, and an implicit set $ \impSet\subseteq \metSpace $ (of potential median), a weighted subset $ \coreset \subseteq \inpconclss $ (with a weight function $ w:\coreset \to \R $) is a $ ( k, \errCoreset ) $-coreset of $ \inpconclss $ with respect to $ \impSet $ for the $ k $-median if for any set $ Y\subseteq \impSet $ of size $ k $
    { \footnotesize
    \begin{align}
        (1-\errCoreset )\sum_{\concls{i}\in \inpconclss} \dist(\concls{i}, Y ) \leq \sum_{\cls{P}\in \coreset} w(\cls{P})\dist( Y, \cls{P}) \leq (1+\errCoreset )\sum_{\concls{i}\in \inpconclss} \dist(\concls{i}, Y)
    \end{align} 
    }
\end{definition}
We use the following coreset constructions.
\begin{theorem}[\cite{feldman2011unified, bachem2018one, braverman2021coresets}]\label{thm:coreset.construction}
    There is an algorithm that, given a set $ \inpconclss $ of $ m $ points of an arbitrary metric space $ \metSpace $ and an implicit set $ \impSet \subseteq \metSpace $ (without loss of generality assume $ \inpconclss\subseteq \impSet $), outputs a $ \errCoreset $-coreset of $ \inpconclss $ with respect to $ \impSet $ for the $ k $--median problem, of size $ O(\errCoreset^{-2}\log \card{\impSet} ) $. The algorithm succeeds with probability at least $ 1-\frac{1}{m^{2}} $ and runs in time $ O(\errCoreset^{-2}m\log \card{\impSet}) $.
\end{theorem}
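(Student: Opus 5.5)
The plan is to prove this with the standard \emph{sensitivity sampling} framework of~\cite{feldman2011unified, braverman2021coresets}, specialised to a finite set $\impSet$ of candidate medians. There are three steps.

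\textbf{Step 1: a rough solution.} Compute a constant-factor (possibly bicriteria) approximate $k$-median solution $A$ for $\inpconclss$. Because we may assume $\inpconclss\subseteq\impSet$, I will take $A$ from the input points themselves. A $D^2$-style seeding or the Mettu--Plaxton procedure gives such an $A$ in $O(mk)$ distance evaluations. This is the only step that touches all $m$ points, so it fixes the running time up to the sampling overhead.

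\textbf{Step 2: sensitivities and sampling.} For each $\concls{i}$, let $a(i)$ be its closest center in $A$, and let $A_{a(i)}$ be the cluster of $A$ that contains $\concls{i}$. Define the sensitivity upper bound
\[
\sigma_i=\frac{\dist(\concls{i},A)}{\objec(\inpconclss,A)}+\frac{1}{\card{A_{a(i)}}}.
\]
The triangle inequality shows that $\sigma_i$ dominates, up to a constant, the ratio $\dist(\concls{i},Y)/\sum_j\dist(\concls{j},Y)$ for every $Y\subseteq\impSet$ with $\card{Y}=k$. Summing gives total sensitivity $\mathfrak{S}=\sum_i\sigma_i=O(k)$. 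We then draw $N$ i.i.d.\ samples, each equal to $\concls{i}$ with probability $\sigma_i/\mathfrak{S}$, and give a sampled point the weight $\mathfrak{S}/(N\sigma_i)$. For every fixed $Y$, the weighted sum is an unbiased estimator of $\sum_i\dist(\concls{i},Y)$, and each term lies in $[0,O(\mathfrak{S})]$ times the mean.

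\textbf{Step 3: concentration and a union bound.} Applying Hoeffding/Bernstein to a fixed $Y$ gives relative error $\errCoreset$ except with probability $\exp(-\Omega(\errCoreset^2N/\mathfrak{S}))$. I would then take a union bound over all at most $\card{\impSet}^k$ candidate sets $Y$, using this as the effective dimension bound for a finite query space. With failure probability $1/m^2$, this requires $N=O(\errCoreset^{-2}\mathfrak{S}(k\log\card{\impSet}+\log m))$.

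\textbf{Main obstacle.} The hard step is matching the stated size $O(\errCoreset^{-2}\log\card{\impSet})$ without polynomial factors of $k$. The naive union bound above loses a $k^2$ factor, one $k$ from $\mathfrak{S}$ and one from the $k\log\card{\impSet}$ dimension term. My first attempt would be to treat $k$ as absorbed into the constants, since in our application $k$ is fixed and the paper's space bound already pays $k^2$. To genuinely sharpen it, I would instead invoke the group-sampling / chaining analysis of~\cite{braverman2021coresets}, which replaces the crude union bound by a chaining argument over the finite query space. The running time follows from the cost of Step 1 plus $O(N)$ sampling work.
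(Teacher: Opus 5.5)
There is no proof in the paper to compare against: the statement is imported from \cite{feldman2011unified, bachem2018one, braverman2021coresets}. Your argument is the Feldman--Langberg sensitivity-sampling route those works follow: a rough solution, sensitivity bounds with total $\mathfrak{S}=O(k)$, importance sampling with weights $\mathfrak{S}/(N\sigma_i)$, then concentration plus a union bound over the at most $\card{\impSet}^k$ queries. The approach is right, but two points need correcting.

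\textbf{The $k$-dependence.} Your ``main obstacle'' is a defect of the statement, and no sharper analysis can remove it. For $\errCoreset<1$, any $\errCoreset$-coreset must contain more than $k$ distinct points whenever $\inpconclss$ does. Otherwise choose $Y\subseteq\impSet$ with $\card{Y}=k$ containing all of $\coreset$, which is possible because $\coreset\subseteq\inpconclss\subseteq\impSet$. The weighted coreset cost of $Y$ is then $0$, while $\sum_i\dist(\concls{i},Y)>0$, violating the lower inequality. Hence a size bound $O(\errCoreset^{-2}\log\card{\impSet})$ with no $k$ is false for large $k$ (take constant $\errCoreset$, $\impSet=\inpconclss$ with distinct points, and $k=m/2$). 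The statement is only true with $k=O(1)$, or with a missing factor of $k$. The latter is how the paper actually uses it: \cref{lem:streaming.coreset} carries a factor $k$.

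So treating $k$ as a constant is the correct reading, not a stopgap. Refined analyses in the literature reduce your $O(\errCoreset^{-2}k^{2}\log\card{\impSet})$ to $O(\errCoreset^{-2}k\log\card{\impSet})$ up to lower-order factors. That removes one factor of $k$, never both. Your count of $k^2$ also relies on Bernstein with the variance bound $O(\mathfrak{S})$; Hoeffding from the range alone gives $\mathfrak{S}^2$, i.e.\ $k^3$.

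\textbf{The success probability.} The constant in your sensitivity bound, and hence $\mathfrak{S}$ and $N$, scales with the approximation factor of $A$. Seeding yields a constant-factor (bicriteria) $A$ only with constant probability or in expectation, and your union bound controls only the sampling stage. To reach overall success $1-1/m^{2}$, Step 1 must also succeed with probability $1-O(1/m^{2})$. You can get this by running $O(\log m)$ independent seedings and keeping the cheapest, at a cost of $O(mk\log m)$ distance evaluations, or by using an algorithm with a high-probability guarantee. For distinct inputs, $\inpconclss\subseteq\impSet$ gives $m\le\card{\impSet}$. So with $k=O(1)$, both this extra $\log m$ and the $\log m$ term in your $N$ are absorbed into the stated $O(\errCoreset^{-2}\log\card{\impSet})$ size and $O(\errCoreset^{-2}m\log\card{\impSet})$ time.
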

By combining~\cref{thm:coreset.construction} with the framework provided in~\cite{braverman2019streaming}, we obtain the following streaming coreset construction.
\begin{lemma}\label{lem:streaming.coreset}
    There is a (randomized) streaming algorithm that, given a set $ \inpconclss $ of $ m $ points of an arbitrary metric space $ \metSpace $, arriving in an insertion-only streaming model, and an implicit set $ \impSet \subseteq \metSpace $ (without loss of generality assume $ \inpconclss\subseteq \impSet $), maintains a $ \errCoreset $-coreset of $ \inpconclss $ with respect to $ \impSet $ for the $ k $--median problem, by storing at most $ O(\errCoreset^{-2}k\log \card{\metSpace} \log m) $ points of $ \inpconclss $. The algorithm has worst-case update time of $ ( \errCoreset^{-1}k\log m )^{O(1)} $.
\end{lemma}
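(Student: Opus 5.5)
We obtain the lemma by running the offline construction of \cref{thm:coreset.construction} inside the merge-and-reduce framework of~\cite{braverman2019streaming}. The stream delivers input clusterings one at a time; this is guaranteed by the contiguity property of the insertion-only model. We treat each clustering as a point of $\metSpace$ and buffer incoming points in blocks of size $B = \Theta(\errCoreset^{-2}k\log\card{\metSpace})$. Whenever a block fills, we run the coreset construction on it and obtain a weighted set of size $O(\errCoreset'^{-2}\log\card{\impSet})$. These block coresets then serve as the leaves of a binary merge tree. When two coresets sit at the same level, we take their union (a weighted set) and run the construction once more on this weighted union, which produces a coreset one level higher. At any moment at most $\lceil\log m\rceil$ levels are occupied, each holding a single coreset.

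The error accounting uses two standard facts. First, coresets compose under union: if $P_1$ is a $(k,\epsilon)$-coreset of $A_1$ and $P_2$ is one of $A_2$, then $P_1\cup P_2$ is a $(k,\epsilon)$-coreset of $A_1\cup A_2$. This follows by summing the two-sided inequalities for each fixed $Y\subseteq\impSet$ with $\card{Y}=k$. Second, coresets compose under re-application: a $(k,\epsilon_2)$-coreset of a $(k,\epsilon_1)$-coreset is a $(k,(1+\epsilon_1)(1+\epsilon_2)-1)$-coreset. The tree has depth $L=O(\log m)$, so we run every level with accuracy $\errCoreset'=\errCoreset/(c\log m)$. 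The accumulated factor is then $(1+\errCoreset')^{L}\le 1+\errCoreset$, and similarly on the lower side. For the final output we take the union of the coresets stored at all levels, which by the union property is an $\errCoreset$-coreset of everything seen so far. Failure probabilities are controlled by a union bound over the $O(m)$ invocations, each failing with probability at most $1/m^{2}$, possibly after amplifying the construction's success probability via the $\log$ factor.

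Space and update time follow from the parameters. We store $O(\log m)$ coresets of size $O(\errCoreset'^{-2}\log\card{\impSet})$ together with one partial block. Since $\impSet\subseteq\metSpace$, this gives $O(\errCoreset^{-2}\log^{3} m\,\log\card{\metSpace})$ stored points. The factor $k$ in the stated bound comes from the partial-block buffer. Alternatively, the sharper sensitivity-based bound of~\cite{braverman2019streaming} removes the extra logarithmic factors and yields the stated $O(\errCoreset^{-2}k\log\card{\metSpace}\log m)$. Each stored point is an input clustering, and the weights are kept alongside it. For the update time, we de-amortize in the standard way: the merges at each level are spread over subsequent insertions. Each merge costs polynomial time in its input size, which is $(\errCoreset^{-1}k\log m)^{O(1)}$ distance evaluations, so the worst-case update time is $(\errCoreset^{-1}k\log m)^{O(1)}$.

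The main obstacle is applying \cref{thm:coreset.construction} to weighted inputs, since the theorem is stated for an unweighted point set and the merged unions carry weights. To handle this, we would either invoke the weighted version of the sensitivity-sampling construction of~\cite{feldman2011unified} directly, or simulate each weight by replicated points, which affects only the running time and not correctness. We would also check that the implicit set $\impSet$ stays unchanged across all levels, so that the guarantee holds uniformly for every candidate $Y$.
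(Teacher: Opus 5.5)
Your composition facts are correct, but the merge-and-reduce argument does not give the space bound in the statement, and the sentence meant to reconcile the two does not work. With accuracy $\errCoreset'=\errCoreset/(c\log m)$ at each of the $O(\log m)$ levels, each stored coreset has $O(\errCoreset^{-2}\log^{2}m\,\log\card{\impSet})$ points. You therefore hold $O(\errCoreset^{-2}\log^{3}m\,\log\card{\metSpace})$ points plus a buffer of $\Theta(\errCoreset^{-2}k\log\card{\metSpace})$ raw points. The buffer adds to this total; it does not multiply it. The total is $O(\errCoreset^{-2}(k+\log^{3}m)\log\card{\metSpace})$, which exceeds $O(\errCoreset^{-2}k\log\card{\metSpace}\log m)$ whenever $k=o(\log^{2}m)$, e.g.\ for $k=1$. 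The surplus $\log^{2}m$ is the standard overhead of merge-and-reduce: every point is re-compressed $\Theta(\log m)$ times, so the error budget must be split among the levels, and no choice of block size removes it. Even with an offline bound carrying the usual factor $k$, the tree gives $O(\errCoreset^{-2}k\log\card{\metSpace}\log^{3}m)$, still $\log^{2}m$ too large.

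The paper's entire justification is the citation you fall back on in your ``alternatively'' sentence. That result is a different construction, not a refinement of your tree analysis, so nothing in your argument establishes the stated bound. The construction of \cite{braverman2019streaming} is designed specifically to avoid the merge-and-reduce tree. It maintains a bicriteria solution online and uses it to bound each point's sensitivity at arrival time. The stored set is then a single sensitivity sample of the original stream rather than a coreset of coresets, so the error is paid once instead of compounding over levels. The only streaming overhead is one $\log m$ factor for using an online rather than offline bicriteria solution. As written, your argument proves a weaker lemma, and the stated bound rests on that citation together with the fact that the constructions behind \cref{thm:coreset.construction} are of the sensitivity-sampling type the framework accepts.

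Two smaller points in your tree argument would also need repair if you kept it. First, the success probability $1-1/N^{2}$ in \cref{thm:coreset.construction} is in terms of the size $N$ of that call's input, which at internal nodes is only polylogarithmic in $m$. A union bound over the $\Theta(m/B)$ reductions therefore requires extracting a $\log(1/\delta)$ dependence from inside the construction. Second, replacing weights by replicated points does not work for the non-integral weights that sensitivity sampling produces. It would also make the update time depend on the total weight, which can be as large as $m$.
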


Another sampling technique we use is the \emph{monotone faraway sampling} introduced by~\cite{braverman2021metric}.
\begin{lemma}[Monotone Faraway Sampling~\cite{braverman2021metric}]\label{lem:monotone.faraway.sampling}
    There is a (randomized) streaming algorithm that, given a set $ \inpconclss $ of $ m $ points of an arbitrary metric space $ \metSpace $, arriving in an insertion-only streaming model, and parameters $ \mfskappa, \mfsrho \in (0,1) $, samples a subset $ \mfsSet \subseteq \inpconclss $ of size $ O(k^{2}(\mfsrho\mfskappa)^{-1}\log k\log(1+k\mfskappa m)) $ such that the following holds: Let $ \fairmediank = \{ \fairmediank{1}, \dots, \fairmediank{k}\} $ be an arbitrary optimum $ k $--median of $ \inpconclss $, and let $ \optclsk{1}, \dots, \optclsk{k} $ be the clustering of $ \inpconclss $ induced from $ \fairmediank $. Then for each $ i\in [k] $, there exists a $ \concls{i}'\in \mfsSet $ satisfying
    \begin{align}
        \sum_{\concls\in \optclsk{i}}\dist(\concls', \concls{i}') \leq 2\left(1 + \dfrac{1}{1-\mfskappa}\right) \sum_{\concls\in \optclsk{i}}\dist(\concls, \fairmediank{i}) + \mfsrho \dfrac{\optconval}{k}.\nonumber
    \end{align}
    The algorithm requires both space and update time of $ O(k^{2}(\mfsrho\mfskappa)^{-1}\log k\log(1+k\mfskappa m)) $.
\end{lemma}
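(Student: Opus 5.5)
This lemma is imported from~\cite{braverman2021metric}, so the plan is to reconstruct its argument rather than develop a new one. The guarantee has the form of a ``one good representative per optimal cluster'' statement, and I would prove it in two layers: a deterministic layer saying which representatives of $\optclsk{i}$ are good enough, and a probabilistic layer saying the sampler keeps one of them.

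\textbf{Deterministic layer.} Fix $i$ and write $\mathrm{cost}_i=\sum_{\concls\in\optclsk{i}}\dist(\concls,\fairmediank{i})$ and $a_i=\mathrm{cost}_i/\card{\optclsk{i}}$.
\begin{itemize}
\item By Markov's inequality, at least a $\mfskappa$-fraction of $\optclsk{i}$ lies in the set $G_i$ of points with $\dist(\concls,\fairmediank{i})\le a_i/(1-\mfskappa)$.
\item For any $\concls'\in G_i$, the triangle inequality gives
\[
\sum_{\concls\in\optclsk{i}}\dist(\concls,\concls')\le \mathrm{cost}_i+\card{\optclsk{i}}\frac{a_i}{1-\mfskappa}=\Bigl(1+\frac{1}{1-\mfskappa}\Bigr)\mathrm{cost}_i .
\]
\item The extra factor $2$ allows the retained representative $\concls_i'$ to be only \emph{close} to some $\concls'\in G_i$ rather than in $G_i$ itself. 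If $\dist(\concls_i',\concls')\le \dist(\concls',\fairmediank{i})+O(a_i)$, then a second triangle inequality bounds the cost of $\concls_i'$ by $2(1+\frac{1}{1-\mfskappa})\mathrm{cost}_i$.
\item The additive term $\mfsrho\,\optconval/k$ pays for clusters that are too cheap, or too small, to be reliably hit by sampling.
\end{itemize}

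\textbf{Algorithm.} I would describe the sampler as follows. It maintains online a bicriteria set of centers $A$, obtained with an online facility-location style procedure, together with a running estimate of $\optconval$. Each arriving point $\concls$ is kept with probability
\[
\sampRate(\concls)=\min\Bigl\{1,\ \lambda\Bigl(\frac{k\,\dist(\concls,A)}{\mfsrho\,\widehat{\optconval}}+\frac{1}{\mfskappa\,\card{\text{cluster of }\concls\text{ in }A}}\Bigr)\Bigr\},
\]
with $\lambda=\Theta(k\log k)$. The sampling is made \emph{monotone} by drawing one threshold $u_{\concls}\in[0,1]$ per point and keeping $\concls$ while $u_{\concls}\le \sampRate(\concls)$. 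Since probabilities only decrease as $A$ and $\widehat{\optconval}$ evolve, points are only ever discarded, never resurrected. The size bound then follows by summing $\sampRate$ in the usual sensitivity-style way: the first term sums to $O(k\lambda/\mfsrho)$, the second to $O(k\lambda/\mfskappa)$ per scale, and there are $O(\log(1+k\mfskappa m))$ scales of the estimate.

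\textbf{Probabilistic layer (case split per optimal cluster $i$).}
\begin{itemize}
\item \emph{Case (a): $\mathrm{cost}_i$ is large relative to $\mfsrho\,\optconval/k$, or $G_i$ is far from $A$.} The distance term gives each point of $G_i$ total probability mass $\Omega(\log k)$, so some point of $G_i$ survives with probability $1-1/\poly(k)$, and we take it as $\concls_i'$.
\item \emph{Case (b): $G_i$ sits inside a cluster of $A$.} The size term samples a $\mfskappa$-fraction of that cluster, so some point within $O(\dist(G_i,A))$ of $G_i$ is kept, and the factor-$2$ bound above applies.
\item \emph{Degenerate clusters.} If neither case applies, then $\card{\optclsk{i}}\,\dist(\fairmediank{i},A)=O(\mfsrho\,\optconval/k)$. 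Any retained point of the nearby $A$-cluster then works, and the additive term absorbs the loss.
\end{itemize}
A union bound over the $k$ clusters finishes the argument.

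\textbf{Main obstacle.} The difficulty is case (b) together with monotonicity. The probabilities used at the end differ from those at arrival time, so I must show that the final, smaller probability still yields a surviving point of $G_i$. I would handle this by charging against the state of $A$ at the moment the last point of $G_i$ arrives, noting that later changes to $A$ only move centers closer. This step I expect to be the delicate one.
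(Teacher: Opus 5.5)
The paper gives no proof of this lemma. It is quoted from \cite{braverman2021metric} and used as a black box, so your reconstruction has to stand on its own, and as written it has genuine gaps. Your deterministic layer is fine. One caveat: to reach the stated constant, your route needs $\dist(\concls_i',\concls')\le\dist(\concls',\fairmediank{i})+a_i$ exactly, not $+O(a_i)$. An $O(1)$ loss would be harmless for the paper's own use, since $\mfsSet$ only serves clusters with $\optobjk{i}\le\optKFac\optconval/k$, but it is not enough for the lemma as stated.

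The main gap is that the probabilistic layer never produces such a representative. In case (b), the size term gives each point of an $A$-cluster $Q$ probability about $\lambda/(\mfskappa\card{Q})$. This hits any $\mfskappa$-fraction of $Q$, but $G_i$ is a $\mfskappa$-fraction of $\optclsk{i}$, not of $Q$. Its mass $\lambda\card{G_i}/(\mfskappa\card{Q})$ is small when $Q$ also absorbs many points of other optimal clusters. A kept $q\in Q$ with center $a$ then only satisfies $\dist(q,G_i)\le\dist(q,a)+\dist(a,G_i)$, and $\dist(q,a)$ bears no relation to $a_i$. So ``some point within $O(\dist(G_i,A))$ of $G_i$ is kept'' does not follow.

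The degenerate case fails for the same reason. The additive term pays for $\card{\optclsk{i}}\dist(\fairmediank{i},A)$, not for the distance from an arbitrary retained point to its center, and as described your sampler does not put the centers of $A$ into $\mfsSet$. Case (a) is also mis-split: a large $\mathrm{cost}_i$ puts no distance-term mass on $G_i$ when $A$ has a center essentially at $\fairmediank{i}$.

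Monotonicity is a second gap, and for your rule it is false. When a new center opens, the points reassigned to it move from a large $A$-cluster to a tiny one, so $1/(\mfskappa\card{\text{cluster}})$ jumps up. If centers are merged when $\widehat{\optconval}$ is raised, $\dist(\concls,A)$ can grow as well. Even granting non-increasing probabilities, a point survives iff $u_{\concls}\le\sampRate_{\mathrm{final}}(\concls)$, so what you must lower-bound is the final probability. Your fix, charging against the state of $A$ when the last point of $G_i$ arrives and noting that centers only come closer afterwards, points the wrong way: centers coming closer is exactly what shrinks the distance term and triggers discards.

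A repair along your own lines is as follows. Drop the size term. Let $A$ only grow, driven by coins independent of the thresholds $u_{\concls}$, so that probabilities are genuinely non-increasing and survival events are independent given $A$. Put the centers of $A$ into $\mfsSet$. Let $a$ be the final center nearest $\fairmediank{i}$ and $\Delta=\dist(a,\fairmediank{i})$. There are two cases.
\begin{itemize}
\item If $\card{\optclsk{i}}\Delta\le(1+\frac{2}{1-\mfskappa})\mathrm{cost}_i+\mfsrho\optconval/k$, then $\concls_i'=a$ costs at most $\mathrm{cost}_i+\card{\optclsk{i}}\Delta\le 2(1+\frac{1}{1-\mfskappa})\mathrm{cost}_i+\mfsrho\optconval/k$, which is the stated bound.
\item Otherwise every $\concls\in G_i$ has $\dist(\concls,A)\ge\Delta-\frac{a_i}{1-\mfskappa}>\mfsrho\optconval/(k\card{\optclsk{i}})$. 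Unless some point of $G_i$ is kept with probability one, the final distance-term mass on $G_i$ then exceeds $\lambda\mfskappa\optconval/\widehat{\optconval}$. This is $\Omega(\log k)$ once $\lambda$ is a large multiple of $\log k/\mfskappa$ and $\widehat{\optconval}=O(\optconval)$.
\end{itemize}
What this repair leaves open is bounding $\card{A}$ for an only-growing $A$ within the stated space.
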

Algorithms in~\cref{lem:streaming.coreset} and~\cref{lem:monotone.faraway.sampling} succeeds with probability at least $ \frac{9}{10} $. Note that in our consensus clustering problem, the underlying metric space $ \metSpace $ consists of the set of all clusterings over $V$, and each input clustering (in $ \inpconclss $) is a point in the metric space $ \metSpace $.

\paragraph{Description of the Algorithm.} Our algorithm consists of four components. The first three components process simultaneously the input clusterings as they arrive in the stream to establish a candidate set and construct a coreset. By using such sets, the last component simulates the offline algorithm presented in~\cref{sec:k-median} to output $ k $ fair clusterings as $ k $-median.

$ \bullet $\textbf{Step 1.} We simultaneously do the following steps as the clusterings from the input set $ \inpconclss $ arrives in the stream.

\>\>\setword{\textbf{Step 1A}}{StepOneA} \textbf{(Sampling Algorithm)} We choose constants $ \sampDF, \rate>0 $, the value of which to be fixed later. For each $ \sampDist \in \{\frac{1}{2},\frac{1}{2}(1+\rate),\frac{1}{2}(1+\rate)^{2}, \dots, n^{2}\} $ and $ p \in \{1, \frac{1}{1+\rate}, \frac{1}{(1+\rate){2}}, \dots, \frac{1}{m} \} $, we construct a set $ \sampSet{\sampDist, \sampRate}\subset \inpconclss $ as follows.
\begin{enumerate}[label = \textbf{Step \alph* }, leftmargin=1.8 cm]
    \item For each arrived $ \concls{i} $, discard it with probability $ 1-p $.\label{enu.sample}
    \item If $ \dist(\concls{i}, \concls{j})\geq \sampDF\sampDist $, for all $ \concls{j}\in \sampSet{\sampDist, \sampRate} $, add $ \concls{i} $ to $ \sampSet{\sampDist, \sampRate} $. \label{enu.add.to.sample.set}
    \item If $ \card{\sampSet{\sampDist, \sampRate}}\geq k\log^{3} m $, set $ \sampSet{\sampDist, \sampRate} = \emptyset $. \label{enu.remove.large.sample.set}
\end{enumerate}
Set $ \sampSet = \cup_{\sampDist, \sampRate}\sampSet{\sampDist, \sampRate} $.

\>\>\setword{\textbf{Step 1B}}{StepOneB} \textbf{(Monotone Faraway Sampling)} As the clusterings from the input set $ \inpconclss $ arrives in the stream, we run the algorithm from~\cref{lem:monotone.faraway.sampling} with parameters $ \mfskappa = 1/3 $ and a constant $ \mfsrho>0 $. Let $ \mfsSet $ be the output of this algorithm. Then, for any arbitrary optimal $ k $--median clusterings $ \fairmediank = \{ \fairmediank{1}, \dots, \fairmediank{k}\} $ of $ \inpconclss $, and the corresponding super-clusters $ \optclsk{1}, \dots, \optclsk{k} $, for each $ i\in [k] $, there exists a $ \concls{i}'\in \mfsSet $ satisfying
\begin{align}
    \sum_{\concls\in \optclsk{i}}\dist(\concls, \concls{i}') \leq 5 \sum_{\concls\in \optclsk{i}}\dist(\concls, \fairmediank{i}) + \mfsrho \dfrac{\optconval}{k}.\nonumber
\end{align}

\>\>\setword{\textbf{Step 1C}}{StepOneC} \textbf{(Coreset Construction)} We consider the candidate set $ \candidateSet $ as in~\eqref{eq.def.candidateSet}. For each $ \concls{i}\in \inpconclss $, $ \candidateSet $ contains $ \fairconcls{i} $, a $ \approxFactorClsFair $-close fair clustering to $ \concls{i} $. For every triple $ \tripleconcls=(\concls{i}, \concls{j}, \concls{k}) $, $ \candidateSet $ contains $ \fairtriple = \clsfitting{\{\tripleconcls\}} $. 

\>\>As the clusterings from the input set $ \inpconclss $ arrives in the stream, we run the algorithm from~\cref{lem:streaming.coreset}
with a constant $ \errCoreset > 0 $ to build a $ (k,\errCoreset) $-coreset $ (\coreset,w) $ with respect to the implicit set $ \candidateSet $.

$ \bullet $ \textbf{Step 2 (Simulating the Offline Algorithm)}
We use a candidate set $ \rcandidateSet $ by iterating through each clustering $ \concls{i} \in \sampSet\cup \mfsSet $, including $ \fairconcls{i} $ in $ \rcandidateSet $, where $ \fairconcls{i} $ is a $ \approxFactorClsFair $-close fair clustering to $ \concls{i} $. Moreover, for every triple $ \tripleconcls = \{ \concls{i}, \concls{j}, \concls{k} \} \subseteq \sampSet $, we include $ \fairtriple = \clsfitting{\tripleconcls} $ in $ \rcandidateSet $. Finally, for each $ k $-tuple $ Y = (\cls{Y}_{1}, \dots, \cls{Y}_{k})\in \rcandidateSet^{k} $, we compute the objective $ \obj{\coreset, Y} $ and return the $ k $-tuple minimizing this value. We defer the entire analysis to the $\loc$.

%% file: appendix.tex
\section{Missing Details from Section~\ref{sec:fair_consensus_clustering}}
\textbf{Running time analysis.} The input contains $ m $ clusterings, and computing a $ \approxFactorClsFair $-close fair clustering to each of them takes $ \runtimeClsFair{n} $ time. Hence, the execution from~\cref{alg.line.start.loop.best.from.input} to~\cref{alg.line.end.loop.best.from.input} takes $ O(m\runtimeClsFair{n}) $ time. There are at most $ O(m^{3}) $ subset of $ \inpconclss $ of size exactly $ 3 $. For each such subset, $ \clsfitting $ requires $ O(n^{2}) $ time to construct the graph $ G $. Applying a fair correlation clustering algorithm for $ G $ to obtain $ \fairtriple $ runs in $ \runtimeFairCorCls{n} $ time. Therefore, procedure $ \clsfitting $ runs in $ O(n^{2}+ \runtimeFairCorCls{n}) $ time. Summing up, computing the set $ \candidateSet $ (~\cref{alg.line.start.loop.best.from.input} to~\cref{alg.line.end.cls.fitting}) takes $ O(m\runtimeClsFair{n} + m^{3}(n^{2}+ \runtimeFairCorCls{n})) $ time. 

To determine $ \argmin_{\outconcls\in \candidateSet}\obj{\inpconclss, \outconcls} $, we calculate $ \obj{\inpconclss, \outconcls} $ for each $ \outconcls\in \candidateSet $. Each such calculation runs in $ O(mn^{2}) $ time, as it requires $ O(n^{2}) $ to compute $ \dist( \outconcls, \concls{i} ) $ for each $ \concls{i}\in \inpconclss $. Since there are $ O(m^{3}) $ candidate clusterings in $ \candidateSet $, the step determining $ \argmin_{\outconcls\in \candidateSet}\obj{\inpconclss, \outconcls} $ runs in $ O(m^{4}n^{2}) $ time.

Overall, the running time of~\cref{alg.fair.consensus.clustering} is $ O(m^{4}n^{2} + m\runtimeClsFair{n} + m^{3}\runtimeFairCorCls{n}) $ time.

\textbf{Analyzing the approximation factor.} Let $ \optconval $ denote the minimum value of $ \obj{\inpconclss} $. Let $ \alpha \geq 0,\ 1>\beta \geq 0$ and $ c > 1 $ be some positive parameters. Recall that $ \approxFactorClsFair $ is the approximation ratio of the algorithm for closest fair clustering, and $ \approxFactorFairCorCls $ is the approximation ratio of the algorithm for fair correlation clustering. It suffices to show that there exists a clustering $ \outconcls $ in the candidate set $ \candidateSet $ such that $ \obj{\inpconclss, \outconcls} \leq r\cdot\optconval $. This bound is derived as a direct consequence of the following key lemma.

\begin{lemma}\label{lem:approximation.factor.consensus.clustering}
    Let $ \candidateSet $ be defined as in~\eqref{eq.def.candidateSet}. Then one of the following holds:
    \begin{enumerate}
        \item There is a set of three clusterings $ \tripleconcls = \{\concls{i}, \concls{j}, \concls{k} \} $ such that if $ \fairtriple = \clsfitting{ \tripleconcls } $, then $ \obj{\inpconclss, \fairtriple} \leq (1+3(\approxFactorFairCorCls+1)\alpha)\optconval $;
        \item There is a clustering $ \concls{i} $ such that $ \obj{\inpconclss, \fairconcls{i}} \leq \max\{\approxFactorClsFair + 2 - (\approxFactorClsFair+1)\beta, \approxFactorClsFair + 2 -\frac{2\alpha}{c}, \approxFactorClsFair +2 +(\approxFactorClsFair+1)\frac{\beta}{c-1}-2\left(1- \frac{1}{c}\right)\alpha\}\optconval $.
    \end{enumerate}
\end{lemma}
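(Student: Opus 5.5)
Throughout, let $\optconcls$ be an optimal fair consensus clustering and $\avgconval=\optconval/m$. For each $i$, let $\unalignedSet{i}$ be the set of pairs on which $\concls{i}$ and $\optconcls$ disagree, so that $\card{\unalignedSet{i}}=\dist(\concls{i},\optconcls)$ and $\sum_i\card{\unalignedSet{i}}=\optconval$. Let $S_h=\{\concls{j}: \card{\unalignedSet{j}\cap\unalignedSet{h}}>\alpha\avgconval\}$, and index the inputs so that $\card{\unalignedSet{1}}\le\dots\le\card{\unalignedSet{m}}$.

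\textbf{Basic inequality.} Everything rests on one bound. Since $\optconcls$ is fair, $\dist(\concls{h},\fairconcls{h})\le\approxFactorClsFair\card{\unalignedSet{h}}$. Also $\dist(\concls{i},\concls{h})=\card{\unalignedSet{i}\triangle\unalignedSet{h}}$. The triangle inequality then gives
\[
\obj{\inpconclss,\fairconcls{h}}\le\sum_{i}\bigl(\card{\unalignedSet{i}}+(\approxFactorClsFair+1)\card{\unalignedSet{h}}-2\card{\unalignedSet{i}\cap\unalignedSet{h}}\bigr)=\optconval+m(\approxFactorClsFair+1)\card{\unalignedSet{h}}-2\sum_i\card{\unalignedSet{i}\cap\unalignedSet{h}}.
\]
Each case below only has to bound $\card{\unalignedSet{h}}$ from above and the total overlap from below.

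\textbf{Case analysis.}
\begin{enumerate}
\item If $\card{\unalignedSet{1}}\le(1-\beta)\avgconval$, take $h=1$ and drop the overlap term. This gives the bound $\approxFactorClsFair+2-(\approxFactorClsFair+1)\beta$.
\item Otherwise every $\card{\unalignedSet{i}}>(1-\beta)\avgconval$. An averaging argument, as in \cref{lem:extended.S1.small.Sl.large} with $g=\infty$, gives $\card{\unalignedSet{h}}\le(1+\frac{\beta}{c-1})\avgconval$ for all $h\le m/c$.
\item If $\card{S_1}\ge m/c$, use $h=1$ with $\card{\unalignedSet{1}}\le\avgconval$. The overlap term contributes at least $2\alpha\avgconval\cdot m/c$, giving $\approxFactorClsFair+2-2\alpha/c$.
\item If $\card{S_1}<m/c$, some $\concls{k}$ with $k\le m/c+1$ lies outside $S_1$, i.e.\ $\card{\unalignedSet{k}\cap\unalignedSet{1}}\le\alpha\avgconval$. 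The off-by-one from $m/c+1$ versus $m/c$ needs care, either by a slightly adjusted $c$ or by a more careful count.
\begin{enumerate}
\item If some $\concls{\ell}$ has overlap at most $\alpha\avgconval$ with both $\unalignedSet{1}$ and $\unalignedSet{k}$, we are in the triple case, handled below.
\item Otherwise every $\concls{j}$ lies in $S_1\cup S_k$, so $\card{S_k}\ge m(1-1/c)$. Combined with $\card{\unalignedSet{k}}\le(1+\frac{\beta}{c-1})\avgconval$, the basic inequality gives $\approxFactorClsFair+2+(\approxFactorClsFair+1)\frac{\beta}{c-1}-2(1-\frac1c)\alpha$.
\end{enumerate}
\end{enumerate}

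\textbf{Triple case (cluster fitting).} Let $T=\{\concls{a},\concls{b},\concls{\ell}\}$ have pairwise overlaps $\card{\unalignedSet{x}\cap\unalignedSet{y}}\le\alpha\avgconval$, and let $G$ be the majority graph built in $\clsfitting{T}$. A pair on which $\optconcls$ disagrees with the majority must lie in at least two of the three sets $\unalignedSet{a},\unalignedSet{b},\unalignedSet{\ell}$. Hence $\costcor{\optconcls}\le3\alpha\avgconval$. Since $\optconcls$ is fair, the returned $\fairtriple$ has $\costcor{\fairtriple}\le3\approxFactorFairCorCls\alpha\avgconval$. Both clusterings are compared against the same complete sign pattern, so $\dist(\optconcls,\fairtriple)\le\costcor{\optconcls}+\costcor{\fairtriple}\le3(\approxFactorFairCorCls+1)\alpha\avgconval$. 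Summing $\dist(\concls{i},\fairtriple)\le\card{\unalignedSet{i}}+\dist(\optconcls,\fairtriple)$ over all $i$ yields $(1+3(\approxFactorFairCorCls+1)\alpha)\optconval$.

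\textbf{Main obstacle.} The delicate step is the last branch of the case analysis. There I must choose $k$ simultaneously outside $S_1$ and with index at most about $m/c$, so that both the averaging bound on $\card{\unalignedSet{k}}$ and the covering bound on $\card{S_k}$ hold, while keeping the exact constants in the stated maximum.
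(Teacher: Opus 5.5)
Your proposal is essentially the paper's proof. It uses the same basic inequality (via $|I_i\triangle I_h|$ and $\gamma$-closeness measured against the fair $\mathcal{F}^*$), the same three $\gamma$-close cases, and the same bad-pair argument for cluster fitting. The three cases are: small $|I_1|$; $|S_1|\ge m/c$; otherwise the first clustering outside $S_1$, with the covering bound $|S_k|\ge m(1-1/c)$. The only difference is organizational. The paper disposes of the good-triple case first and then assumes every triple has some pairwise overlap exceeding $\alpha\,\mathrm{AVG}$, whereas you invoke the triple inside your last branch.

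The off-by-one you flagged closes exactly. Do not perturb $c$, since that changes the constants being claimed. Let $k$ be the smallest index with $|I_1\cap I_k|\le\alpha\,\mathrm{AVG}$; this is the paper's $\ell$, and it exists because otherwise $S_1$ would contain all $m$ inputs. Then $\mathcal{C}_1,\dots,\mathcal{C}_{k-1}\in S_1$, so $k-1\le |S_1|<m/c$. The averaging argument gives
\[
m\,\mathrm{AVG} > (k-1)(1-\beta)\,\mathrm{AVG} + (m-k+1)\,|I_k|,
\qquad\text{so}\qquad
|I_k| < \Bigl(1+\frac{\beta(k-1)}{m-k+1}\Bigr)\mathrm{AVG},
\]
and $\frac{k-1}{m-k+1}\le\frac{1}{c-1}$ is equivalent to $c(k-1)\le m$. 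So the averaging bound needs only $h-1\le m/c$, not $h\le m/c$, and the constant $1+\beta/(c-1)$ survives unchanged. The paper's auxiliary lemma is stated for $h\le m/c$, but its proof uses only this weaker condition, and that is how it is applied to $\ell$.

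One caveat, shared with the paper. You need $\mathcal{C}_1\in S_1$, so that $k\ge 2$. You need $\mathcal{C}_k\in S_k$, so that $S_1\cup S_k$ covers all inputs in branch 4(b). And in 4(a) the third clustering must be distinct from $\mathcal{C}_1$ and $\mathcal{C}_k$. All three follow from $|I_i|>(1-\beta)\mathrm{AVG}\ge\alpha\,\mathrm{AVG}$, i.e.\ from the implicit assumption $\alpha\le 1-\beta$. This holds for the parameter choices used later.
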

We begin by stating some definitions and proving auxiliary lemmas, from which~\cref{lem:approximation.factor.consensus.clustering} follows.

Fix an optimal fair clustering $ \optconcls $, then by definition, $ \obj{\inpconclss, \optconcls} = \optconval $. Denote $ \avgconval = \frac{\optconval}{m} $.

For each clustering $ \concls{i} $, define $ \unalignedSet{i} $ to be the set of pairs of points $ (a,b) $ in $ V $ such that if $ a $ and $ b $ are together (resp.\ separated) in $ \optconcls $, then $ a $ and $ b $ are separated (resp.\ together) in $ \concls{i} $. In other words, by definition, $ \dist(\concls{i}, \optconcls) = \card{\unalignedSet{i}} $. 

\begin{lemma}\label{lem:cluster.fitting.helps}
    If there is a set $ T = \{\concls{i}, \concls{j}, \concls{k}\} $ such that, for all $ r\neq s \in \{i,j,k\} $, we have $ \card{\unalignedSet{r} \cap \unalignedSet{s}} \leq \alpha\avgconval $, then $ \obj{\inpconclss, \fairtriple} \leq (1 + 3(\approxFactorFairCorCls+1)\alpha)\optconval $.
\end{lemma}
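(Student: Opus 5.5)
The plan has three steps: bound the distance from the majority graph of $T$ to the optimum $\optconcls$, transfer that bound to $\fairtriple$ via the $\approxFactorFairCorCls$-approximation, and finish with a triangle-inequality-type argument over all inputs. Let $G$ be the graph built by $\clsfitting{T}$ from the majority of $\concls{i},\concls{j},\concls{k}$. A pair $(a,b)$ is labelled in $G$ against its status in $\optconcls$ exactly when at least two of the three clusterings disagree with $\optconcls$ on it. That is, the set of such pairs is $D := (\unalignedSet{i}\cap\unalignedSet{j})\cup(\unalignedSet{j}\cap\unalignedSet{k})\cup(\unalignedSet{i}\cap\unalignedSet{k})$. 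Hence the correlation cost of the fair clustering $\optconcls$ on $G$ is $\costcor{\optconcls}=\card{D}\le 3\alpha\avgconval$.

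Since $\optconcls$ is fair, the optimal fair correlation cost on $G$ is at most $3\alpha\avgconval$. Therefore $\costcor{\fairtriple}\le 3\approxFactorFairCorCls\alpha\avgconval$. Every pair on which $\fairtriple$ and $\optconcls$ disagree either violates the majority label in $\fairtriple$ or lies in $D$. Hence
\[
\dist(\fairtriple,\optconcls)\le \costcor{\fairtriple}+\card{D}\le 3(\approxFactorFairCorCls+1)\alpha\avgconval .
\]
Here I use that the majority label is defined for every pair, because the three clusterings give a strict majority.

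To conclude, apply the triangle inequality for $\dist$ (a symmetric-difference metric on pair sets) to each input clustering: $\dist(\concls{\ell},\fairtriple)\le \card{\unalignedSet{\ell}}+\dist(\optconcls,\fairtriple)$. Summing over the $m$ inputs gives
\[
\obj{\inpconclss,\fairtriple}\le \optconval + m\cdot 3(\approxFactorFairCorCls+1)\alpha\avgconval=(1+3(\approxFactorFairCorCls+1)\alpha)\optconval,
\]
using $m\avgconval=\optconval$.

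The only delicate step is identifying the disagreement set between $G$ and $\optconcls$ as $D$. This needs the pairwise-intersection hypothesis together with a union bound $\card{D}\le\sum\card{\unalignedSet{r}\cap\unalignedSet{s}}$, and care that a pair lying in all three sets is counted at most once; that only helps the bound. Everything else is routine.
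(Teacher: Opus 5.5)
Your proof is correct and follows the paper's argument essentially step for step. The paper defines the same bad-pair set $B=(\unalignedSet{i}\cap\unalignedSet{j})\cup(\unalignedSet{j}\cap\unalignedSet{k})\cup(\unalignedSet{k}\cap\unalignedSet{i})$, observes $\costcor{\optconcls}=\card{B}\le 3\alpha\avgconval$ on the majority graph, deduces $\dist(\fairtriple,\optconcls)\le(1+\approxFactorFairCorCls)\card{B}$, and sums the triangle inequality through $\optconcls$ over all $m$ inputs; one small nit is that the pairwise-intersection hypothesis is needed only to bound $\card{D}$, not to identify $D$ as the set where $G$ disagrees with $\optconcls$, which holds unconditionally.
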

\begin{proof}
    We define a set of bad pairs
    \begin{align}
        B = (\unalignedSet{i} \cap \unalignedSet{j}) \cup (\unalignedSet{j} \cap \unalignedSet{k}) \cup (\unalignedSet{k} \cap \unalignedSet{i}), \nonumber
    \end{align}
    that is, $ B $ contains all pairs of points $ (a,b) $ such that $ a,b $ are together (resp.\ separated) in a majority of clusterings in $ T $, while in $ \optconcls $, $ a $ and $ b $ are separated (resp.\ together). 

    Note that $ \card{B} \leq 3\alpha\avgconval $, as $ \card{\unalignedSet{r} \cap \unalignedSet{s}} \leq \alpha\avgconval $, for all $ r\neq s \in \{i,j,k\} $.

    Let $ G $ be a graph with vertex set $ V $ and the edge set $ E(G) = E^{+}(G)\cup E^{-}(G) $, where
    \begin{align}
    E^{+}(G) &= \left\{(a,b)\mid a,b\in V,\, 
        a \text{ and } b \text{ are together} \right. \nonumber\\
             &\quad \left. \text{in at least } 2 \text{ clusterings among } \concls{i}, \concls{j}, \concls{k}\right\}, \text{ and} \nonumber\\
    E^{-}(G) &= \left\{(a,b)\mid a,b\in V,\, 
        a \text{ and } b \text{ are separated} \right. \nonumber\\
             &\quad \left. \text{in at least } 2 \text{ clusterings among } \concls{i}, \concls{j}, \concls{k} \right\}.\nonumber
\end{align}
    Observe that for each pair of vertices $ (a,b) $, if $ (a,b)\notin B $, then $ a,b $ are together (resp.\ separated) in $ \optconcls $ if $ (a,b) $ is an edge in $ E^{+}(G) $ (resp.\ $ E^{-}(G) $). On the other hand, if $ (a,b)\in B $, then $ a,b $ are separated (resp.\ together) if $ (a,b) $ is an edge in $ E^{+}(G) $ (resp.\ $ E^{-}(G) $). Therefore, if for every pair $ (a,b)\in B $, we flip their labels (from ``$+$'' to  ``$-$'' and vice versa) we obtained a new graph $ G' $ such that, any two vertices in a same cluster in $ \optconcls $ are connected by an ``$+$'' edge, and any two vertices is different clusters in $ \optconcls $ are connected by a ``$-$'' edge. Note that, the correlation cost of a clustering $ \concls $ of $ G $ can be viewed as the number of pairs $ (a,b) $ whose labels need to be flipped so that, the edges inside every cluster of $ \concls $ are all ``$+$'' edges, and the edges between the clusters of $ \concls $ are all ``$-$'' edges. Hence, $ \optconcls $ has a  correlation cost $ \costcor{\optconcls} = \card{B} $.

    Let $ \fairtriple $ be the output of $ \clsfitting{\tripleconcls} $ (\cref{alg.cluster.fitting}). In other words, $ \fairtriple $ is a clustering obtained by running a $ \approxFactorFairCorCls $-approximation fair correlation clustering algorithm on $ G $. Then $ \costcor{\fairtriple}\leq \approxFactorFairCorCls\card{B} $, as the cost of an optimal fair correlation clustering is at most $ \card{B} $. Moreover, the number of pairs $ (a,b) $ that are clustered differently between $ \fairtriple $ and $ \optconcls $ is at most, every pair in $ B $ and the $ \approxFactorFairCorCls\card{B} $ pairs that are flipped by the approximation fair correlation clustering algorithm. As a result, we get

    \begin{align}
        \dist(\fairtriple, \optconcls) &\leq \card{B} + \approxFactorFairCorCls\card{B} \nonumber \\
                                          &\leq (1+\approxFactorFairCorCls)3 \alpha\avgconval &( \text{as }\card{B}\leq 3 \alpha\avgconval ). \nonumber
    \end{align}
    It remains to bound the objective value of $ \fairtriple $.
    \begin{align}
        \obj{\inpconclss, \fairtriple} &= \sum_{i=1}^{m}\dist(\concls{i}, \fairtriple) \nonumber \\
                                       &\leq \sum_{i=1}^{m}(\dist(\concls{i}, \optconcls) + \dist(\optconcls, \fairtriple) ) \nonumber \\
                                       &\leq \optconval + m(1+\approxFactorFairCorCls)3 \alpha\avgconval \nonumber \\
                                       &= (1+3(\approxFactorFairCorCls+1) \alpha)\optconval. \nonumber
    \end{align}
\end{proof}

For the rest of this section, we can assume that, for all triples $ \{\concls{i}, \concls{j}, \concls{k}\} \subseteq \inpconclss $:
\begin{align}
     \exists r\neq s\in \{i,j,k\} \text{ such that } \card{\unalignedSet{r}\cap \unalignedSet{s}}> \alpha\avgconval. \label{eq.no.bad.set.condition}
\end{align}
\begin{lemma}\label{lem:if.unaligned.is.small}
    If there is a clustering $ \concls{i} $ such that $ \card{\unalignedSet{i}}\leq (1-\beta)\avgconval $, then $ \obj{\inpconclss, \fairconcls{i}}\leq (\approxFactorClsFair + 2 - (\approxFactorClsFair+1)\beta)\optconval $.
\end{lemma}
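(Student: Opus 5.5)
We bound the objective of $\fairconcls{i}$ pair by pair, using $\optconcls$ as a reference. The argument has three steps.

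\textbf{Step 1: $\fairconcls{i}$ is close to $\optconcls$.} Since $\optconcls$ is itself a fair clustering, the closest fair clustering $\m{N}^*$ to $\concls{i}$ satisfies
\[
\dist(\concls{i}, \m{N}^*) \leq \dist(\concls{i}, \optconcls) = \card{\unalignedSet{i}}.
\]
Because $\fairconcls{i}$ is $\approxFactorClsFair$-close, $\dist(\concls{i}, \fairconcls{i}) \leq \approxFactorClsFair\card{\unalignedSet{i}}$. The triangle inequality for $\dist$ (a symmetric-difference metric on pair sets) then gives
\[
\dist(\fairconcls{i}, \optconcls) \leq \dist(\fairconcls{i}, \concls{i}) + \dist(\concls{i}, \optconcls) \leq (\approxFactorClsFair+1)\card{\unalignedSet{i}}.
\]

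\textbf{Step 2: bound the objective.} For every $j$, apply the triangle inequality once more:
\[
\dist(\concls{j}, \fairconcls{i}) \leq \dist(\concls{j}, \optconcls) + \dist(\optconcls, \fairconcls{i}) \leq \card{\unalignedSet{j}} + (\approxFactorClsFair+1)\card{\unalignedSet{i}}.
\]
Summing over $j \in [m]$ yields
\[
\obj{\inpconclss, \fairconcls{i}} \leq \optconval + m(\approxFactorClsFair+1)\card{\unalignedSet{i}}.
\]

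\textbf{Step 3: plug in the hypothesis.} Using $\card{\unalignedSet{i}} \leq (1-\beta)\avgconval$ and $m\avgconval = \optconval$, we get
\[
\obj{\inpconclss, \fairconcls{i}} \leq \bigl(1 + (\approxFactorClsFair+1)(1-\beta)\bigr)\optconval = \bigl(\approxFactorClsFair + 2 - (\approxFactorClsFair+1)\beta\bigr)\optconval,
\]
which is exactly the claimed bound.

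This lemma does not need the finer per-pair estimate $\card{\unalignedSet{j}} + (\approxFactorClsFair+1)\card{\unalignedSet{i}} - 2\card{\unalignedSet{j}\cap\unalignedSet{i}}$ that later lemmas use. The only point requiring care is the comparison with $\optconcls$ in Step 1, and it holds precisely because $\optconcls$ is fair. I expect no real obstacle.
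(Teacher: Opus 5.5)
Your proof is correct and is the paper's argument. The paper uses the same three-term triangle inequality $\dist(\concls{j}, \fairconcls{i}) \leq \dist(\concls{j}, \optconcls) + \dist(\optconcls, \concls{i}) + \dist(\concls{i}, \fairconcls{i})$, bounds the last term by $\approxFactorClsFair\card{\unalignedSet{i}}$ because $\optconcls$ is fair, and sums over $j$; grouping the last two terms first as $\dist(\fairconcls{i}, \optconcls) \leq (\approxFactorClsFair+1)\card{\unalignedSet{i}}$ is only a cosmetic difference.
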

\begin{proof}
    By the triangle inequality, we have
    \begin{align}
        &\>\>\>\>\>\obj{\inpconclss, \fairconcls{i}} = \sum_{j=1}^{m}\dist(\concls{j}, \fairconcls{i}) \nonumber \\
                                          &\leq \sum_{j=1}^{m}(\dist(\concls{j},\optconcls) + \dist(\optconcls, \concls{i})+ \dist(\concls{i}, \fairconcls{i}))  \nonumber \\
                                          &\leq \optconval + m\dist(\optconcls, \concls{i}) + m\approxFactorClsFair \dist(\optconcls, \concls{i}) \nonumber \\
                                          &\leq \optconval + (1+\approxFactorClsFair)(1-\beta)m\avgconval  \nonumber \\
                                          &= (\approxFactorClsFair + 2 - (\approxFactorClsFair+1)\beta)\optconval, \nonumber
    \end{align}
    where the second inequality is due to the fact that $ \fairconcls{i} $ is a $ \approxFactorClsFair $-close fair clustering to $ \concls{i} $, and the last inequality follows from the assumption that $ \dist(\optconcls,\concls{i}) = \card{\unalignedSet{i}}\leq (1-\beta)\avgconval $.
\end{proof}
From now on, we assume that
\begin{align}
    \card{\unalignedSet{i}} > (1-\beta)\avgconval, \text{ for all } \unalignedSet{i}.\label{eq.no.small.unalignedSet}
\end{align}
Moreover, without loss of generality, we can assume $ \card{\unalignedSet{1}}\leq \card{\unalignedSet{2}} \leq \dots \leq \card{\unalignedSet{m}} $. It follows that $ \card{\unalignedSet{1}}\leq \avgconval $. 

Let $ \ell $ be the smallest index such that $ \card{\unalignedSet{1}\cap \unalignedSet{\ell}}\leq \alpha\avgconval $ (if such an $ \ell $ does not exists, set $ \ell = m+1 $). Then for all $ i < \ell $, $ \card{\unalignedSet{i}\cap \unalignedSet{1}}> \alpha\avgconval $. Define
\begin{align}
    S_{i} = \{\concls{j}| \card{\unalignedSet{i}\cap \unalignedSet{j}} > \alpha\avgconval\}.\nonumber
\end{align}
The next lemma establishes that if $ \card{S_{1}} $ is large, then $ \fairconcls{1} $ gives us a good approximation solution. Prior to that, we provide the following observation, which will be used in the proofs.
\begin{align}
    \dist(\concls{i}, \concls{j}) = \card{\unalignedSet{i}} + \card{\unalignedSet{j}} - 2 \card{\unalignedSet{i} \cap \unalignedSet{j}}. \label{eq.dist.unalignedSet.relation}
\end{align}
\begin{lemma}\label{lem:first.closest.fair.clustering.is.good}
    Suppose that~\eqref{eq.no.bad.set.condition} and~\eqref{eq.no.small.unalignedSet} hold. Let $ c > 1 $ satisfy $ \card{S_{1}}\geq \frac{m}{c} $, then
    \begin{align}
        \obj{\inpconclss, \fairconcls{1}} \leq (\approxFactorClsFair + 2 - \dfrac{2 \alpha}{c}) \optconval. \nonumber
    \end{align}
\end{lemma}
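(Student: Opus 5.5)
The plan is to bound $\obj{\inpconclss, \fairconcls{1}}$ term by term, exactly as in the proof of \cref{lem:Sh.large.is.good}, but now with $h=1$. For each $j$, the triangle inequality through $\concls{1}$ and the $\approxFactorClsFair$-closeness of $\fairconcls{1}$ give
\[
\dist(\concls{j},\fairconcls{1}) \le \dist(\concls{j},\concls{1}) + \dist(\concls{1},\fairconcls{1}) \le \dist(\concls{j},\concls{1}) + \approxFactorClsFair\,\dist(\concls{1},\optconcls).
\]
The second inequality holds because $\optconcls$ is itself fair, so the closest fair clustering to $\concls{1}$ is at distance at most $\dist(\concls{1},\optconcls)=\card{\unalignedSet{1}}$. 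Applying identity \eqref{eq.dist.unalignedSet.relation} to the first term then yields
\[
\dist(\concls{j},\fairconcls{1}) \le \card{\unalignedSet{j}} + (\approxFactorClsFair+1)\card{\unalignedSet{1}} - 2\card{\unalignedSet{1}\cap\unalignedSet{j}}.
\]

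Next I sum over $j=1,\dots,m$. The first terms contribute $\sum_j \card{\unalignedSet{j}} = \optconval$. For the second term, the ordering $\card{\unalignedSet{1}}\le\dots\le\card{\unalignedSet{m}}$ gives $\card{\unalignedSet{1}}\le\avgconval$, since the minimum is at most the average. Hence the second terms contribute at most $(\approxFactorClsFair+1)m\avgconval = (\approxFactorClsFair+1)\optconval$. For the negative terms I drop every $j\notin S_1$, since each intersection is nonnegative. For every $\concls{j}\in S_1$, the definition of $S_1$ gives $\card{\unalignedSet{1}\cap\unalignedSet{j}}>\alpha\avgconval$. The subtracted amount is therefore at least $2\card{S_1}\alpha\avgconval \ge 2\frac{m}{c}\alpha\avgconval = \frac{2\alpha}{c}\optconval$.

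Combining the three contributions gives $\obj{\inpconclss,\fairconcls{1}}\le (\approxFactorClsFair+2-\frac{2\alpha}{c})\optconval$. Assumptions \eqref{eq.no.bad.set.condition} and \eqref{eq.no.small.unalignedSet} are not actually needed for this bound; they only record the case we are in. There is no real obstacle here. The one point needing care is the closeness step, where we use that $\fairconcls{1}$ is $\approxFactorClsFair$-close and compare against the fair $\optconcls$ to get $\dist(\concls{1},\fairconcls{1})\le\approxFactorClsFair\card{\unalignedSet{1}}$.
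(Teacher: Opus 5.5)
Your proposal is correct and follows essentially the same argument as the paper. It uses the triangle inequality through $\concls{1}$, the bound $\dist(\concls{1},\fairconcls{1})\le\approxFactorClsFair\card{\unalignedSet{1}}$ (which holds because $\optconcls$ is fair), identity \eqref{eq.dist.unalignedSet.relation}, the bound $\card{\unalignedSet{1}}\le\avgconval$, and keeps only the subtracted terms for $\concls{j}\in S_1$. Where the paper writes an equality when restricting to $S_1$, your inequality (from dropping nonnegative terms) is the accurate form, and you are right that \eqref{eq.no.bad.set.condition} and \eqref{eq.no.small.unalignedSet} are never used; only the ordering giving $\card{\unalignedSet{1}}\le\avgconval$ matters.
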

\begin{proof}
    Recall that $ \fairconcls{1} $ is a $ \approxFactorClsFair $-close fair clustering to $ \concls{1} $, hence, $ \dist(\concls{1}, \fairconcls{1}) \leq \approxFactorClsFair \dist(\concls{1}, \optconcls) = \approxFactorClsFair \card{\unalignedSet{1}} $. By triangle inequality and by~\cref{eq.dist.unalignedSet.relation}, we have
    \begin{align}
        \obj{\inpconclss, \fairconcls{1}} &= \sum_{i=1}^{m}\dist(\concls{i}, \fairconcls{1}) \nonumber \\
                                          &\leq \sum_{i=1}^{m} (\dist(\concls{i}, \concls{1})+\dist(\concls{1}, \fairconcls{1})) \nonumber \\
                                          &\leq \sum_{i=1}^{m}(\card{\unalignedSet{i}}+\card{\unalignedSet{1}} - 2\card{\unalignedSet{i}\cap \unalignedSet{1}} + \approxFactorClsFair \dist(\concls{1}, \optconcls))  \nonumber \\
                                          &= \sum_{i=1}^{m}(\card{\unalignedSet{i}}+(\approxFactorClsFair+1)\card{\unalignedSet{1}}) - 2 \sum_{\concls{i}\in S_{1}}\card{\unalignedSet{i}\cap \unalignedSet{1}} \nonumber \\
                                          &\leq (\approxFactorClsFair+2)\optconval - \dfrac{2\alpha}{c}\optconval \nonumber\\
                                          &= \left(\approxFactorClsFair + 2 - \dfrac{2 \alpha}{c} \right)\optconval.\nonumber
    \end{align}
    The last inequality is due to $ \card{\unalignedSet{1}}\leq \avgconval $,  $ \card{\unalignedSet{1}\cap \unalignedSet{i}} >\alpha \avgconval $ for all $ \concls{i}\in S_{1} $, and $ \card{S_{1}}\geq \frac{m}{c} $.
\end{proof}

In the remaining case, when $ \card{S_{1}}< \frac{m}{c} $, we demonstrate that $ \fairconcls{\ell} $ is a good approximation clustering. To illustrate that, we need an upper bound for $ \card{I_{\ell}} $, which is obtained by the following lemma.

\begin{lemma}\label{lem:S1.small.Sl.large}
    Suppose that~\eqref{eq.no.small.unalignedSet} holds. Then for any constant $ c > 1 $ and an index $ h $ satisfying $ h\leq \frac{m}{c} $, we have $ \card{I_{h}}\leq (1+\frac{\beta}{c-1})\avgconval $.
\end{lemma}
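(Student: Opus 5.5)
The plan is an averaging argument over the sorted sequence of unaligned sets, the same as in the streaming analog \cref{lem:extended.S1.small.Sl.large}, but now without the $t$ low-cost clusterings. Under \eqref{eq.no.small.unalignedSet}, every $\card{\unalignedSet{i}}$ exceeds $(1-\beta)\avgconval$. We also use the ordering $\card{\unalignedSet{1}}\leq \dots\leq \card{\unalignedSet{m}}$. Writing $\optconval = m\avgconval = \sum_{i=1}^{m}\card{\unalignedSet{i}}$, we split the sum at $h$.

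For the indices $i\leq h-1$, we lower bound each term by $(1-\beta)\avgconval$ using \eqref{eq.no.small.unalignedSet}. For the indices $i\geq h$, we lower bound each term by $\card{\unalignedSet{h}}$ using the ordering. This gives
\[
m\avgconval \geq (h-1)(1-\beta)\avgconval + (m-h+1)\card{\unalignedSet{h}},
\]
and hence
\[
\card{\unalignedSet{h}} \leq \frac{m-(h-1)(1-\beta)}{m-h+1}\avgconval = \left(1 + \frac{\beta(h-1)}{m-h+1}\right)\avgconval .
\]

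The last step is to bound the fraction $\frac{\beta(h-1)}{m-h+1}$ by $\frac{\beta}{c-1}$. It equals $\beta/\bigl(\frac{m}{h-1}-1\bigr)$ when $h>1$, and it vanishes when $h=1$. Since $h\leq m/c$, we have $\frac{m}{h-1} > \frac{m}{h}\geq c$, so the denominator exceeds $c-1>0$. This yields $\card{\unalignedSet{h}}\leq (1+\frac{\beta}{c-1})\avgconval$.

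There is no real obstacle here. The only care needed is the $h=1$ edge case, and checking that the denominator $m-h+1$ stays positive, which holds because $h\leq m/c<m$.
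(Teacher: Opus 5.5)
Your proposal is correct and follows the paper's proof essentially line for line. Like the paper, you write $\optconval = m\avgconval$ as the sum of the $\card{\unalignedSet{i}}$, split the sum at $h$, bound the first $h-1$ terms below by $(1-\beta)\avgconval$ and the remaining terms below by $\card{\unalignedSet{h}}$, and finish with $\frac{m}{h-1} > c$. Your separate handling of $h=1$ and your non-strict inequality are, if anything, slightly more careful than the paper, whose strict inequality and division by $h-1$ tacitly exclude that edge case.
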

\begin{proof}
    Representing $ \optconval=m\avgconval $ as the sum off all $ \unalignedSet{i} $'s, we get that
    \begin{align}
        m\avgconval &= \sum_{i=1}^{m}\card{ \unalignedSet{i} } = \sum_{i=1}^{h-1}\card{\unalignedSet{i}} + \sum_{i=h}^{m}\card{\unalignedSet{i}}\\
                   &> (h-1)(1-\beta)\avgconval + (m-h+1)\card{\unalignedSet{h}}. \nonumber 
    \end{align}
    This implies 
    \begin{align}
        \card{\unalignedSet{h} } &< \dfrac{m-(1-\beta)(h-1)}{m-h+1}\avgconval = \left(1 +  \dfrac{\beta(h - 1)}{m-h+1} \right)\avgconval \nonumber \\
                                    &= \left(1+\dfrac{\beta}{\frac{m}{h-1}-1}\right) \avgconval. \nonumber
    \end{align}
    This completes the proof, since the assumption $h \leq \frac{m}{c}$ implies that $\frac{m}{h-1} > c$.
\end{proof}

From the definitions of $ S_{1} $ and $ \ell $, we have $ \ell-1\leq \card{S_{1}}\leq \frac{m}{c} $. Applying~\cref{lem:S1.small.Sl.large} with $ h = \ell $, we obtain the $ \card{\unalignedSet{\ell}} \leq (1+ \frac{\beta}{c-1}\avgconval) $. We are ready to demonstrate that $ \fairconcls{\ell} $ is a good approximation clustering.

\begin{lemma}\label{lem:ell.closest.fair.clustering.is.good}
    Suppose that~\eqref{eq.no.bad.set.condition} and~\eqref{eq.no.small.unalignedSet} hold. Let $ c > 1 $ satisfy $ \card{S_{1}} < \frac{m}{c} $. Then 
    \begin{align}
        \obj{\inpconclss, \fairconcls{\ell}} \leq \left(3+\dfrac{2\beta}{c-1}-2\left(1- \dfrac{1}{c}\right)\alpha\right)\optconval.\nonumber
    \end{align}
\end{lemma}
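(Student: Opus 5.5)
The plan is to follow the template of \cref{lem:first.closest.fair.clustering.is.good}, with $\concls{1}$ replaced by $\concls{\ell}$. Two ingredients are needed: an upper bound on $\card{\unalignedSet{\ell}}$, and a lower bound on $\card{S_{\ell}}$. The general form of the bound I aim for is $\approxFactorClsFair+2+(\approxFactorClsFair+1)\frac{\beta}{c-1}-2(1-\frac1c)\alpha$, which is the third term in \cref{lem:approximation.factor.consensus.clustering}. It coincides with the displayed expression when $\approxFactorClsFair=1$. For $\approxFactorClsFair>1$ the displayed constant $3+\frac{2\beta}{c-1}$ does not follow from this argument, so I read the statement as a typo for the general form.

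\textbf{Step 1: $\ell$ exists and is small.} If $\ell=m+1$, then every $\concls{i}$ lies in $S_{1}$, so $\card{S_{1}}=m\geq m/c$, which contradicts the hypothesis. Hence $\ell\leq m$. For every $1<i<\ell$ we have $\concls{i}\in S_{1}$, so $\ell-1\leq\card{S_{1}}<m/c$. Now I invoke \cref{lem:S1.small.Sl.large} with $h=\ell$. Its proof only uses $\frac{m}{h-1}>c$, and this holds because $\ell-1<m/c$. This gives $\card{\unalignedSet{\ell}}\leq(1+\frac{\beta}{c-1})\avgconval$.

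\textbf{Step 2: $S_{\ell}$ is large.} This is the key combinatorial step. By the definition of $\ell$, $\card{\unalignedSet{1}\cap\unalignedSet{\ell}}\leq\alpha\avgconval$. Take any $j\notin\{1,\ell\}$ with $\concls{j}\notin S_{1}\cup S_{\ell}$. Then the triple $\{\concls{1},\concls{\ell},\concls{j}\}$ has all three pairwise intersections at most $\alpha\avgconval$, which contradicts \eqref{eq.no.bad.set.condition}. So every $j\notin\{1,\ell\}$ lies in $S_{1}\cup S_{\ell}$.

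The two indices $1$ and $\ell$ themselves need care. Index $1$ is excluded from $S_{\ell}$, whereas $\ell\in S_{\ell}$ exactly when $\card{\unalignedSet{\ell}}>\alpha\avgconval$. These boundary indices only cost an additive $O(1)$ in the count. This $O(1)$ loss is the main obstacle: a cleaner route is to count only over $i\neq 1$ and check that the inequality $\card{S_{1}}<m/c$ (strict) absorbs the loss. Modulo this, $\card{S_{\ell}}\geq m-\card{S_{1}}\geq(1-\frac1c)m$.

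\textbf{Step 3: bound the objective.} Using the triangle inequality through $\concls{\ell}$, the $\approxFactorClsFair$-closeness $\dist(\concls{\ell},\fairconcls{\ell})\leq\approxFactorClsFair\card{\unalignedSet{\ell}}$, and \eqref{eq.dist.unalignedSet.relation}, I get
\begin{align*}
\obj{\inpconclss,\fairconcls{\ell}}\leq\sum_{i=1}^{m}\card{\unalignedSet{i}}+(\approxFactorClsFair+1)m\card{\unalignedSet{\ell}}-2\sum_{\concls{i}\in S_{\ell}}\card{\unalignedSet{i}\cap\unalignedSet{\ell}}.
\end{align*}
I then substitute three facts into this inequality: the first sum equals $\optconval$; Step 1 bounds the second term by $(\approxFactorClsFair+1)(1+\frac{\beta}{c-1})\optconval$; and each term of the last sum exceeds $\alpha\avgconval$, so by Step 2 the last sum is at least $(1-\frac1c)\alpha\optconval$. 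Together these yield $\bigl(\approxFactorClsFair+2+(\approxFactorClsFair+1)\frac{\beta}{c-1}-2(1-\frac1c)\alpha\bigr)\optconval$, which equals the stated bound when $\approxFactorClsFair=1$.
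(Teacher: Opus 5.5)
Your argument is the paper's own: bound $\card{\unalignedSet{\ell}}$ via \cref{lem:S1.small.Sl.large} with $h=\ell$, get $\card{S_{\ell}}\geq(1-\frac1c)m$ from \eqref{eq.no.bad.set.condition}, and apply the triangle inequality through $\concls{\ell}$. Your reading of the constant is right. The paper's proof actually ends with $\approxFactorClsFair+2+(\approxFactorClsFair+1)\frac{\beta}{c-1}-2(1-\frac1c)\alpha$, so the displayed $3+\frac{2\beta}{c-1}$ is just the $\approxFactorClsFair=1$ instance. Your remark that \cref{lem:S1.small.Sl.large} only needs $\ell-1<m/c$ is also more careful than the paper, which only has $\ell\leq m/c+1$ and not the stated hypothesis $h\leq m/c$.

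The one weak spot is the boundary bookkeeping, and the fix you propose for it does not work. The strict inequality $\card{S_1}<m/c$ cannot absorb an additive loss of one index. For example, with $m=10$ and $c=3$ you get $\card{S_1}\leq 3$, and losing one index leaves only $\card{S_\ell}\geq 6<\frac{20}{3}$.

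Fortunately there is no loss to absorb. Assume $\alpha\leq 1-\beta$; every parameter choice in the paper satisfies this, and the paper's proof tacitly needs it too. Then \eqref{eq.no.small.unalignedSet} gives, for every $i$,
\[
\card{\unalignedSet{i}\cap\unalignedSet{i}}=\card{\unalignedSet{i}}>(1-\beta)\avgconval\geq\alpha\avgconval,
\]
so $\concls{i}\in S_i$ for all $i$. Three consequences follow:
\begin{enumerate}
\item $\ell\geq 2$ and $\concls{1}\in S_1$. So every $i<\ell$, including $i=1$, lies in $S_1$, and $\ell-1\leq\card{S_1}$. Your count over $1<i<\ell$ only yields $\ell-2\leq\card{S_1}$, which is not enough to conclude $\ell-1<m/c$.
\item $\concls{\ell}\in S_\ell$.
\item Combined with your triple argument for $j\notin\{1,\ell\}$, you get $\card{S_1}+\card{S_\ell}\geq m$ exactly.
\end{enumerate}
This exact covering is what the paper's sentence ``for every $\concls{i}\in\inpconclss$'' silently relies on. With it, Step~3 goes through as you wrote it.
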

\begin{proof}
    By definition of $ \ell $, $ \card{\unalignedSet{1}\cap \unalignedSet{\ell}}\leq \alpha\avgconval $. From~\eqref{eq.no.bad.set.condition}, it follows that for every $ \concls{i}\in \inpconclss $, it must be that $ \card{\unalignedSet{i}\cap \unalignedSet{1}}> \alpha\avgconval\ (\concls{i}\in S_{1}) $ or $ \card{\unalignedSet{i}\cap \unalignedSet{\ell}}> \alpha\avgconval\ (\concls{i}\in S_{\ell}) $. Therefore, $ \card{S_{1}}+ \card{S_{\ell}} \geq m $, which leads to $ \card{S_{\ell}}\geq m(1-\frac{1}{c}) $, as $ \card{S_{1}}<\frac{m}{c} $. By the triangle inequality, we have
    \begin{align}
        &\>\>\>\>\>\>\obj{\inpconclss, \fairconcls{\ell}} = \sum_{i=1}^{m} \dist(\concls{i}, \fairconcls{\ell}) \nonumber \\ 
        &\leq \sum_{i=1}^{m}\left(\dist(\concls{i},\concls{\ell}) + \dist(\concls{\ell}, \fairconcls{\ell})\right) \nonumber \\
        &\leq \sum_{i=1}^{m} (\card{\unalignedSet{i}}+(\approxFactorClsFair + 1)\card{\unalignedSet{\ell}}-2\card{\unalignedSet{i}\cap \unalignedSet{\ell}}) &  \nonumber \\ 
        &\leq \optconval + (\approxFactorClsFair + 1)\left(1+\dfrac{\beta}{c-1}\right)\optconval - \card{S_{\ell}}2 \alpha\avgconval \nonumber \\ 
        &\leq \left(\approxFactorClsFair + 2 +(\approxFactorClsFair+1 )\dfrac{\beta}{c-1}-2\left(1- \dfrac{1}{c}\right)\alpha\right)\optconval ,\nonumber
    \end{align}
    where the second inequality follows from~\cref{eq.dist.unalignedSet.relation} and $ \dist(\concls{\ell}, \fairconcls{\ell})\leq \approxFactorClsFair\card{\unalignedSet{\ell}} $, the third inequality is due to $ \card{\unalignedSet{\ell}} \leq (1+ \frac{\beta}{1+c})\avgconval $ and the definition of $ S_{\ell} $, and the last inequality follows from $ \card{S_{\ell}}\geq m(1-\frac{1}{c}) $.
\end{proof}
\begin{proof}[Proof of~\cref{lem:approximation.factor.consensus.clustering}]
    If there is a set of three clustering $ T = \{\concls{i}, \concls{j}, \concls{k}\}\subseteq \inpconclss $ such that for all $ r\neq s\in \{i,j,k\} $: $ \card{\unalignedSet{r}\cap \unalignedSet{s}}\leq \alpha\avgconval $, then by~\cref{lem:cluster.fitting.helps}, $ \obj{\inpconclss, \fairtriple} \leq (1 + 3(\approxFactorFairCorCls+1)\alpha)\optconval $. Otherwise, by~\cref{lem:if.unaligned.is.small},~\cref{lem:first.closest.fair.clustering.is.good}, and~\cref{lem:ell.closest.fair.clustering.is.good}, there is a clustering $ \concls{i}\in \inpconclss $ satisfying $ \obj{\inpconclss, \fairconcls{i}} \leq \max(\approxFactorClsFair + 2 - (\approxFactorClsFair+1)\beta, \approxFactorClsFair + 2 -\frac{2\alpha}{c}, \approxFactorClsFair +2 +(\approxFactorClsFair+1)\frac{\beta}{c-1}-2\left(1- \frac{1}{c}\right)\alpha)\optconval $, where $ \fairconcls{i} $ is a $ \approxFactorClsFair $-close fair clustering to $ \concls{i} $.
\end{proof}

Before proving~\cref{thm:explicit.fair.consensus.clustering}, we describe how to obtain a $ \approxFactorFairCorCls = (\approxFactorClsFair\approxFactorCorCls + \approxFactorClsFair + \approxFactorCorCls) $-approximation correlation clustering algorithm, provided that we have a $ \approxFactorCorCls $-approximation correlation clustering algorithm.

\begin{lemma}\label{lem:approx.fair.correlation.clustering}
    Suppose that $ G $ is a complete graph with the edge set $ E(G) = E^{+}(G)\cup E^{-}(G) $. Let $ \tripleconcls' $ be a $ \approxFactorCorCls $-approximation correlation clustering of $ G $, and let $ \fairtriple $ be a $ \approxFactorClsFair $-close fair clustering to $ \tripleconcls' $. Then $ \fairtriple $ is $ (\approxFactorClsFair\approxFactorCorCls + \approxFactorClsFair + \approxFactorCorCls) $-approximation fair correlation clustering of $ G $.
\end{lemma}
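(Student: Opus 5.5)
Fix an optimal fair correlation clustering $\mathcal{O}$ of $G$, with cost $\mathrm{OPT}_{f} = \costcor{\mathcal{O}}$, and let $\mathrm{OPT}_{c}$ be the optimal unconstrained correlation cost. The whole argument rests on one identity: for any clustering $\concls$ of $V$, $\costcor{\concls} = \dist(\concls, \concls_G)$, where $\concls_G$ is the (generally non-transitive) ``labelling'' of pairs by $G$. Concretely, $\costcor{\concls}$ counts the pairs on which $\concls$ disagrees with the edge labels, and $\dist(\concls,\concls')$ counts the pairs on which two clusterings disagree. Pairwise disagreement counts obey the triangle inequality even when one argument is an arbitrary $\pm$ labelling. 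Indeed, if pair $(a,b)$ is misclassified by $\concls$ with respect to $G$, then either $\concls'$ also misclassifies it, or $\concls$ and $\concls'$ disagree on it. Hence
\[
\costcor{\concls} \leq \costcor{\concls'} + \dist(\concls, \concls').
\]

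Apply this with $\concls = \fairtriple$ and $\concls' = \tripleconcls'$:
\[
\costcor{\fairtriple} \leq \costcor{\tripleconcls'} + \dist(\tripleconcls', \fairtriple) \leq \approxFactorCorCls\,\mathrm{OPT}_{c} + \approxFactorClsFair\,\dist(\tripleconcls', \mathcal{N}^*),
\]
where $\mathcal{N}^*$ is a closest fair clustering to $\tripleconcls'$. The second step uses that $\tripleconcls'$ is a $\approxFactorCorCls$-approximation and that $\fairtriple$ is $\approxFactorClsFair$-close to $\tripleconcls'$. Since $\mathcal{O}$ is fair, $\dist(\tripleconcls', \mathcal{N}^*) \leq \dist(\tripleconcls', \mathcal{O})$. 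Bounding $\dist(\tripleconcls',\mathcal{O})$ is the only step needing care. Again by the pairwise argument, any pair on which $\tripleconcls'$ and $\mathcal{O}$ disagree is misclassified with respect to $G$ by at least one of them. Therefore
\[
\dist(\tripleconcls', \mathcal{O}) \leq \costcor{\tripleconcls'} + \costcor{\mathcal{O}} \leq \approxFactorCorCls\,\mathrm{OPT}_{c} + \mathrm{OPT}_{f}.
\]

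Finally, $\mathrm{OPT}_{c} \leq \mathrm{OPT}_{f}$, since the fair clusterings form a subset of all clusterings. Combining the bounds gives
\[
\costcor{\fairtriple} \leq \approxFactorCorCls\,\mathrm{OPT}_{f} + \approxFactorClsFair(\approxFactorCorCls + 1)\,\mathrm{OPT}_{f} = (\approxFactorClsFair\approxFactorCorCls + \approxFactorClsFair + \approxFactorCorCls)\,\mathrm{OPT}_{f}.
\]
Since $\fairtriple$ is fair by construction, this is exactly the claim. The main point to state carefully is the triangle inequality involving the labelling of $G$, which need not correspond to a clustering; I would verify it pair by pair as above rather than appeal to the metric on clusterings.
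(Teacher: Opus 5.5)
Your proof is correct and follows essentially the same route as the paper: both bound $\costcor{\fairtriple} \leq \costcor{\tripleconcls'} + \dist(\fairtriple, \tripleconcls')$, replace the closest fair clustering by the optimal fair correlation clustering, and then use $\dist(\mathcal{A}, \mathcal{B}) \leq \costcor{\mathcal{A}} + \costcor{\mathcal{B}}$ together with $\mathrm{OPT}_{c} \leq \mathrm{OPT}_{f}$. The paper asserts two steps without comment: the triangle inequality against the (possibly non-transitive) edge labelling, and the comparison through the closest fair clustering $\mathcal{N}^*$; your pairwise check and your explicit use of $\mathcal{N}^*$ justify them.
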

\begin{proof}
    \newcommand{\corclsopt}{\cls{C}^{*}_{\mathrm{cor}}}
    \newcommand{\corclsfairopt}{\cls{C}^{*}_{\mathrm{cor,fair}}}

    Let $ \corclsopt $ be an optimal correlation clustering of $ G $, and let $ \corclsfairopt $ be an optimal fair correlation clustering of $ G $. By definition, we have $ \costcor{\tripleconcls'}\leq \approxFactorCorCls\costcor{\corclsopt} \leq \approxFactorCorCls\costcor{\corclsfairopt} $.

    With $ \fairtriple $ being a $ \approxFactorClsFair $-close fair clustering to $ \tripleconcls' $, we have $ \dist(\fairtriple, \tripleconcls')\leq \approxFactorClsFair \dist(\corclsfairopt, \tripleconcls') $. Note that, for any two clusterings $ \concls{A} $ and $ \concls{B} $, we have $ \dist(\concls{A}, \concls{B}) \leq \costcor{\concls{A}} + \costcor{\concls{B}} $ and $ \costcor{\concls{A}} \leq \costcor{\concls{B}} + \dist(\concls{A}, \concls{B}) $. Therefore,
    \begin{align}
        \costcor{\fairtriple} &\leq \costcor{\tripleconcls'} + \dist(\fairtriple, \tripleconcls') \nonumber \\
                             &\leq \costcor{\tripleconcls'} + \approxFactorClsFair\dist(\corclsfairopt, \tripleconcls') \nonumber \\
                             &\leq \costcor{\tripleconcls'} + \approxFactorClsFair(\costcor{\corclsfairopt} + \costcor{\tripleconcls'}) \nonumber \\
                             &\leq (\approxFactorClsFair + 1)\costcor{\tripleconcls'} + \approxFactorClsFair\costcor{\corclsfairopt} \nonumber \\
                             &\leq (\approxFactorClsFair + 1)\approxFactorCorCls\costcor{\corclsfairopt} + \approxFactorClsFair\costcor{\corclsfairopt} \nonumber \\
                             &= (\approxFactorClsFair\approxFactorCorCls + \approxFactorClsFair + \approxFactorCorCls)\costcor{\corclsfairopt}. \nonumber
    \end{align}
    It follows that $ \fairtriple $ is a $ (\approxFactorClsFair\approxFactorCorCls + \approxFactorClsFair + \approxFactorCorCls) $-approximation fair correlation clustering of $ G $.
\end{proof}

A $ (1.437+\epsilon) $-approximation correlation clustering algorithm with running time $ O(n^{\poly( 1/\epsilon )}) $ was shown in~\cite[Theorem 3]{cao2024understanding}, and later improved to $ \OO(2^{\poly(1/\epsilon)}n) $ in~\cite[Corrollary 3]{cao2025solving}. Thus, we obtain the following corollary.
\begin{corollary}\label{cor:approx.fair.correlation.clustering}
    There is a $ \approxFactorFairCorCls = (\approxFactorClsFair\approxFactorCorCls +  \approxFactorClsFair + \approxFactorCorCls) $-approximation fair correlation clustering algorithm that runs in time $ \runtimeFairCorCls{n} = (\runtimeClsFair{n} + n) $, with $ \approxFactorCorCls = 1.4371 $.
\end{corollary}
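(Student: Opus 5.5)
The plan is to instantiate \cref{lem:approx.fair.correlation.clustering} with a concrete correlation clustering algorithm and then account for the running time. The algorithm is a two-stage pipeline. Given the complete signed graph $G$ on $n$ vertices, we first run an (unconstrained) $\approxFactorCorCls$-approximation correlation clustering algorithm to obtain a clustering $\tripleconcls'$. We then feed $\tripleconcls'$ to the assumed $\approxFactorClsFair$-close fair clustering subroutine, which returns a fair clustering $\fairtriple$. Fairness of $\fairtriple$ holds by construction. By \cref{lem:approx.fair.correlation.clustering}, $\costcor{\fairtriple}\leq(\approxFactorClsFair\approxFactorCorCls+\approxFactorClsFair+\approxFactorCorCls)\costcor{\cls{C}}$ for every fair clustering $\cls{C}$ of $G$, which is exactly the claimed approximation factor $\approxFactorFairCorCls$.

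To pin down $\approxFactorCorCls=1.4371$, we invoke the $(1.437+\epsilon)$-approximation of \cite{cao2024understanding} in its faster form from \cite[Corrollary 3]{cao2025solving}, with the fixed choice $\epsilon=10^{-4}$. This gives $1.437+\epsilon\leq 1.4371$. Since the approximation bound in \cref{lem:approx.fair.correlation.clustering} is monotone in $\approxFactorCorCls$, we may replace the true ratio by $1.4371$. With $\epsilon$ a fixed constant, the running time $\OO(2^{\poly(1/\epsilon)}n)$ becomes $\OO(n)$, with the $2^{\poly(1/\epsilon)}$ factor absorbed as a constant.

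Adding the cost $\runtimeClsFair{n}$ of the second stage gives total time $\runtimeClsFair{n}+\OO(n)$. This matches the stated $\runtimeFairCorCls{n}=(\runtimeClsFair{n}+n)$ up to the constant and polylogarithmic factors hidden in the notation.

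The only delicate point is this running-time bookkeeping. Beyond the polylogarithmic factor, the input graph itself has $\Theta(n^2)$ pairs, so the stated bound must be read in the access model of \cite{cao2025solving}, where the graph is available implicitly. In our application, $G$ is built from three clusterings, so the sign of any pair can be queried in $O(1)$ time after $O(n)$ preprocessing. If a strict bound were required, I would state it as $O(\runtimeClsFair{n}+n\,\polylog n)$, or $O(\runtimeClsFair{n}+n^2)$ if $G$ must be read explicitly. This changes nothing in the downstream theorems, since $\clsfitting$ already spends $O(n^2)$ time constructing $G$.
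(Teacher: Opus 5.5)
Your proposal is correct and follows the paper's route: the paper obtains the corollary by instantiating \cref{lem:approx.fair.correlation.clustering} with the $(1.437+\epsilon)$-approximation of \cite{cao2024understanding,cao2025solving} for a fixed small $\epsilon$ (so $\approxFactorCorCls=1.4371$), followed by the $\approxFactorClsFair$-close fair clustering subroutine, for total time $\runtimeClsFair{n}$ plus the near-linear correlation clustering time. Your caveats about the hidden polylogarithmic factor and the implicit-access model are accurate; the paper leaves this bookkeeping implicit, and it does not affect any downstream bound, since $\clsfitting$ already spends $O(n^{2})$ time building $G$.
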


We conclude this section by proving~\cref{thm:explicit.fair.consensus.clustering}.
\begin{proof}[Proof of~\cref{thm:explicit.fair.consensus.clustering}]
    We apply~\cref{thm:fair.consensus.clustering} and~\cref{cor:approx.fair.correlation.clustering} with $ c = 3 $, $ \alpha = \frac{3}{10(\approxFactorCorCls+1)} $, and $ \beta = \frac{2}{5(\approxFactorClsFair+1)(\approxFactorCorCls+1)} \in (0,1) $. Note that
    \begin{itemize}
        \item $ 1+3(\approxFactorFairCorCls+1)\alpha = 1+ 3(\approxFactorClsFair+1)(\approxFactorCorCls+1)\alpha = 9/(10(\approxFactorClsFair+1)) + 1< \approxFactorClsFair + 1.9 $.
        \item $ \approxFactorClsFair + 2 - (\approxFactorClsFair+1)\beta = \approxFactorClsFair + 2 - \frac{2}{5(\approxFactorCorCls+1)} < \approxFactorClsFair + 1.84 $, as $ \approxFactorCorCls = 1.4371 $.
        \item $ \approxFactorClsFair + 2 - \frac{2\alpha}{c} = \approxFactorClsFair + 2 - \frac{1}{5(\approxFactorCorCls+1)} < \approxFactorClsFair + 1.92 $, as $ \approxFactorCorCls = 1.4371 $.
    \item $ \approxFactorClsFair + 2 + (\approxFactorClsFair+1)\frac{\beta}{c-1} - 2(1-\frac{1}{c})\alpha = \approxFactorClsFair + 2 + \frac{(\approxFactorClsFair+1)\beta}{2} - \frac{4\alpha}{3} = \approxFactorClsFair + 2 + \frac{(\approxFactorClsFair+1)}{5(\approxFactorClsFair+1)(\approxFactorCorCls+1)} - \frac{4}{10(\approxFactorCorCls+1)} = \approxFactorClsFair + 2 - \frac{1}{5(\approxFactorCorCls+1)} < \approxFactorClsFair + 1.92 $, as $ \approxFactorCorCls = 1.4371 $.
    \end{itemize}
    Under this setting,~\cref{thm:fair.consensus.clustering} yields a $ ( \approxFactorClsFair + 1.92 ) $-approximation fair consensus clustering algorithm with running time $ O(m^{4}n^{2} + m^{3}\runtimeClsFair{n}) $.

\end{proof}

\section{Missing Details of Streaming Fair Consensus Clustering Algorithm}

\noindent \textbf{Pseudocodes of streaming algorithm}
\begin{algorithm}
\caption{Streaming $1$-Median Fair Consensus, $\sone$}
\label{alg:sone}
\DontPrintSemicolon
\SetKwInOut{KwIn}{Input}
\SetKwInOut{KwOut}{Output}
\SetKwFunction{BuildUF}{BuildClustersFromUF}
\SetKwFunction{FairCand}{\fc}
\SetKwFunction{ArgMin}{ArgMin}
\KwIn{Stream of triples $((u,v), j, b)$ defining pairwise relations for clustering $C_j$; parameters $n,m,\varepsilon \in (0,\tfrac15]$; access to a $\gamma$-close fair clustering subroutine.}
\KwOut{A fair consensus clustering $\m{F}$.}

\BlankLine
\textbf{Sampled indices and memory structures:}\;
Draw $s \gets \lceil \log m \rceil$ indices $J \gets \{j_1,\dots,j_s\} \subseteq [m]$ uniformly at random.\;
Let $\mathsf{M}_1 \leftarrow$ empty store (hash from index $\to$ list of triples).\;
Set $t \gets \lceil \varepsilon^{-2} \log m \rceil$ and draw $K \gets \{k_1,\dots,k_t\} \subseteq [m]$ uniformly at random.\;
Let $\mathsf{M}_2 \leftarrow$ empty store (hash from index $\to$ list of triples).\;

\BlankLine
\textbf{Single-pass streaming phase:}\;
\ForEach{triple $((u,v), j, b)$ arriving in the stream}{
    \If{$j \in J$}{append $((u,v), b)$ to $\mathsf{M}_1[j]$\;}
    \If{$j \in K$}{append $((u,v), b)$ to $\mathsf{M}_2[j]$\;}
}

\BlankLine
\textbf{Post-stream reconstruction:}\;
\ForEach{$j \in J$}{
    initialize a fresh union--find $\mathsf{UF}_j$ on $V$\;
    \ForEach{$((u,v), b) \in \mathsf{M}_1[j]$}{
        \If{$b=0$}{union $u$ and $v$ in $\mathsf{UF}_j$\;}
    }
    $\m{C}_j \gets$ \BuildUF{$\mathsf{UF}_j$}\tcp*{extract clusters from union--find}
}
Let $\mathcal{C}_J \gets \{\m{C}_{j_1},\dots,\m{C}_{j_s}\}$.\;

\BlankLine
\textbf{Generate fair candidate set:}\;
$\widetilde{\m{F}} \gets$ \FairCand{$\mathcal{C}_J$}\tcp*{uses $\gamma$-close fair clustering oracle}

\BlankLine
\textbf{Build $\m{W}$ from $\mathsf{M}_2$:}\;
\ForEach{$k \in K$}{
    initialize a fresh union--find $\mathsf{UF}_k$ on $V$\;
    \ForEach{$((u,v), b) \in \mathsf{M}_2[k]$}{
        \If{$b=0$}{union $u$ and $v$ in $\mathsf{UF}_k$\;}
    }
    $\m{C}_k \gets$ \BuildUF{$\mathsf{UF}_k$}\;
}
$\m{W} \gets \{\m{C}_{k_1},\dots,\m{C}_{k_t}\}$.\;

\BlankLine
\textbf{Select the best fair clustering on the sampled store $2$:}\;
$\displaystyle \m{F} \gets \ArgMin_{\m{F}' \in \widetilde{\m{F}}} \ \sum_{\m{C}_i \in \m{W}} \dist(\m{C}_i, \m{F}')$\;

\Return $\m{F}$\;
\end{algorithm}

\begin{algorithm}
\caption{$\fc(\m{I})$}
\label{alg:streaming-fair}
\KwIn{A set $\m{I}$ of clusterings}
\KwOut{A set $\canset$ of candidate fair clusterings}
\BlankLine
Initialize an empty set $ \candidateSet $\;
    \For{each $ \m{C}_i \in \m{I}  $}{
        $ \m{F}_i \gets $ a $ \approxFactorClsFair $-close fair clustering to $ \concls{i} $ \;
        $ \canset \gets \canset\cup \{\m{F}_i\} $ \;
    }

    \For{each $ \tripleconcls = \{\m{C}_i, \m{C}_j, \m{C}_k\} $ of $ \m{I} $}{
        $ \fairtriple \gets \clsfitting{ \tripleconcls } $\;
        $ \candidateSet \gets \candidateSet\cup \{\fairtriple\} $
    }
    \KwRet{$\canset$}
\end{algorithm}

\noindent\textbf{Space Complexity.}
In $\mathsf{M}_1$, we store information of $\lceil \log m \rceil$ many clusterings, hence we require a space of $O(n \log m)$ in $\mathsf{M}_1$. In $\mathsf{M}_2$ also, we store $O(\log m)$ clusterings; hence, in $\mathsf{M}_2$, too we need $O(n \log m)$ space. Hence, overall, the space complexity is $O(n \log m)$.

\noindent\textbf{Time Complexity.}

In $\mathsf{M}_1$ we store $\log m$ many clusterings. Hence, there are $O(\log^3 m)$ many candidate clusterings in $\canset$. Computing a $ \approxFactorClsFair $-close fair clustering to each sampled clustering and running the procedure $ \clsfitting $ on $ \log^{3} m $ triples of clusterings takes $ \runtimeClsFair{n}\log m + \runtimeFairCorCls{n}\log^{3} m $ time. To find the best clustering in $\canset$, we need to perform $O(\log^3 m)$ many checks. Now, in each check, we need to compute the distance between two clusterings, which takes $O(n^2)$ time, and since there are $O(\log m)$ clusterings in the $\m{W}$, the total time needed per check is $O(n^2 \log m)$. Hence, overall, the time complexity is $O(n^2 \log^4 m)$.

\noindent \textbf{Proof of~\cref{thm:explicit.fair.consensus.clustering.randomized}}
\begin{proof}[Proof of~\cref{thm:explicit.fair.consensus.clustering.randomized}]
    We apply~\cref{lem:fair.consensus.clustering.randomized} and~\cref{cor:approx.fair.correlation.clustering} with $ \beta = \frac{1}{72(\approxFactorClsFair+1)(\approxFactorCorCls+1)} \in (0,1) $, $ \alpha = \frac{1}{6(\approxFactorCorCls+1)} $, $ s=4 $, $ c=3 $, and $ g \geq \max\left(100\times \frac{4}{3} \times (\approxFactorClsFair+1)\frac{216(\approxFactorCorCls+1)}{23}, 100\times 6 \times (\approxFactorClsFair+1)\frac{72(\approxFactorCorCls+1)}{5}\right) $. Note that
    \begin{itemize}
        \item $ 1 + 3(\approxFactorFairCorCls+1)\alpha = 1 + 3(\approxFactorClsFair+1)(\approxFactorCorCls+1)\alpha = \frac{\approxFactorClsFair}{2} + 1.5 $.
        \item $ \approxFactorClsFair + 2 - (\approxFactorClsFair+1)\beta = \approxFactorClsFair + 2 - \frac{\approxFactorClsFair+1}{72(\approxFactorClsFair+1)(\approxFactorCorCls+1)} = \approxFactorClsFair + 2 - \frac{1}{72(\approxFactorCorCls+1)} < \approxFactorClsFair + 1.995 $.
        \item $ \approxFactorClsFair + 2 + (\approxFactorClsFair+1)\frac{\beta(g-s)+s}{g(s-1)} - \frac{2\alpha}{c} = \approxFactorClsFair + 2 + (\approxFactorClsFair+1)\frac{\beta(g-4)+4}{3g} - \frac{2\alpha}{3} = \approxFactorClsFair + 2 + \frac{1}{216(\approxFactorCorCls+1)} - \frac{4\beta(\approxFactorClsFair+1)}{3g} + \frac{4(\approxFactorClsFair+1)}{3g} - \frac{1}{9(\approxFactorCorCls+1)} = \approxFactorClsFair + 2 - \frac{23}{216(\approxFactorCorCls+1)} - \frac{4\beta(\approxFactorClsFair+1)}{3g} + \frac{4(\approxFactorClsFair+1)}{3g} \leq \approxFactorClsFair + 2 - \frac{23}{216(\approxFactorCorCls+1)} \frac{99}{100} < \approxFactorClsFair + 1.96 $, where in the first inequality, we drop the negative term $ -\frac{4\beta(\approxFactorClsFair+1)}{3g} $ and use the choice of $ g $ so that $ \frac{4(\approxFactorCorCls+1)}{3g} \leq \frac{23}{100\times 216 (\approxFactorCorCls+1)} $, and the last inequality is due to $ \approxFactorCorCls = 1.4371 $ .
        \item $ \approxFactorClsFair + 2 + (\approxFactorClsFair + 1 )\frac{\beta(g(2c+s)-sc)+sc)}{g(cs-s-2c)} - 2\left( 1- \frac{1}{s} - \frac{1}{c}  \right)\alpha = \approxFactorClsFair + 2 + (\approxFactorClsFair+1)\frac{\beta(10g - 12)+12}{2g} - \frac{5\alpha}{6} = \approxFactorClsFair + 2 + \frac{5}{72(\approxFactorCorCls+1)} - \frac{6\beta(\approxFactorClsFair+1)}{g} + \frac{6(\approxFactorClsFair+1)}{g} - \frac{5}{36(\approxFactorCorCls+1)} \leq \approxFactorClsFair + 2 - \frac{5}{72(\approxFactorCorCls+1)} - \frac{6\beta(\approxFactorClsFair+1)}{g} + \frac{6(\approxFactorClsFair+1)}{g} \leq \approxFactorClsFair + 2 - \frac{5}{72(\approxFactorCorCls+1)} \frac{99}{100} < \approxFactorClsFair + 1.98 $, where in the first inequality, we drop the negative term $ -\frac{6\beta(\approxFactorClsFair+1)}{g} $ and use the choice of $ g $ so that $ \frac{6(\approxFactorCorCls+1)}{g} \leq \frac{5}{100\times 72 (\approxFactorCorCls+1)} $, and the last inequality is due to $ \approxFactorCorCls = 1.4371 $.
    \end{itemize}
    Under this setting,~\cref{lem:fair.consensus.clustering.randomized} yields a streaming $ (\approxFactorClsFair+1.995) $-approximation fair consensus clustering algorithm using $ O(n\log m ) $ space, with a query time of $ O(n^{2}\log^{4}m + \runtimeClsFair{n}\log^{3}m) $, with probability at least $ 1 - \frac{1}{m} $.
\end{proof}

\section{Details of Algorithm for $k$-median Fair Consensus Clustering}
\begin{proof}[Proof of \cref{thm:explicit.k.fair.consensus.clustering}]
    The algorithm is as follows: for each of the clusterings $\m{C}_i \in \m{I}$ we compute its closest fair clustering $\m{F}_i$. For each triple, $(\m{C}_x, \m{C}_y, \m{C}_z) \in \m{I} \times \m{I} \times \m{I}$ we apply the cluster fitting algorithm (\cref{alg.cluster.fitting}) to get another set of candidates $\m{T}_j$ for $j \in \binom{m}{3}$. Hence, the set of candidates we get is 
    \[
        \canset = \left\{ \m{F}_1, \m{F}_2, \ldots, \m{F}_m, \m{T}_1, \ldots, \m{T}_{\binom{m}{3}}\right\}
    \]
    Now we do a brute-force search among the candidates $\canset$ to find the set of $k$-representatives, $\mathsf{Z} = \{ \m{Z}_1, \ldots, \m{Z}_k \}$. More specifically, for each 
    \[
        \binom{m + \binom{m}{3}}{k} \, \, \text{subset of size $k$,}\, \, \mathsf{Z} = (\m{Z}_1, \ldots, \m{Z}_k) \subseteq \canset 
    \]
    we compute $\objec(\canset, S)$ and return the $k$-tuple for which this objective value is minimum.

    To prove that the set of representatives $\mathsf{Z} = \{ \m{Z}_1, \ldots, \m{Z}_k \}$ returned by the above algorithm gives us $(\gamma + 1.92)$ approximation, we need to prove the following claim.
    \begin{claim}
        There exists a set of $k$-representatives, $\mathsf{Z} = \{ \m{Z}_1, \ldots, \m{Z}_k \} \subseteq \canset$ such that 
    \[
    \objec(\m{I}, \mathsf{Z}) \leq (\gamma + 1.92) \objec(\m{I}, \mathsf{Z}^*)
    \]
    where $\mathsf{Z}^* = \{\m{Z}_1^*, \ldots, \m{Z}_k^*\}$ is the set of optimal representatives.
    \end{claim}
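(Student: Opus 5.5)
Fix an optimal set of fair representatives $\mathsf{Z}^* = \{\m{Z}_1^*, \ldots, \m{Z}_k^*\}$. Assign each input clustering to a nearest representative, breaking ties arbitrarily. This partitions $\m{I}$ into super-clusters $\m{I}_1, \ldots, \m{I}_k$, where $\m{I}_j$ is the set of inputs assigned to $\m{Z}_j^*$. Then
\[
\objec(\m{I}, \mathsf{Z}^*) = \sum_{j=1}^{k} \obj{\m{I}_j, \m{Z}_j^*}.
\]
The key observation is that $\m{Z}_j^*$ must be an optimal fair $1$-median of $\m{I}_j$. Otherwise, replacing it by a better fair clustering would not increase the nearest-representative cost of any input, contradicting the optimality of $\mathsf{Z}^*$. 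So each super-cluster $\m{I}_j$ (when nonempty) is an instance of $1$-median fair consensus clustering with optimum $\mathrm{OPT}_j = \obj{\m{I}_j, \m{Z}_j^*}$. We can then apply \cref{lem:approximation.factor.consensus.clustering} to the instance $\m{I}_j$, using the same parameter choices $c=3$, $\alpha = \frac{3}{10(\approxFactorCorCls+1)}$, $\beta = \frac{2}{5(\approxFactorClsFair+1)(\approxFactorCorCls+1)}$ and \cref{cor:approx.fair.correlation.clustering} as in the proof of \cref{thm:explicit.fair.consensus.clustering}. This yields a fair clustering $\m{Z}_j$ with $\obj{\m{I}_j, \m{Z}_j} \le (\gamma+1.92)\,\mathrm{OPT}_j$. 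Moreover, $\m{Z}_j$ is either $\m{F}_i$ for some $\m{C}_i \in \m{I}_j$, or $\clsfitting{T}$ for some triple $T \subseteq \m{I}_j$.

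The step needing care is checking that this $\m{Z}_j$ actually lies in $\canset$, which is built from all of $\m{I}$. The $\gamma$-close fair clustering $\m{F}_i$ of an input depends only on $\m{C}_i$ (fix the deterministic choice made once per input), and $\clsfitting{T}$ depends only on the triple $T$. Since $\m{I}_j \subseteq \m{I}$, every candidate the sub-instance analysis could produce is already in $\canset$. I would also handle degenerate super-clusters. If $|\m{I}_j| < 3$, the triple case of \cref{lem:approximation.factor.consensus.clustering} is vacuous, and the lemma's proof (the cases with $S_1$ and $\ell$, using only single-input candidates) still goes through. If $\m{I}_j = \emptyset$, pick $\m{Z}_j$ arbitrarily from $\canset$. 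I expect this to be the main place where something could break, since the lemma was phrased for the full candidate set; the fix is to note that its proof only ever uses candidates generated by elements of the given instance.

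Finally, set $\mathsf{Z} = \{\m{Z}_1, \ldots, \m{Z}_k\} \subseteq \canset$, padding with arbitrary candidates if some of them coincide. Since each input's cost is at most its distance to the representative of its own super-cluster,
\[
\objec(\m{I}, \mathsf{Z}) \le \sum_{j} \obj{\m{I}_j, \m{Z}_j} \le (\gamma+1.92)\sum_j \mathrm{OPT}_j = (\gamma+1.92)\,\objec(\m{I}, \mathsf{Z}^*).
\]
This proves the claim. The brute-force search over all $k$-subsets of $\canset$ returns a tuple at least this good, and its running time follows from counting the $O(m^{3k})$ tuples at $O(mn^2)$ evaluation cost each.
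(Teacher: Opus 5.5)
Your proposal is correct and follows essentially the same route as the paper: partition $\m{I}$ into the super-clusters induced by $\mathsf{Z}^*$, apply the $1$-median analysis (\cref{lem:approximation.factor.consensus.clustering} with the parameters from the proof of \cref{thm:explicit.fair.consensus.clustering}) to each super-cluster, and observe that every candidate generated from a sub-instance already lies in $\canset$. Your write-up is also somewhat tidier than the paper's in three respects: you use only $\canset_j \subseteq \canset$ (the paper's extra claims that the $\canset_j$ are pairwise disjoint and cover $\canset$ are unnecessary and not literally true, since candidates from triples spanning different super-clusters belong to no $\canset_j$), you state the per-cluster inequalities in the correct direction, and you handle ties and very small super-clusters explicitly.
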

    \begin{proof}
        To facilitate analysis, we introduce the notions of a \emph{super-cluster} and a \emph{super-clustering}.  
A super-cluster is a subset of \(\cset\), while a super-clustering is a collection of such super-clusters that are pairwise disjoint and whose union equals \(\cset\).

Given a set of representatives \(\oset = \{\m{Z}_1, \ldots, \m{Z}_k\}\), we can partition the input clusterings into \(k\) super--clusters \(\{M_1, \ldots, M_k\}\), where each \(M_j\) contains the clusterings that are closest to $\m{S}_j$.
\[
    M_j
      = \bigl\{
            \mathcal{C}_i \in \cset
            \mid
            \dist(\mathcal{C}_i, \mathcal{Z}_j)
               \leq
            \dist(\mathcal{C}_i, \mathcal{Z}_\ell)
            \quad
            \forall\, \mathcal{Z}_\ell \in \oset
        \bigr\}.
\]

        Let $M_j^*$ be a super-cluster corresponding to the optimal representative $\m{Z}_j^*$. By definition, we have,
        \[
            \objec(\m{I}, \m{Z}^*) = \sum_{j = 1}^k \objec(M_j^*, \m{Z}_j^*)
        \]
        Thus, if we can get an $(\gamma + 1.92)$ approximation algorithm for each of the super-clusters $M_j^*$, that is if we can find a representative $\m{Z}_j$ for all $j \in [1,k]$ such that
        \[
            \objec(M_j^*, \m{Z}_j^*) \leq \objec(M_j^*, \m{Z}_j)
        \]
        Then we get a set of representatives $\m{Z} = \{ \m{Z}_1, \ldots, \m{Z}_k \}$ such that 
        \begin{align*}
            \objec(\m{I}, \m{Z}^*) &= \sum_{j = 1}^k \objec(M_j^*, \m{Z}_j^*) \n \\
            &\leq (\gamma + 1.92) \sum_{j = 1}^k \objec(M_j^*, \m{Z}_j) \n \\
            &= (\gamma + 1.92) \objec(\m{I}, \m{Z})
        \end{align*}

        Let, $M_j^* = \{ \m{C}_1, \ldots, \m{C}_t \}$. Thus, the candidate set of fair clusterings for $M_j^*$ would be
        \[
            \canset_j = \left\{ \m{F}_{j_1},\ldots, \m{F}_{j_t},\m{T}_{j_1},\ldots, \m{T}_{j_{\binom{t}{3}}} \right\}
        \]
        where $\m{F}_{j_k}$ is the $\gamma$-close fair clustering to $\m{C}_{j_k}$ and $\m{T}_{j_\ell}$ are the clusterings achieved by the cluster fitting algorithm on each set of triples $(\m{C}_x, \m{C}_y, \m{C}_z) \in M_j^* \times M_j^* \times M_j^*$.

        From \cref{sec:fair_consensus_clustering}, we know that there exists $\m{Z}_j \in \canset_j$ such that
        \[
            \objec(M_j^*, \m{Z}_j^*) \leq (\gamma + 1.92) \objec(M_j^*, \m{Z}_j)
        \]
        Since for two optimal clusters $M_j^*$ and $M_\ell^*$ we have 
        \[\
        \canset_j \cap \canset_\ell = \emptyset
        \]
        And, we also have
        \[
            \bigcup_{j = 1}^k \canset_j = \canset.
        \]
        hence, we get that there exists a set of $k$-representatives  $\m{Z} = \{ \m{Z}_1, \ldots, \m{Z}_k \}$ such that
        \begin{align*}
            \{\m{Z}_1, \ldots, \m{Z}_k\} &\subseteq \bigcup_{j = 1}^k \canset_j =\canset \n
        \end{align*}
        This completes the proof.
    \end{proof}
    Now, let us discuss the time complexity of our brute force algorithm. Our first step is to find a set of candidates $\canset$. The set $\canset$, consists of $\gamma$-close fair clusterings to each of the input clusterings $\m{C}_i \in \m{I}$ and a set of clusterings $\{\m{T}_1, \ldots, \m{T}_{m \choose 3} \}$ which we get using our $\clsfitting$ algorithm. The running time to find a $\gamma$-close fair clustering is $\runtimeClsFair{n}$, since we find $\gamma$-close fair clustering for each input clustering $\m{C}_i \in \m{I}$, hence the overall runtime is $O(m \runtimeClsFair{n})$. The runtime of $\clsfitting$ is $O(n^2 + \runtimeFairCorCls{n})$, since we apply the $\clsfitting$ algorithm to every triple $\{ \m{C}_x, \m{C}_y, \m{C}_z \}$ of $\m{I}$, hence we use the $\clsfitting$ algorithm $O(m^3)$ times. Thus, overall, the time complexity in using the $\clsfitting$ algorithm is $O(m^3 (n^2 + \runtimeFairCorCls{n}))$
    
    Now, there are $O(m^3)$ elements in $\canset$ and we check our objective $\objec(\m{I}, S)$ for each of the $\binom{m^3}{k}$ many candidates in $\canset$. Thus, we check the objective $O(m^{3k})$ times. In each check, we need to compute the distance between two clusterings, which takes $O(n^2)$ time, and since there are $m$ input clusterings, the total time needed per check is $O(mn^2)$.
    Hence, overall, the time complexity is $O(m^{3k + 1}n^2)$.

    Thus, by combining the time needed to find the candidates and the time needed to find the best fair clustering among the candidates, we get that the total time complexity is

    \[
        O(m \runtimeClsFair{n} + m^3 (n^2 + \runtimeFairCorCls{n}) + m^{3k + 1}n^2).
    \]
\end{proof}

\section{Analysis of Streaming $k$-median Fair Consensus Clustering Algorithm}

We fix an optimal $ k $--median clusterings $ \fairmediank = \{ \fairmediank{1}, \fairmediank{2}, \dots, \fairmediank{k}\} $ of $ \inpconclss $, and let $ \{\optclsk{1}, \optclsk{2}, \dots, \optclsk{k} \} $ be the corresponding induced clusters. For each $ i\in [k] $, we denote by $ \optobjk{i} = \sum_{\concls{j}\in \optclsk{i}} \dist(\concls{j}, \fairmediank{i}) $ the distance from all clusterings from $ \optclsk{i} $ to  $ \fairmediank{i} $. By definition, $ \optconval = \sum_{i=1}^{k} \optobjk{i} $.

For any $ \sampDist \in \{ \frac{1}{2}, \frac{1+\rate}{2}, \dots, n^{2}  \} $ and $ \sampRate\in \{ 1, \frac{1}{1+\rate}, \dots, \frac{1}{m} \} $, we consider the set of candidates constructed from $ \sampSet{\sampDist, \sampRate} $, that is,
{ \small
    \begin{align}
    \candidate{\sampSet{\sampDist, \sampRate}} = &\{ \fairconcls | \concls\in \sampSet{\sampDist, \sampRate}, \fairconcls \text{ is a } \approxFactorClsFair\text{-close fair clustering to } \concls \}. \nonumber\\
    \>\>\> \cup &\{ \fairtriple| \fairtriple = \clsfitting{ \tripleconcls } \text{, for each } \tripleconcls\subseteq \sampSet{\sampDist, \sampRate} \text{ such that } \card{\tripleconcls} = 3 \}. \nonumber
\end{align} 
}
The following lemma demonstrates that for any $ i \in [k] $ such that $ \optobjk{i}\geq \frac{\optKFac \optconval}{k} $, for an appropriate choice of $ \sampDist $ and $ \sampRate $, the set $ \candidate{\sampSet{\sampDist, \sampRate}} $ contains a good approximation for an optimal fair median clustering for $ \optclsk{i} $.
\begin{lemma}\label{lem:either.input.or.triple}
    Take an arbitrarily small constant $ \rate < 0.1 $. For any constant $ \optKFac $ and for any $ i\in [k] $ with $ \optobjk{i}\geq \frac{\optKFac \optconval}{k} $, then there exsists $ \sampDist\in \{\frac{1}{2},\frac{1+\rate}{2}, \dots, n^{2}\} $ and $ \sampRate \in \{1, \frac{1}{1+\rate}, \dots, \frac{1}{m} \} $, such that with high probability $ \sampSet{\sampDist, \sampRate} $ is not set to empty in~\ref{enu.remove.large.sample.set}, and there is a clustering $ \fairconcls \in \candidate{\sampSet{\sampDist, \sampRate}} $ satisfying $ \obj{\optclsk{i}, \fairconcls} \leq r\optobjk{i} $, where
    \begin{align}
        r = \max\bigl( &2+\approxFactorClsFair - ( \beta - \sampDF )(1+\approxFactorClsFair), \nonumber \\ 
                  &2+ \sampDF + \approxFactorClsFair(1+\beta+\sampDF) - \dfrac{\alpha\beta}{2(\beta+1)} + \dfrac{2\beta^{2}}{1-\beta} + o_{m}(1), \nonumber \\
                  &(1 + 3(\approxFactorClsFair+1)(\approxFactorCorCls+1)\alpha)\bigr). \nonumber
    \end{align}
\end{lemma}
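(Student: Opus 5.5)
We would prove this by running the offline case analysis of \cref{lem:approximation.factor.consensus.clustering} on the single super-cluster $\optclsk{i}$, with $\fairmediank{i}$ playing the role of $\optconcls$. Write $m_i=\card{\optclsk{i}}$, $\avgconval_i=\optobjk{i}/m_i$, and let $\unalignedSet{j}$ be the disagreement set of $\concls{j}\in\optclsk{i}$ with $\fairmediank{i}$. The sampling of \ref{StepOneA} then has to supply, with high probability, the few clusterings the offline argument uses. We choose the parameters as follows. Take $\sampDist$ to be the grid value with $\sampDist\le \avgconval_i<(1+\rate)\sampDist$, which lies in the range since $\avgconval_i\le n^{2}$. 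Take $\sampRate$ to be the grid value with $\sampRate\approx \Theta(k\log m)/m_i$, capped at $1$. Then the expected number of survivors of \ref{enu.sample} from $\optclsk{i}$ is $\Theta(k\log m)$.

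\textbf{Step 1: the set is not emptied.} We must show $\card{\sampSet{\sampDist,\sampRate}}<k\log^{3}m$ with high probability. Survivors from $\optclsk{i}$ number $O(k\log m)$ by Chernoff. Survivors from the other clusters are the delicate part. We would use the hypothesis $\optobjk{i}\ge\optKFac\optconval/k$, which gives $\sampDist\gtrsim \optKFac\optconval/(k m_i)$, together with the fact that \ref{enu.add.to.sample.set} only keeps points pairwise at distance $\ge\sampDF\sampDist$. The argument is a packing one: a point of cluster $\optclsk{i'}$ kept in the set must either be far from $\fairmediank{i'}$ or be a distinct "representative" near $\fairmediank{i'}$. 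At most one point per ball of radius $\sampDF\sampDist/2$ around each median survives the filter, giving $O(k)$ such points. Far points (distance $\ge\sampDF\sampDist/2$ from their median) number at most $2\optconval/(\sampDF\sampDist)=O(k m_i/(\optKFac\sampDF))$, so after sampling at rate $\sampRate$ only $O(k^{2}\log m/(\optKFac\sampDF))$ of them survive in expectation. This is below $k\log^{3}m$ for large $m$, with concentration. I expect this step, and its interaction with the filter, to be the main obstacle. The $o_m(1)$ slack and the $\log^{3}m$ threshold are there to absorb it.

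\textbf{Step 2: the filter loses little.} A sampled $\concls{j}\in\optclsk{i}$ that was rejected in \ref{enu.add.to.sample.set} has some kept $\concls{j}'$ with $\dist(\concls{j},\concls{j}')<\sampDF\sampDist\le\sampDF\avgconval_i$. Replacing $\concls{j}$ by $\concls{j}'$ changes $\card{\unalignedSet{\cdot}}$ and pairwise intersection sizes by at most $\sampDF\avgconval_i$. This is exactly the source of the shifts $\beta\to\beta-\sampDF$ and $1+\beta\to 1+\beta+\sampDF$ in $r$. The caveat is that $\concls{j}'$ might not lie in $\optclsk{i}$; triangle inequality through $\fairmediank{i}$ still bounds its disagreement set, which suffices for the estimates below.

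\textbf{Step 3: the case analysis.} We redo the three cases as in \cref{lem:fair.consensus.clustering.randomized}, now with $\Theta(k\log m)$ uniform samples of $\optclsk{i}$.

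Case (a): a constant fraction of $\optclsk{i}$ has $\card{\unalignedSet{j}}\le(1-\beta)\avgconval_i$. Then one such clustering is sampled, and after the filter shift we argue as in \cref{lem:if.unaligned.is.small}, giving the first term $2+\approxFactorClsFair-(\beta-\sampDF)(1+\approxFactorClsFair)$.

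Case (b): a sampled $\concls{h}$ of near-average cost has a large set $S_h$ of clusterings whose disagreement with it is correlated. We use \cref{eq.dist.unalignedSet.relation} as in \cref{lem:Sh.large.is.good} and \cref{lem:far.neigh.small.is.good}. We bound $\card{\unalignedSet{h}}\le(1+\beta+\sampDF)\avgconval_i$ via a Markov/averaging argument, which replaces \cref{lem:extended.S1.small.Sl.large} with parameters tuned so the error is $2\beta^{2}/(1-\beta)$. We lower bound the saved term by $\alpha\beta/(2(\beta+1))$ by choosing the threshold for $S_h$ at the fraction $\beta/(1+\beta)$. This yields the second term; the constraint $\alpha/2>2\beta(1+\beta)/(1-\beta)$ guarantees the subcases are exhaustive.

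Case (c): otherwise, we sample $\concls{h},\concls{k},\concls{\ell}$ with pairwise intersections at most $\alpha\avgconval_i$. \cref{lem:cluster.fitting.helps} with $\approxFactorFairCorCls$ taken from \cref{cor:approx.fair.correlation.clustering} gives the third term, since $\approxFactorFairCorCls+1=(\approxFactorClsFair+1)(\approxFactorCorCls+1)$. All triples from $\sampSet{\sampDist,\sampRate}$ are in $\candidate{\sampSet{\sampDist,\sampRate}}$, so this candidate is available.

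Each sampling event fails with probability $m^{-\Omega(1)}$ given $\Theta(k\log m)$ samples, so a union bound finishes the proof.
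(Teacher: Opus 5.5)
Your architecture matches the paper's. You use the grid value $\ell\approx d_i$ (with $d_i=O_i/|M_i|$), the packing argument that $\mathcal{S}_{\ell,p}$ survives, the $\lambda$-shift for filtered samples, and three cases ending with cluster fitting on a pairwise-sparse triple. The gap is in Case (b), which must deliver the second term of $r$.

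Summing \cref{eq.dist.unalignedSet.relation} as in \cref{lem:Sh.large.is.good}, with $|I_h|\le(1+\beta)d_i$ and a saving of $\frac{\alpha\beta}{2(1+\beta)}O_i$, gives $\bigl(2+\beta+\gamma(1+\beta)-\frac{\alpha\beta}{2(1+\beta)}\bigr)O_i$ plus $\lambda$-terms. This exceeds the stated second term by $\beta-\frac{2\beta^2}{1-\beta}$, which is positive whenever $\beta<1/3$. With the paper's $\beta=0.015$ and $\gamma=1$ you get about $3.029$, while $r\approx 3.0145$.

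No Markov or averaging bound on $|I_h|$ alone can fix this. Membership in $\mathrm{Mid}(i)$ only gives $|I_h|\le(1+\beta)d_i$. Nothing prevents the dense elements from sitting at that upper end, with the rest of $M_i$ just above $(1-\beta)d_i$, and then $|M_i|\,|I_h|=(1+\beta)O_i$.

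The missing idea is \cref{obs:a.good.middle.clustering}, which charges $|I_h|$ against each $|I_j|$ separately:
$|I_h|\le|I_j|$ for $\mathcal{C}_j\in\mathrm{Far}(i)$;
$|I_h|\le|I_j|+\frac{2\beta(1+\beta)}{1-\beta}d_i$ for $\mathcal{C}_j\in\mathrm{Mid}(i)$;
only members of $\mathrm{Near}(i)$ pay the full $(1+\beta)d_i$.
The hypothesis $\alpha/2>2\beta(1+\beta)/(1-\beta)$ is used exactly there, to make the net coefficient of $|\mathrm{Mid}(i)|$ negative. It is not what makes the cases exhaustive; dense versus sparse is a trivial dichotomy.

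That charging pays off only when $|\mathrm{Near}(i)|=o(|M_i|)$, which is where the $o_m(1)$ comes from. So the paper's first case already fires when $|\mathrm{Near}(i)|\ge|M_i\setminus\mathrm{Far}(i)|/\log m$. Hitting such a $\Theta(1/\log m)$-fraction with high probability needs $\Theta(\log^2 m)$ samples from $M_i$, hence the rate $p\approx 2\log^2 m/|M_i|$. Your rate gives only $\Theta(k\log m)$ samples from $M_i$. For bounded $k$ that supports only a constant-fraction threshold in Case (a), which leaves a constant fraction of $\mathrm{Near}(i)$ in Case (b).

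Your constant-threshold design can be rescued another way, though your proposal does not contain this argument either. From $\sum_j|I_j|=O_i$ one gets $|\mathrm{Mid}(i)|\ge\frac{|M_i|}{2}-\frac{1+\beta}{2\beta}|\mathrm{Near}(i)|$. Suppose $|\mathrm{Near}(i)|\le\frac{2\beta^2}{(1+\beta)^2}|M_i|$ and $|S(\mathcal{C}_h)|\ge|\mathrm{Mid}(i)|/4$. Then your crude bound, with saving $\frac{\alpha}{2}d_i|\mathrm{Mid}(i)|$, is at most the second term for $\beta\le 1/3$.

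Finally, the factor $k$ in your rate hurts Step 1. Far points of the other super-clusters then contribute $O(k^2\log m/(\xi\lambda))$ expected survivors, which is below $k\log^3 m$ only if $k=O(\log^2 m)$. The paper's rate gives $O(k\log^2 m/(\xi\lambda))$ for every $k$.
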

Prior to proving \cref{lem:monotone.faraway.sampling}, we provide definitions and observations that will be frequently used in the proof. Let $ d_{i} = \frac{\optobjk{i}}{\card{\optclsk{i}}} $. We partition the set $ \optclsk{i} $ into three sets of clusterings based on their distance to $ \fairmediank{i} $. In particular, we define
\begin{itemize}
    \item $ \near{i} = \{ \concls\in \optclsk{i}: \dist(\concls, \fairmediank{i})\leq (1-\beta) d_{i} \} $.
    \item $ \midcl{i} = \{ \concls\in \optclsk{i}: (1-\beta)d_{i} < \dist(\concls, \fairmediank{i})\leq (1+\beta)d_{i} \} $.
    \item $ \far{i} = \{ \concls\in \optclsk{i}: \dist(\concls, \fairmediank{i}) > (1+\beta)d_{i} \} $.
\end{itemize}
We further partition $ \midcl{i} $ into two subsets as follows. For each clustering $ \concls \in \optclsk{i} $, let $ \unalignedSet{\concls} $ be the set of pairs of points that are clustered differently between $ \concls $ and $ \fairmediank{i} $. By this definition, $ \dist(\concls, \fairmediank{i}) = \card{\unalignedSet{\concls}} $ and for any two clusterings $ \concls, \concls' \in \optclsk{i} $, $ \dist(\concls, \concls') = \card{\unalignedSet{\concls}} + \card{\unalignedSet{\concls'}} - 2\card{\unalignedSet{\concls}\cap \unalignedSet{\concls'}} $. We are concerned with a ball centered at a clustering $ \concls \in \midcl{i} $, which is $ S(\concls) = \{ \concls'\in \midcl{i} | \card{\unalignedSet{\concls} \cap \unalignedSet{\concls'}} \geq \alpha d_{i} \} $. We define $ \middense{i} = \left\{\concls| \concls\in \midcl{i}, \card{S(\concls)}\geq \frac{\card{\midcl{i}}}{4}\right\} $, and $ \midsparse{i} = \midcl{i}\setminus \middense{i} $.

\begin{observation}\label{obs:far.set.bound}
    For any $ i\in [k] $, $ \card{\far{i}} \leq \frac{\card{\optclsk{i}}}{1+\beta} $.
\end{observation}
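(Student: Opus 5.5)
This is a Markov-type averaging argument over the super-cluster $\optclsk{i}$. By definition of $d_i$,
\[
\sum_{\concls\in \optclsk{i}} \dist(\concls, \fairmediank{i}) = \optobjk{i} = \card{\optclsk{i}}\, d_i .
\]
Every clustering $\concls\in \far{i}$ has $\dist(\concls, \fairmediank{i}) > (1+\beta)d_i$, and every term in the sum is nonnegative. So we drop all clusterings outside $\far{i}$ and obtain
\[
\card{\optclsk{i}}\, d_i \;\geq\; \sum_{\concls\in \far{i}} \dist(\concls, \fairmediank{i}) \;>\; (1+\beta)\, d_i\, \card{\far{i}} .
\]

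If $d_i>0$, dividing by $(1+\beta)d_i$ gives $\card{\far{i}} < \card{\optclsk{i}}/(1+\beta)$, which is the claim (strict, in fact). If $d_i = 0$, then every clustering of $\optclsk{i}$ is at distance $0$ from $\fairmediank{i}$. Then no clustering satisfies $\dist > (1+\beta)\cdot 0 = 0$, so $\far{i}=\emptyset$ and the bound holds trivially. The case $\optclsk{i}=\emptyset$, where $d_i$ is undefined, is also trivial, since then $\far{i}\subseteq\optclsk{i}$ is empty.

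There is no real obstacle here. The only points needing care are the degenerate case $d_i=0$ (or an empty super-cluster) and the use of nonnegativity of distances when the non-far terms are discarded.
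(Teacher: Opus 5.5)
Your proof is correct and uses the same Markov-type averaging argument as the paper. You drop the non-far terms from $O_i = |M_i|\, d_i$, bound each far term below by $(1+\beta)d_i$, and divide. The only addition is your explicit handling of the degenerate cases $d_i = 0$ and $M_i = \emptyset$, where the paper's division is not justified.
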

\begin{proof}
    Since $ \optobjk{i} =  \sum_{\concls \in \optclsk{i}}\dist(\concls, \fairmediank{i}) \geq \sum_{\concls \in \far{i}} \dist(\concls, \fairmediank{i}) \geq \card{\far{i}}(1+\beta)d_{i} $, it follows that $ \card{\far{i}} \leq \frac{\optobjk{i}}{(1+\beta)d_{i}} = \frac{\card{\optclsk{i}}}{1+\beta} $.
\end{proof}
\begin{observation}\label{obs.sample.set.that.survives}
    For any $ i\in [k] $ with $ \optobjk{i}\geq \frac{\optKFac \optconval}{k} $, consider the set $ \sampSet{\sampDist, \sampRate} $ with $ \sampDist \leq d_{i} \leq (1+\rate)\sampDist $ and $ \frac{\sampRate}{1+\rate} \leq \min\left(1,\frac{2\log^{2} m}{\card{\optclsk{i}}}\right)\leq \sampRate $. Then, with high probability, $ \sampSet{\sampDist, \sampRate} $ is not modified after~\ref{enu.remove.large.sample.set}.
\end{observation}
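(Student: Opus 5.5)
We need to show that the set $\sampSet{\sampDist,\sampRate}$, for these choices of $\sampDist$ and $\sampRate$, has fewer than $k\log^{3} m$ elements throughout the stream, with high probability. Such a set is only emptied in \ref{enu.remove.large.sample.set} when it reaches that size. The plan is to bound separately the number of elements that enter it from $\optclsk{i}$ and from each other super-cluster $\optclsk{j}$. Every stored clustering first has to survive \ref{enu.sample}, which keeps each arriving clustering independently with probability $\sampRate$. It also has to pass the separation test of \ref{enu.add.to.sample.set}, so any two stored clusterings are at distance at least $\sampDF\sampDist \geq \sampDF d_i/(1+\rate)$.

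\emph{Contribution of $\optclsk{i}$.} The number of clusterings of $\optclsk{i}$ surviving \ref{enu.sample} is a sum of independent indicators. Its mean is $\sampRate\card{\optclsk{i}} \leq (1+\rate)\cdot 2\log^{2} m$ whenever $\sampRate<1$. If $\sampRate=1$, then $\card{\optclsk{i}}\leq 2\log^2 m$ and the bound holds trivially. A Chernoff bound then shows that, with probability $1-m^{-\omega(1)}$, at most $O(\log^{2} m)$ of them survive. This is $o(\log^{3} m)$.

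\emph{Contribution of $\optclsk{j}$, $j\neq i$.} Here the separation test does the work. Fix $j$ and split $\optclsk{j}$ into an inner ball $B_j=\{\concls: \dist(\concls,\fairmediank{j})\leq \sampDF\sampDist/2\}$ and its complement.
\begin{itemize}
\item Any two members of $B_j$ are at distance less than $\sampDF\sampDist$ by the triangle inequality. So at most one element of $B_j$ is ever in $\sampSet{\sampDist,\sampRate}$ at any time.
\item Every clustering outside $B_j$ has distance more than $\sampDF\sampDist/2$ to $\fairmediank{j}$, so $\card{\optclsk{j}\setminus B_j}\leq 2\optobjk{j}/(\sampDF\sampDist)$. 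Summed over all $j$, the outer parts contain at most $2\optconval/(\sampDF\sampDist)$ clusterings.
\item Now use $\sampDist\geq d_i/(1+\rate)$ together with $\optobjk{i}\geq \optKFac\optconval/k$. This gives $\optconval/\sampDist \leq (1+\rate)k\card{\optclsk{i}}/\optKFac$.
\item Hence the expected number of outer clusterings surviving \ref{enu.sample} is at most $\sampRate\cdot 2(1+\rate)k\card{\optclsk{i}}/(\optKFac\sampDF) = O(k\log^{2} m/(\optKFac\sampDF))$. A Chernoff bound caps it by $O(k\log^{2} m)$ with high probability.
\end{itemize}

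Adding up, the set ever contains at most $O(\log^{2}m)+k+O(k\log^{2} m)$ elements. For $m$ large, treating $\optKFac,\sampDF,\rate$ as constants, this is below $k\log^{3} m$. Therefore the reset in \ref{enu.remove.large.sample.set} never fires. A union bound over the three events gives the high-probability claim.

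The main obstacle is the counting for $j\neq i$: the elements of $\optclsk{j}$ close to $\fairmediank{j}$ cannot be controlled by sampling, because their number is unrelated to $\card{\optclsk{i}}$. The plan handles them by the packing argument, since the separation threshold $\sampDF\sampDist$ is comparable to $d_i$. The remaining outer elements are charged to $\optconval$, and the hypothesis $\optobjk{i}\geq\optKFac\optconval/k$ is exactly what converts that charge into a bound in terms of $\card{\optclsk{i}}$, and hence into $O(k\log^2 m)$ after sampling.
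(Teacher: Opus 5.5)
Your proposal is correct and follows the paper's proof essentially step for step. You use a Chernoff bound on the $O(\log^{2} m)$ survivors from $\optclsk{i}$. For $j\neq i$ you use the same packing argument (at most one stored clustering near $\fairmediank{j}$), together with the count of at most $2\optconval/(\sampDF\sampDist)$ outer clusterings. That count is converted via $\sampDist\geq d_i/(1+\rate)$ and $\optobjk{i}\geq \optKFac\optconval/k$ into $O(k\card{\optclsk{i}})$, and then sampled down to $O(k\log^{2} m)$.

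One cosmetic fix, which the paper shares: define the inner ball with the strict inequality $\dist(\cdot,\fairmediank{j})<\sampDF\sampDist/2$. With $\leq$, the triangle inequality only gives distance $\leq \sampDF\sampDist$, and two clusterings at distance exactly $\sampDF\sampDist$ would both pass the separation test. The outer count is unaffected.
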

\begin{proof}
    It suffices to show that with high probability, the size of $ \sampSet{\sampDist,\sampRate} $ is at most $ \card{\sampSet{\sampDist, \sampRate}} < \polylog{m} $. We count the number of clusterings sampled from each set $ \optclsk{j} $.

    By the assumption, we have $ \sampRate \leq \frac{4\log^{2} m}{\card{\optclsk{i}}} $. Hence, with high probability, we sampled at most $ 10 \log^{2} m $ clusterings from $ \optclsk{i} $.

    Consider other sets $ \optclsk{j}\neq \optclsk{i} $. After~\ref{enu.add.to.sample.set}, any two clusterings $ \concls, \concls' $ in $ \optclsk{k} $ are of distance at least $ \sampDF\sampDist $ away from each other. Thus, by a triangle inequality, there is at most one clustering sampled from $ \optclsk{j} $ that is within distance at most $ \frac{\sampDF\sampDist}{2} $ away from $ \fairmediank{j} $. Applying the same argument from~\cref{obs:far.set.bound}, the number of clusterings in $ \optclsk{j} $ that are of distance more than $ \frac{\sampDF\sampDist}{2} $ away from $ \fairmediank{j} $ is at most $ \frac{2\card{\optobjk{j}}}{\sampDF \sampDist} $. It follows that, overall, the number of such clusterings from all $ \optclsk{j}\neq \optclsk{i} $ is at most
    \begin{align}
        \sum_{j\in [k]\setminus\{i\}} \frac{2\optobjk{j}}{\sampDF\sampDist} \leq   \dfrac{2\optconval}{\sampDF\sampDist} &\leq \dfrac{2k(1+\rate)\optobjk{i}}{\sampDF \optKFac d_{i}} \leq \dfrac{4k\card{\optclsk{i}}}{\sampDF \optKFac}, \nonumber
    \end{align}
    where the second inequality follows from the assumption that $ \optobjk{i}\geq \frac{\optKFac \optconval}{k} $ and $ d_{i}\leq (1+\rate)\sampDist $.

    As $ \sampRate \leq \frac{4\log^{2} m}{\card{\optclsk{i}}} $, with high probability, we sample at most $ \frac{100k\log^{2} m}{\card{\optclsk{i}}} $ clusterings from $ \inpconclss\setminus \optclsk{i} $. Hence, with high probability, the total number of clusterings in $ \sampSet{\sampDist, \sampRate} $ is at most $ k\log^{3} m $, and therefore, $ \sampSet{\sampDist, \sampRate} $ is not modified after~\ref{enu.remove.large.sample.set}.
\end{proof}

The following observation illustrates that when the set $ \near{i} $ of clusterings close to the optimal is small, then there is an upper bound on the distance between any clustering in $ \concls\in \middense{i} $ to all other clusterings in $ \optclsk{i} $. This later allows us to use fair clusterings close $ \concls $ as a good approximation.
\begin{observation}\label{obs:a.good.middle.clustering}
    Suppose that $ \card{\near{i}} \leq\frac{\card{\optclsk{i}\setminus \far{i}}}{\log m} $. Consider a clustering  $ \concls\in \middense{i} $. Let $ \alpha>0, 1>\beta>0 $ be constants such that $ \alpha/2 > 2\beta(1+\beta)/(1-\beta) $. Then
    \begin{align}
        \obj{\optclsk{i}, \concls} \leq \left(2-\dfrac{\alpha\beta}{2(1+\beta)} +\dfrac{2\beta^{2}}{1-\beta} + o_{m}(1)\right)\optobjk{i}. \nonumber
    \end{align}
\end{observation}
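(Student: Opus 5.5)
The plan is to expand the objective using the identity $\dist(\concls,\concls') = \card{\unalignedSet{\concls}} + \card{\unalignedSet{\concls'}} - 2\card{\unalignedSet{\concls}\cap\unalignedSet{\concls'}}$, which holds for any two clusterings in $\optclsk{i}$. Summing over $\concls'\in\optclsk{i}$ gives
\begin{align*}
\obj{\optclsk{i},\concls} = \optobjk{i} + \card{\optclsk{i}}\,\card{\unalignedSet{\concls}} - 2\sum_{\concls'\in\optclsk{i}}\card{\unalignedSet{\concls}\cap\unalignedSet{\concls'}}.
\end{align*}
This has the same shape as the proofs of \cref{lem:first.closest.fair.clustering.is.good} and \cref{lem:ell.closest.fair.clustering.is.good}. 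The second term is an upper-bound term, and the third is a gain term coming from large overlaps.

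\textbf{Upper-bound term.} Since $\concls\in\midcl{i}$, we have $\card{\unalignedSet{\concls}}\le(1+\beta)d_i$. Hence the second term is at most $(1+\beta)\optobjk{i}$.

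\textbf{Gain term.} Since $\concls\in\middense{i}$, at least $\card{\midcl{i}}/4$ clusterings $\concls'$ satisfy $\card{\unalignedSet{\concls}\cap\unalignedSet{\concls'}}\ge\alpha d_i$. The gain is therefore at least $\frac{\alpha}{2}\card{\midcl{i}}\,d_i$.

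The core of the argument is a lower bound on $\card{\midcl{i}}$ in terms of $\card{\optclsk{i}}$. First, \cref{obs:far.set.bound} gives $\card{\optclsk{i}\setminus\far{i}}\ge\frac{\beta}{1+\beta}\card{\optclsk{i}}$. Second, the hypothesis $\card{\near{i}}\le\card{\optclsk{i}\setminus\far{i}}/\log m$ then yields
\begin{align*}
\card{\midcl{i}} \ge \Bigl(1-\tfrac{1}{\log m}\Bigr)\tfrac{\beta}{1+\beta}\card{\optclsk{i}}.
\end{align*}
Consequently the gain is at least $\frac{\alpha\beta}{2(1+\beta)}(1-o_m(1))\optobjk{i}$. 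This is exactly the negative term in the claimed bound, with the $o_m(1)$ absorbing the $1/\log m$ loss.

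A sharper averaging argument is also available. Write $\mu=\card{\midcl{i}}/\card{\optclsk{i}}$ and $f=\card{\far{i}}/\card{\optclsk{i}}$. Using $\sum\card{\unalignedSet{\concls'}}=\card{\optclsk{i}}d_i$, $\mu+f\ge 1-o_m(1)$, and the per-element lower bounds $(1-\beta)d_i$ on $\midcl{i}$ and $(1+\beta)d_i$ on $\far{i}$, one gets $(1-\beta)\mu+(1+\beta)f\le 1$. From these two inequalities I would extract more structure about how the mass of $\optclsk{i}$ is split. This is the tool I would use in the next step.

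\textbf{Main obstacle: the crude $+\beta$ term.} The naive estimate $\card{\unalignedSet{\concls}}\le(1+\beta)d_i$ gives $2+\beta-\frac{\alpha\beta}{2(1+\beta)}$, whereas the statement has $2+\frac{2\beta^2}{1-\beta}-\frac{\alpha\beta}{2(1+\beta)}$. So the excess $\card{\unalignedSet{\concls}}-d_i$ must be charged more cleverly.

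\begin{itemize}
\item My first attempt is to split the excess against the dense neighbourhood. The clusterings in $\midcl{i}$ have average distance at most roughly $d_i/\mu$, which is bounded using $(1-\beta)\mu+(1+\beta)f\le1$. Alternatively, compare $\concls$ with clusterings in $\midcl{i}$, all of which lie within $[(1-\beta)d_i,(1+\beta)d_i]$. Either way the deviation of $\card{\unalignedSet{\concls}}$ above $d_i$ should be paid by a factor that is $O(\beta)$ times the fraction $O(\beta/(1-\beta))$ of mass that can lie above $d_i$. This is where a term like $\frac{2\beta^2}{1-\beta}$ should come from.
\item The hypothesis $\alpha/2>2\beta(1+\beta)/(1-\beta)$ should then be exactly what ensures the gain term dominates the leftover. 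Then the final expression can be collected into the stated form.
\end{itemize}

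If this refined charging fails, the fallback is to keep the $+\beta$ and check whether the paper's parameter choices make the two forms interchangeable.
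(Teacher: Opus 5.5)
Your identity $\mathrm{Obj}(M_i,\mathcal{C})=O_i+|M_i|\,|I_{\mathcal{C}}|-2\sum_{\mathcal{C}'\in M_i}|I_{\mathcal{C}}\cap I_{\mathcal{C}'}|$, your gain bound $\frac{\alpha}{2}d_i|\mathrm{Mid}(i)|$, and your lower bound on $|\mathrm{Mid}(i)|$ are all correct. However, the proposal stops exactly where the content of the statement lies, so there is a genuine gap. What you actually establish is $\bigl(2+\beta-\frac{\alpha\beta}{2(1+\beta)}+o_m(1)\bigr)O_i$, which is strictly weaker than the claim whenever $\beta<1/3$. The ``refined charging'' bullets are heuristics, not an argument. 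The fallback also fails: for the parameters used later ($\beta=0.015$, $\alpha=8\beta(1+\beta)/(1-\beta)\approx 0.124$) your bound is about $2.014\,O_i$. The claimed bound is $\bigl(2-\frac{2\beta^2}{1-\beta}+o_m(1)\bigr)O_i\approx 1.9995\,O_i$, and the final $1.0151\gamma+1.99951$ guarantee relies on this sub-$2$ value.

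The paper's key step is to compare $|I_{\mathcal{C}}|$ with $|I_{\mathcal{C}'}|$ term by term, rather than bounding it uniformly by $(1+\beta)d_i$.
For $\mathcal{C}'\in\mathrm{Far}(i)$, $|I_{\mathcal{C}}|\le(1+\beta)d_i<|I_{\mathcal{C}'}|$, so $\mathrm{dist}(\mathcal{C},\mathcal{C}')\le 2|I_{\mathcal{C}'}|$ and the excess costs nothing.
For $\mathcal{C}'\in\mathrm{Mid}(i)$, $|I_{\mathcal{C}}|\le|I_{\mathcal{C}'}|+\frac{2\beta(1+\beta)}{1-\beta}d_i$.
For $\mathcal{C}'\in\mathrm{Near}(i)$, use $|I_{\mathcal{C}}|\le(1+\beta)d_i$.
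Summing gives
\begin{align*}
\mathrm{Obj}(M_i,\mathcal{C})\le 2O_i+(1+\beta)d_i|\mathrm{Near}(i)|-\Bigl(\frac{\alpha}{2}-\frac{2\beta(1+\beta)}{1-\beta}\Bigr)d_i|\mathrm{Mid}(i)|.
\end{align*}
The excess is now charged only to $\mathrm{Mid}(i)$, the same set that yields the gain. The hypothesis makes the coefficient of $|\mathrm{Mid}(i)|$ negative, which is what licenses substituting $|\mathrm{Mid}(i)|\ge(1-\frac{1}{\log m})\frac{\beta}{1+\beta}|M_i|$. The product $\frac{2\beta(1+\beta)}{1-\beta}\cdot\frac{\beta}{1+\beta}=\frac{2\beta^2}{1-\beta}$ is precisely the term you could not account for.

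Alternatively, your own averaging inequalities close the gap if you carry them through. Eliminating $f$ from $(1-\beta)\mu+(1+\beta)f\le 1$ and $\mu+f\ge 1-\frac{1}{\log m}$ gives $\mu\ge\frac12-\frac{1+\beta}{2\beta\log m}$. Your gain is then at least $(\frac{\alpha}{4}-o_m(1))O_i$, so $\mathrm{Obj}(M_i,\mathcal{C})\le(2+\beta-\frac{\alpha}{4}+o_m(1))O_i$. This is below the claimed bound because
$\beta-\frac{2\beta^2}{1-\beta}<\beta<\frac{\alpha(1-\beta)}{4(1+\beta)}=\frac{\alpha}{4}-\frac{\alpha\beta}{2(1+\beta)}$,
where the middle inequality is exactly the hypothesis. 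You lost this by feeding the weak Markov bound $\frac{\beta}{1+\beta}$, rather than $\frac12$, into the gain term.
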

\begin{proof}
    We upper bound the distance between $ \concls $ and any other clustering $ \concls{j}\in \midcl{i} $ as follows.
    \begin{itemize}
        \item If $ \concls{j}\in \far{i} $, then $ \dist( \concls{j}, \concls ) \leq 2 \card{\unalignedSet{\concls{j}}}$.
        \item If $ \concls{j}\in \near{i} $, then $ \dist( \concls{j}, \concls ) \leq \card{\unalignedSet{\concls{j}}} + (1+\beta)d_{i} $.
        \item If $ \concls{j}\in \midcl{i} $, then $ \dist(\concls{j}, \concls) \leq 2\card{\unalignedSet{\concls{j}}} +\dfrac{2\beta(1+\beta)}{1-\beta}d_{i}  - 2\card{\unalignedSet{\concls{j}}\cap \unalignedSet{\concls}} $.
    \end{itemize}
    To see this, recall that from~\cref{eq.dist.unalignedSet.relation}, $ \dist(\concls{j}, \concls) = \card{\unalignedSet{\concls{j}}}+\card{\unalignedSet{\concls}} - 2\card{\unalignedSet{\concls{j}}\cap \unalignedSet{\concls}} $. Now, for $ \concls{j}\in \far{i} $, $ \card{\unalignedSet{\concls{j}}} \geq \card{\unalignedSet{\concls}} $, the first item follows. For the second item, $ \card{\unalignedSet{\concls}}\leq (1+\beta)d_{i} $ by definition. For the last item, since $ (1-\beta)d_{i}\leq \card{\unalignedSet{\concls{j}}}\leq (1+\beta)d_{i} $, it follows that $ \card{\unalignedSet{\concls}} \leq (1+\beta)d_{i} \leq (1+\frac{2\beta}{1-\beta})\card{\unalignedSet{\concls{j}}} \leq \card{\unalignedSet{\concls{j}}} +  \frac{2\beta(1+\beta)}{1-\beta}d_{i} $. 

    Now, note that, as $ \concls{i}\in \middense{i} $, hence, $ \card{S(\concls)} \geq \frac{\card{\midcl{i}}}{4} $. Thus
    \begin{align}
        &\>\>\>\>\>\obj{\optclsk{i}, \concls} \nonumber \\
        &=\sum_{\concls{j} \in \optclsk{i}} \dist(\concls, \concls{j}) = \sum_{\concls{j}\in \optclsk{i}} \left(\card{\unalignedSet{\concls}} + \card{\unalignedSet{\concls_{j}}} - 2\card{\unalignedSet{\concls}\cap \unalignedSet{\concls_{j}}}\right) \nonumber\\
        &\leq 2\optobjk{i} + (1+\beta)d_{i} \card{\near{i}} +   \dfrac{2\beta(1+\beta)}{1-\beta}d_{i}\card{\midcl{i}} - \alpha d_{i} \dfrac{\card{\midcl{i}}}{2} \nonumber\\
        &\leq 2\optobjk{i} + (1+\beta)d_{i} \dfrac{\card{\optclsk{i}\setminus \far{i}}}{\log m} \nonumber \\ &\> - \left(\dfrac{\alpha}{2} -\dfrac{2\beta(1+\beta)}{1-\beta}\right)d_{i}\dfrac{(\log m-1)\card{\optclsk{i}\setminus \far{i}}}{\log m} \nonumber\\
        &\leq 2\optobjk{i} - \left(\dfrac{\alpha}{2} -\dfrac{2\beta(1+\beta)}{1-\beta} + o_{m}(1) \right) \dfrac{\beta d_{i}\card{\optclsk{i}}}{1+\beta} \nonumber \\
        &= \left(2-\dfrac{\alpha\beta}{2(1+\beta)} +\dfrac{2\beta^{2}}{1-\beta} + o_{m}(1)\right)\optobjk{i}, \nonumber
    \end{align}
    where the second inequality follows from $ \card{\near{i}} \leq\dfrac{\card{\optclsk{i}\setminus \far{i}}}{\log m} $, $ \alpha/2 > 2\beta(1+\beta)/(1-\beta) $, and $ \card{\midcl{i}}\geq (\log m -1)\card{\optclsk{i}\setminus \far{i}} $. The last inequality follows from $ \card{\optclsk{i}\setminus \far{i}}\geq \beta\card{\optclsk{i}}/(1+\beta) $, which is implied from~\cref{obs:far.set.bound}, $ \card{\far{i}} \leq \card{\optclsk{i}}/(1+\beta) $.
\end{proof}

\begin{proof}[Proof of~\cref{lem:either.input.or.triple}]
We proceed by cases.

\textbf{Case 1.} $ \card{\near{i}}\geq\frac{\card{\optclsk{i}\setminus \far{i}}}{\log m } $.

From~\cref{obs:far.set.bound}, we have that $ \card{\optclsk{i}\setminus \far{i}} \geq \frac{\beta\card{\optclsk{i}}}{1+\beta} $. Thus, $ \card{\near{i}}\geq \frac{\beta\card{\optclsk{i}}}{(1+\beta)\log m } $. We consider the set $ \sampSet{\sampDist, \sampRate} $ with $ \sampDist \leq d_{i} \leq (1+\rate)\sampDist $, and $ \frac{\sampRate}{\rate} <  \min\left(1,\frac{2\log^{2} m}{\card{\optclsk{i}}}\right) \leq \sampRate $. By~\cref{obs.sample.set.that.survives}, $ \sampSet{\sampDist, \sampRate} $ is not set to empty in~\ref{enu.remove.large.sample.set} with high probability. Since $ \card{\near{i}} \geq \frac{\beta\card{\optclsk{i}}}{(1+\beta)\log m} $, by a Chernoff bound, with high probability, we sample at least $   \frac{\beta\log m}{10(1+\beta)} $ clusterings from $ \near{i} $. Let $ \concls $ be such a clustering and let $ \fairconcls $ be a $ \approxFactorClsFair $-close fair clustering to $ \concls $. If $ \concls \in \sampSet{\sampDist, \sampRate} $, then $ \fairconcls $ guarantees us a desired approximation, as by a triangle inequality, $ \dist(\fairconcls, \fairmediank{i}) \leq (1+\approxFactorClsFair)\dist(\concls, \fairmediank{i}) \leq (1+\approxFactorClsFair)(1-\beta)d_{i} $, and hence, again by a triangle inequality,
\begin{align}
    \obj{\optclsk{i}, \fairconcls} &\leq \sum_{\concls{j} \in \optclsk{i}} \left(\dist(\concls{j}, \fairmediank{i}) + \dist(\fairmediank{i}, \fairconcls)\right) \nonumber\\
                                   &\leq \optobjk{i} + (1+\approxFactorClsFair)(1-\beta)d_{i}\card{\optclsk{i}} \nonumber\\
                                   &=(2+\approxFactorClsFair - \beta(1+\approxFactorClsFair))\optobjk{i}.\nonumber
\end{align}
Otherwise, if $ \concls \notin \sampSet{\sampDist, \sampRate} $, then there must be a clustering $ \concls' \in \sampSet{\sampDist, \sampRate} $ such that $ \dist(\concls, \concls') < \sampDF\sampDist $. By a triangle inequality, we have that $ \dist(\concls', \fairmediank{i}) \leq (1-\beta)d_{i} + \sampDF\sampDist \leq (1-\beta+\sampDF)d_{i} $. Let $ \fairconcls' $ be a $ \approxFactorClsFair $-close fair clustering to $ \concls' $. By a triangle inequality, we have that $ \dist(\fairconcls', \fairmediank{i}) \leq (1+\approxFactorClsFair)(1-\beta+\sampDF)d_{i} $. Using a similar argument as above, we have $ \obj{\optclsk{i}, \fairconcls'} \leq (2+\approxFactorClsFair - (\beta-\sampDF)(1+\approxFactorClsFair))\optobjk{i} $.

\textbf{Case 2.} $ \card{\near{i}} < \frac{\card{\optclsk{i}\setminus \far{i}}}{\log m } $ and $ \card{\middense{i}} \geq \frac{\card{\midcl{i}}}{5} $.

Applying~\cref{obs:far.set.bound}, we have $ \card{\midcl{i}} \geq \frac{(\log m -1)\card{\optclsk{i}\setminus \far{i}}}{\log m }\geq \frac{(\log m -1)\beta \card{\optclsk{i}}}{\log m (1+\beta)} $. It follows that $ \card{\middense{i}} \geq \frac{(\log m -1)\beta \card{\optclsk{i}}}{5\log m (\beta+1)} $.

We consider the set $ \sampSet{\sampDist, \sampRate} $ with $ \sampDist \leq d_{i} \leq (1+\rate)\sampDist $, and $ \frac{\sampRate}{1+\rate} <  \min\left(1,\frac{2\log^{2}m}{\card{\optclsk{i}}}\right) \leq \sampRate $. By~\cref{obs.sample.set.that.survives}, $ \sampSet{\sampDist, \sampRate} $ is not set to empty in~\ref{enu.remove.large.sample.set} with high probability. Note that, with high probability, we sample at least $ \frac{\log m }{10} $ clusterings from $ \middense{i} $. Let $ \concls $ be such a clustering and let $ \fairconcls $ be its corresponding $ \approxFactorClsFair $-close fair clustering. If $ \concls \in \sampSet{\sampDist, \sampRate} $, we show that $ \fairconcls $ gives us a desired approximation. Indeed,by~\cref{obs:a.good.middle.clustering}
\begin{align}
    &\obj{\optclsk{i}, \concls} \leq \left(2 - \dfrac{\alpha\beta}{2(\beta+1)} + \dfrac{2\beta^{2}}{1-\beta} + o_{m}(1)\right)\optobjk{i}. \nonumber
\end{align}

It follows that
\begin{align}
    &\obj{\optclsk{i}, \fairconcls} \leq \obj{\optclsk{i}, \concls} + \approxFactorClsFair\dist(\concls, \fairmediank{i})\card{\optclsk{i}}\nonumber\\ 
    &\leq \left(2+ \approxFactorClsFair(1+\beta) - \dfrac{\alpha\beta}{2(\beta+1)} + \dfrac{2\beta^{2}}{1-\beta} + o_{m}(1) \right)\optobjk{i}. \nonumber
\end{align}

On the other hand, if $ \concls\notin \sampSet{\sampDist, \sampRate} $, then after~\ref{enu.add.to.sample.set}, there must be a clustering $ \concls' \in \sampSet{\sampDist, \sampRate} $ such that $ \dist(\concls, \concls') < \sampDF\sampDist $. By a triangle inequality, we have that $ \dist(\concls', \fairmediank{i}) \leq (1+\beta)d_{i} + \sampDF\sampDist \leq (1+\beta+\sampDF)d_{i} $. Let $ \fairconcls' $ be a $ \approxFactorClsFair $-close fair clustering to $ \concls' $. Then by a triangle inequality, we have $ \dist(\fairconcls', \concls')\leq \approxFactorClsFair \dist(\concls', \fairmediank{i}) \leq \approxFactorClsFair(1+\beta+\sampDF)d_{i} $. Using a similar argument as above, we have
\begin{align}
    &\obj{\optclsk{i}, \fairconcls'} \leq \obj{\optclsk{i}, \concls} + \card{\optclsk{i}}\left(\dist(\concls, \concls') + \dist(\concls', \fairconcls')\right) \nonumber\\
    &\leq \left(2+ \sampDF + \approxFactorClsFair(1+\beta+\sampDF) - \dfrac{\alpha\beta}{2(\beta+1)} + \dfrac{2\beta^{2}}{1-\beta} + o_{m}(1)\right)\optobjk{i}. \nonumber
\end{align}

\textbf{Case 3}. $ \card{\near{i}} < \frac{\card{\optclsk{i}\setminus \far{i}}}{\log m } $ and $ \card{\middense{i}} < \frac{\card{\midcl{i}}}{5} $.

Similarly, we have $ \card{\midcl{i}} \geq \frac{(\log m -1)\beta \card{\optclsk{i}}}{\log m (1+\beta)} $, and $ \card{\midsparse{i}}\geq \frac{4\card{\midcl{i}}}{5} \geq \frac{4(\log m -1)\beta \card{\optclsk{i}}}{5\log m (1+\beta)} $.

We also consider the set $ \sampSet{\sampDist, \sampRate} $ with $ \sampDist \leq d_{i} \leq (1+\rate)\sampDist $, and $ \frac{\sampRate}{1+\rate} <  \min\left(1,\frac{2\log^{2} m}{\card{\optclsk{i}}}\right) \leq \sampRate $, and recall that $ \sampSet{\sampDist, \sampRate} $ is not set to empty in~\ref{enu.remove.large.sample.set} with high probability. 

We show that with high probability, $ \sampSet{\sampDist, \sampRate} $ contains a set of three clusterings, without loss of generality, denoted by $ \tripleconcls=\{\concls{1}, \concls{2}, \concls{3} \} $, such that $ \card{\unalignedSet{r}\cap \unalignedSet{s}}\leq \alpha d_{i} $, for all $ 1\leq r < s \leq 3 $.

To this end, note that with high probability, we sample at least $ \log m  $ clusterings from $ \midsparse{i} $. Let $ \concls{1} $ be such a clustering. Observe that $ \card{\midsparse{i} \setminus S(\concls{1})} \geq \frac{4 \card{\midcl{i}}}{5} - \frac{\card{\midcl{i}}}{4} = \frac{11(\log m -1)\beta\card{\optclsk{i}}}{20\log m (1+\beta)} $, hence, given that $ \concls{1} $ is sampled, with high probability, we sample at least $ \log m  $ clusterings from $ \midsparse{i}\setminus S(\concls{1}) $. Let $ \concls{2} $ be such a clustering. Similarly, $ \card{\midsparse{i}\setminus (S(\concls{1})\cup S(\concls{2}))} \geq \frac{11(\log m -1)\beta\card{\optclsk{i}}}{20\log m (1+\beta)} - \frac{\card{\midcl{i}}}{4} = \frac{3(\log m -1)\beta\card{\optclsk{i}}}{10\log m (1+\beta)} $, and thus, given that $ \concls{1}, \concls{2} $ are sampled, with high probability, we sample at least $ \log m  $ clusterings from $ \midsparse{i}\setminus (S(\concls{1})\cup S(\concls{2})) $. Let $ \concls{3} $ be such a clustering. By the construction, we have that $ \card{\unalignedSet{r}\cap \unalignedSet{s}}\leq \alpha d_{i} $, for all $ 1\leq r < s \leq 3 $. Applying the procedure $ \clsfitting $ on these three clusterings yields a fair clustering $ \fairtriple $ where by~\cref{lem:cluster.fitting.helps}, $ \obj{\optclsk{i}, \fairtriple} \leq (1 + 3(\approxFactorClsFair+1)(\approxFactorCorCls+1)\alpha)\optobjk{i} $.

\end{proof}

\begin{proof}[Proof of~\cref{thm:streaming.main.theorem}]
    We argue that with high probability, our algorithm has space complexity of $ O(k^{2}n\polylog(mn)) $, update time of $ O((km)^{O(1)}n^{2}\log n ) $, and query time of $ O((k\log(mn))^{O(k)}n^{2} + k^{3}\log^{12} m \log^{3} n\runtimeFairCorCls{n}) $.

    \textbf{Space Complexity}: In \ref{StepOneA}, we keep only sets $ \sampSet{\sampDist, \sampRate} $ with $ \card{\sampSet{\sampDist, \sampRate}} \leq k\log^{3} m $. There are $ O(\log n ) $ different choices of $ \sampDist $ and $ O(\log m ) $ different choices of $ \sampRate $. Therefore, the sampled set $ \sampSet $ contains $ O(k\log^{4} m\log n ) $ clusterings. Note that a fair clustering $ \fairtriple $ produced from a triple of clusterings can be computed and evaluated on the fly, and thus, does not need to be stored. In \ref{StepOneB}, we store a set $ \mfsSet $ of $ O(k\log m ) $ clusterings. In \ref{StepOneB}, by~\cref{lem:monotone.faraway.sampling}, the sampled set $ \mfsSet $ contains at most $ O(k^{2}\log k \log(km)) $ input clusterings with high probability. Additionally, as the size of the implicit candidate set $ \candidateSet $ is bounded by $ O(m^{3}) $, using~\cref{lem:streaming.coreset}, the size of the coreset $ (\coreset, w) $ constructed in \ref{StepOneC} is $ O(k^{2}\log^{2} m) $. Each clustering is stored using $ n^{2}\log n  $ bits. Hence, the total space of our algorithm is $ O(k^{2}n \polylog( mn )) $.

    \textbf{Update time}: The distance between any two clusterings can be computed in $ O(n^{2}) $. As $ \card{\sampSet} = O(k\log^{4} m\log n ) $, the update time of \ref{StepOneA} is $ O(kn^{2}\log^{4} m\log n ) $. In \ref{StepOneB}, by~\cref{lem:monotone.faraway.sampling}, the update time is $ O(k^{2}\log k \log(km)) $. In \ref{StepOneC}, by~\cref{lem:streaming.coreset}, the update time is $ O((k\log m )^{O(1)}) $. Thus, the overall update time is $ O((km)^{O(1)}n^{2}\log n ) $.

    \textbf{Query time}: Since $ \card{\sampSet} = O(k\log^{4} m\log n ) $, there are at most $ \card{\sampSet}^{3} = k^{3}\log^{12} m\log^{3}n$ different choices for $ \fairtriple $. Thus, $ \card{\rcandidateSet} = k^{O(1)}\polylog(mn) $. Each clustering $ \fairtriple $ is computed using the procedure $ \clsfitting $ in time $ O(n^{2}+\runtimeFairCorCls{n}) $ time. The total number of candidate $ k $--median is $ O(k\log(mn))^{O(k)} $. Evaluating the objective of a candidate $ k $--median using the coreset takes time $ O(k^{3}n^{2}\log^{2} m) $. Therefore, the overall query time is $ O((k\log(mn))^{O(k)}n^{2}+ k^{3}\runtimeFairCorCls{n}\log^{12} m\log^{3}n) $.

    It remains to analyze the approximation ratio of our algorithm.

    Let $ A = \{i| \optobjk{i} \leq \frac{\optKFac \optconval}{k}\} $. Recall that $ \mfsSet $ is a set of sampled clustering produced in the monotone faraway sampling step such that each $ i\in A $, there exists a clustering $ \concls{i}'\in \mfsSet $ satisfying $ \obj{\optclsk{i}, \concls{i}'} \leq 5\optobjk{i} + \mfsrho \frac{\optconval}{k}\leq \frac{5\optKFac\optconval}{k} + \mfsrho\frac{\optconval}{k} $. Let $ \fairconcls{i}' $ be a $ \approxFactorClsFair $-close fair clustering to $ \concls{i}' $ added to the candidate set $ \rcandidateSet $. By triangle inequalities, we have $ \dist( \fairconcls{i}', \concls{i}' ) \leq \approxFactorClsFair\dist(\fairmediank{i}, \concls{i}') \leq \approxFactorClsFair (\dist(\fairmediank{i}, \concls{j}) + \dist(\concls{j}, \concls{i}')) $, for each $ \concls{j}\in \optclsk{i} $. It follows that
    \begin{align}
        \obj{\optclsk{i}, \fairconcls{i}'} &\leq \obj{\optclsk{i}, \concls{i}'} +\sum_{\concls{j}\in \optclsk{i}} \dist(\fairconcls{i}', \concls{i}') \nonumber \\
                                           &\leq (1+\approxFactorClsFair)\obj{\optclsk{i}, \concls{i}'} + \approxFactorClsFair \optobjk{i} \nonumber \\
                                           &\leq (5 + 6\approxFactorClsFair)\optobjk{i} + (1+\approxFactorClsFair)\mfsrho\dfrac{\optconval}{k}. \nonumber
    \end{align}
    Taking the sum over all $ i\in A $,
    \begin{align}
        \sum_{i\in A}\obj{\optclsk{i}, \fairconcls{i}'} &\leq (5 + 6\approxFactorClsFair)\sum_{i\in A}\optobjk{i} + (1+\approxFactorClsFair)\mfsrho\optconval \nonumber \\
                                                                &\leq (5 + 6\approxFactorClsFair)\optKFac\optconval + (1+\approxFactorClsFair)\mfsrho\optconval, \nonumber
    \end{align}
    where the last inequality follows from $ \optobjk{i}\leq \frac{\optKFac\optconval}{k} $. Since $ \optKFac $ and $ \mfsrho $ are arbitrarily small positive constants, we set $ \optKFac = \frac{\infsmall}{2} ((5+6\approxFactorClsFair))^{-1},\ \mfsrho = \frac{\infsmall}{2} ((1+\approxFactorClsFair))^{-1} $, and thus, $ \sum_{i\in A}\obj{\optclsk{i}, \fairconcls{i}'} \leq \infsmall\optconval $, where $ \infsmall>0 $ is an arbitrarily small constant.

    Following~\cref{lem:either.input.or.triple}, for each $ i\in [k]\setminus A $, with high probability, in \ref{StepOneA}, we sample a set $ \sampSet{\sampDist, \sampRate} $ such that there is a fair clustering $ \fairconcls\in \candidate{\sampSet{\sampDist, \sampRate}} $ satisfying $ \obj{\optclsk{i}, \fairconcls} \leq r\optobjk{i} $, where $ \candidate{\sampSet{\sampDist, \sampRate}} $ is the set of candidates constructed from $ \sampSet{\sampDist, \sampRate} $. In \textbf{Step 2}, we construct $ \candidate{\sampSet{\sampDist, \sampRate}} $ and include it in $ \rcandidateSet $. Thus, $ \rcandidateSet $ contains $ k $ fair clusterings with total objective at most $ r\optconval + \infsmall\optconval = (r+\infsmall)\optconval $. Since we evaluate this using the $ (k,\errCoreset) $--coreset $ (\coreset,w) $, the total objective is $ (1+\errCoreset)(r+\infsmall)\optconval $.

\end{proof}

\begin{proof}[Proof of~\cref{thm:explicit.streaming.main.theorem}]
    We apply~\cref{thm:streaming.main.theorem} and~\cref{cor:approx.fair.correlation.clustering} with $ \sampDF = \frac{1}{10^{6}(\approxFactorClsFair+1)} $, $ \beta = 0.015 $, and $ \alpha = \frac{8\beta(\beta+1)}{1-\beta}\approx 0.124 $, $ \errCoreset = 0.0000001 $, and $ \infsmall = 0.0000001 $. Note that
    \begin{itemize}
        \item $ 2+\approxFactorClsFair - ( \beta - \sampDF )(1+\approxFactorClsFair) < 0.985\approxFactorClsFair + 1.9851  $.
        \item $ 2+ \sampDF + \approxFactorClsFair(1+\beta+\sampDF) - \dfrac{\alpha\beta}{2(\beta+1)} + \dfrac{2\beta^{2}}{1-\beta} + o_{m}(1) \leq 1.015\approxFactorClsFair + 2 + \sampDF(1+\approxFactorClsFair) - \frac{2\beta^{2}}{1-\beta} + o_{m}(1) \leq 1.015\approxFactorClsFair + 1.9995 $.
        \item $ 1 + 3(\approxFactorClsFair+1)(\approxFactorCorCls+1)\alpha \leq 1 + 0.91(\approxFactorClsFair+1) = 0.91\approxFactorClsFair + 1.091 $.
    \end{itemize}
    Then, $ (1+\errCoreset)(r+\infsmall) \leq 1.0151\approxFactorClsFair + 1.99951 $.

    Under this setting,~\cref{thm:streaming.main.theorem} yields a $ (1.0151\approxFactorClsFair + 1.99951) $-approximation algorithm for the $ k $-median fair consensus clustering problem with space complexity of $ O(k^{2}n\polylog(mn)) $, has update time of $ O((km)^{O(1)}n^{2}\log n ) $, and query time of $ O((k\log(mn))^{O(k)}n^{2} + k^{3}(n+\runtimeClsFair{n})\log^{12} m\log^{3} n) $.
\end{proof}

%% file: improved-parameters.tex
\section{Improved Approximation Guarantee}\label{sec:Improved Approximation} 
In this section, we further optimize the parameters in our framework to achieve an improved approximation ratio for fair consensus clustering in the case of two colors, denoted red and blue. The approximation ratio $ \approxFactorClsFair $ of the closest fair clustering algorithm is chosen from~\cite{chakraborty2025towards}, under three regimes: when the ratio of red to blue points is $1\!:\!1$ ($ \approxFactorClsFair=1 $), $1\!:\!p$ ($ \approxFactorClsFair=17 $), and $p\!:\!q$ ($ \approxFactorClsFair=33 $), where $p$ and $q$ are coprime integers. The improved results are summarized in \cref{tab:optimizing.approx.ratio}.

\begin{table*}[htbp]
    \begin{tabular}{|l|p{1cm}|>{\centering\arraybackslash}p{2.5cm}|>{\centering\arraybackslash}p{2cm}|l|l|l|l|l|l|}
\hline
Problem                                  & Color ratio & Generic Approximation Ratio & Improved Approximation ratio ($ r $) & $ \approxFactorClsFair $ & $ \alpha $ & $ \beta $  & $ c $    & $ s $ & $ g $  \\ \hline
\multirow{3}{*}{$ 1$--median}            & $ 1:1 $     &        $ 2.92 $ (\cref{thm:explicit.fair.consensus.clustering})                    & \textbf{$ 2.901 $}                       & $ 1 $                        &  $ 0.129835 $   &  $ 0.050115 $   &  $ 2.6237 $   &   -   &    -   \\ \cline{2-10} 
                                         & $ 1:p $     &        $ 18.92 $ (\cref{thm:explicit.fair.consensus.clustering})                    &  \textbf{$ 18.896 $}                      &  $ 17 $                       &  $ 0.135970 $   &  $ 0.005765 $   &   $ 2.6161 $   &   -    &    -   \\ \cline{2-10} 
                                         & $ p:q $     &        $ 34.92 $  (\cref{thm:explicit.fair.consensus.clustering})                    &  \textbf{$ 34.905 $}                      &  $ 33 $                       &  $ 0.136393 $   &  $ 0.003407 $   &  $ 2.8742 $   &   -   &    -   \\ \hline
\multirow{3}{*}{streaming $ 1 $--median} & $ 1:1 $     &      $ 2.995 $ (\cref{thm:explicit.fair.consensus.clustering.randomized})                       &  \textbf{$ 2.927 $}                       &  $ 1 $                        &  $ 0.13181506 $ &  $ 0.03626050 $ &  $ 3.270833 $ &  $ 10 $    &  $ 100000 $ \\ \cline{2-10} 
                                         & $ 1:p $     &     $ 18.995 $ (\cref{thm:explicit.fair.consensus.clustering.randomized})                        &  \textbf{$ 18.925 $}                      &  $ 17 $                       &  $ 0.13620638 $ &  $ 0.00415431 $ &  $ 3.270833 $ &  $ 10 $    &  $ 100000 $ \\ \cline{2-10} 
                                         & $ p:q $     &     $ 34.995 $ (\cref{thm:explicit.fair.consensus.clustering.randomized})                        &  \textbf{$ 34.925 $}                      &  $ 33 $                       &  $ 0.13647381 $ &  $ 0.00219897 $ &  $ 3.270833 $ &  $ 10 $    &  $ 100000 $ \\ \hline
\multirow{3}{*}{$ k$--median}            & $ 1:1 $     &        $ 2.92 $ (\cref{thm:explicit.k.fair.consensus.clustering})                    & \textbf{$ 2.901 $}                       & $ 1 $                        &  $ 0.129835 $   &  $ 0.050115 $   &  $ 2.6237 $   &   -   &    -   \\ \cline{2-10} 
                                         & $ 1:p $     &        $ 18.92 $ (\cref{thm:explicit.k.fair.consensus.clustering})                    &  \textbf{$ 18.896 $}                      &  $ 17 $                       &  $ 0.135970 $   &  $ 0.005765 $   &   $ 2.6161 $   &   -   &    -   \\ \cline{2-10} 
                                         & $ p:q $     &        $ 34.92 $  (\cref{thm:explicit.k.fair.consensus.clustering})                    &  \textbf{$ 34.905 $}                      &  $ 33 $                       &  $ 0.136393 $   &  $ 0.003407 $   &  $ 2.8742 $   &   -   &    -   \\ \hline
\end{tabular}
\caption{Improved approximation ratio for fair consensus clustering with two colors achieved by refining parameters}
\label{tab:optimizing.approx.ratio}
\end{table*}


%% file: sample.bib
@inproceedings{chakraborty2023clustering,
  title={Clustering permutations: New techniques with streaming applications},
  author={Chakraborty, Diptarka and Das, Debarati and Krauthgamer, Robert},
  booktitle={14th Innovations in Theoretical Computer Science Conference (ITCS 2023)},
  pages={31--1},
  year={2023},
  organization={Schloss Dagstuhl--Leibniz-Zentrum f{\"u}r Informatik}
}

@inproceedings{cao2025solving,
  title={Solving the Correlation Cluster LP in Sublinear Time},
  author={Cao, Nairen and Cohen-Addad, Vincent and Lee, Euiwoong and Li, Shi and Lolck, David Rasmussen and Newman, Alantha and Thorup, Mikkel and Vogl, Lukas and Yan, Shuyi and Zhang, Hanwen},
  booktitle={Proceedings of the 57th Annual ACM Symposium on Theory of Computing},
  pages={1154--1165},
  year={2025}
}

@inproceedings{bercea2019cost,
  title={On the Cost of Essentially Fair Clusterings},
  author={Bercea, Ioana O and Gro{\ss}, Martin and Khuller, Samir and Kumar, Aounon and R{\"o}sner, Clemens and Schmidt, Daniel R and Schmidt, Melanie},
  booktitle={Approximation, Randomization, and Combinatorial Optimization. Algorithms and Techniques (APPROX/RANDOM 2019)},
  pages={18--1},
  year={2019},
  organization={Schloss Dagstuhl--Leibniz-Zentrum f{\"u}r Informatik}
}

@article{bandyapadhyay2024coresets,
  title={On coresets for fair clustering in metric and euclidean spaces and their applications},
  author={Bandyapadhyay, Sayan and Fomin, Fedor V and Simonov, Kirill},
  journal={Journal of Computer and System Sciences},
  volume={142},
  pages={103506},
  year={2024},
  publisher={Elsevier}
}

@article{bandyapadhyay2024polynomial,
  title={A Polynomial-Time Approximation for Pairwise Fair $ k $-Median Clustering},
  author={Bandyapadhyay, Sayan and Chlamt{\'a}{\v{c}}, Eden and Friggstad, Zachary and Jamshidian, Mahya and Makarychev, Yury and Vakilian, Ali},
  journal={arXiv preprint arXiv:2405.10378},
  year={2024}
}

@inproceedings{bandyapadhyay2025constant,
  title={A Constant-Factor Approximation for Pairwise Fair k-Center Clustering},
  author={Bandyapadhyay, Sayan and Chen, Tianzhi and Friggstad, Zachary and Jamshidian, Mahya},
  booktitle={International Conference on Integer Programming and Combinatorial Optimization},
  pages={43--57},
  year={2025},
  organization={Springer}
}

@inproceedings{chakraborty2025improved,
  title     = {Improved Rank Aggregation Under Fairness Constraint},
  author    = {Chakraborty, Diptarka and Das, Himika and Dey, Sanjana and Yan, Alvin Hong Yao},
  booktitle = {Proceedings of the Thirty-Fourth International Joint Conference on
               Artificial Intelligence, {IJCAI-25}},
  publisher = {International Joint Conferences on Artificial Intelligence Organization},
  editor    = {James Kwok},
  pages     = {330--338},
  year      = {2025},
  month     = {8},
  note      = {Main Track}
}

@inproceedings{chakraborty2025towards,
  author       = {Diptarka Chakraborty and
                  Kushagra Chatterjee and
                  Debarati Das and
                  Tien Long Nguyen and
                  Romina Nobahari},
  editor       = {Nika Haghtalab and
                  Ankur Moitra},
  title        = {Towards Fair Representation: Clustering and Consensus},
  booktitle    = {The Thirty Eighth Annual Conference on Learning Theory, 30-4 July
                  2025, Lyon, France},
  series       = {Proceedings of Machine Learning Research},
  volume       = {291},
  pages        = {838--853},
  publisher    = {{PMLR}},
  year         = {2025}
}

@inproceedings{chakraborty2026,
  author       = {Diptarka Chakraborty and
                  Kushagra Chatterjee and
                  Debarati Das and
                  Tien Long Nguyen},
  title        = {Generalizing Fair Clustering to Multiple Groups: Algorithms and Applications},
  booktitle    = {The Fortieth AAAI Conference on Artificial Intelligence (AAAI-26), Singapore},
  year         = {2026}
}

@inproceedings{feldman2011unified,
  title={A unified framework for approximating and clustering data},
  author={Feldman, Dan and Langberg, Michael},
  booktitle={Proceedings of the forty-third annual ACM symposium on Theory of Computing},
  pages={569--578},
  year={2011}
}

@inproceedings{bachem2018one,
  title={One-shot coresets: The case of k-clustering},
  author={Bachem, Olivier and Lucic, Mario and Lattanzi, Silvio},
  booktitle={International conference on artificial intelligence and statistics},
  pages={784--792},
  year={2018},
  organization={PMLR}
}

@inproceedings{braverman2021coresets,
  title={Coresets for clustering in excluded-minor graphs and beyond},
  author={Braverman, Vladimir and Jiang, Shaofeng H-C and Krauthgamer, Robert and Wu, Xuan},
  booktitle={Proceedings of the 2021 ACM-SIAM Symposium on Discrete Algorithms (SODA)},
  pages={2679--2696},
  year={2021},
  organization={SIAM}
}

@inproceedings{wei22,
  author    = {Dong Wei and
               Md Mouinul Islam and
               Baruch Schieber and
               Senjuti Basu Roy},
  title     = {Rank Aggregation with Proportionate Fairness},
  booktitle = {{SIGMOD} International Conference on Management of Data},
  pages     = {262--275},
  year      = {2022},
}

@inproceedings{chierichetti2017fair,
  title={Fair clustering through fairlets},
  author={Chierichetti, Flavio and Kumar, Ravi and Lattanzi, Silvio and Vassilvitskii, Sergei},
  booktitle={Advances in Neural Information Processing Systems ({NeurIPS})},
  pages={5029-5037},
  year={2017}
}

@inproceedings{BackursIOSVW19,
  author    = {Arturs Backurs and
               Piotr Indyk and
               Krzysztof Onak and
               Baruch Schieber and
               Ali Vakilian and
               Tal Wagner},
  title     = {Scalable Fair Clustering},
  booktitle = {International Conference on Machine Learning (ICML)},
  volume    = {97},
  pages     = {405--413},
  year      = {2019}
}

@inproceedings{BeraCFN19,
  author    = {Suman Kalyan Bera and
               Deeparnab Chakrabarty and
               Nicolas Flores and
               Maryam Negahbani},
  title     = {Fair Algorithms for Clustering},
  booktitle = {Advances in Neural Information Processing Systems (NeurIPS)},
  pages     = {4955--4966},
  year      = {2019}
}

@inproceedings{ChenFLM19,
  author    = {Xingyu Chen and
               Brandon Fain and
               Liang Lyu and
               Kamesh Munagala},
  title     = {Proportionally Fair Clustering},
  booktitle = {International Conference on Machine Learning (ICML)},
  pages     = {1032--1041},
  year      = {2019}
}

@inproceedings{HuangJV19,
  author    = {Lingxiao Huang and
               Shaofeng H.{-}C. Jiang and
               Nisheeth K. Vishnoi},
  title     = {Coresets for Clustering with Fairness Constraints},
  booktitle = {Advances in Neural Information Processing Systems (NeurIPS)},
  pages     = {7587--7598},
  year      = {2019}
}

@inproceedings{dwork2012fairness,
  title={Fairness through awareness},
  author={Dwork, Cynthia and Hardt, Moritz and Pitassi, Toniann and Reingold, Omer and Zemel, Richard},
  booktitle={Innovations in Theoretical Computer Science},
  pages={214--226},
  year={2012}
}

@inproceedings{hardt2016equality,
  title={Equality of opportunity in supervised learning},
  author={Hardt, Moritz and Price, Eric and Srebro, Nati},
  booktitle={Advances in Neural Information Processing Systems (NeurIPS)},
  pages = {3315--3323},
  year={2016}
}

@InProceedings{pmlr-v108-ahmadian20a,
  title = 	 {Fair Correlation Clustering},
  author =       {Ahmadian, Sara and Epasto, Alessandro and Kumar, Ravi and Mahdian, Mohammad},
  booktitle = 	 {International Conference on Artificial Intelligence and Statistics (AISTATS)},
  pages = 	 {4195--4205},
  year = 	 {2020},
  volume = 	 {108}
}

@inproceedings{kay2015unequal,
  title={Unequal representation and gender stereotypes in image search results for occupations},
  author={Kay, Matthew and Matuszek, Cynthia and Munson, Sean A},
  booktitle={{ACM} conference on human factors in computing systems},
  pages={3819--3828},
  year={2015}
}

@article{bolukbasi2016man,
  title={Man is to computer programmer as woman is to homemaker? debiasing word embeddings},
  author={Bolukbasi, Tolga and Chang, Kai-Wei and Zou, James Y and Saligrama, Venkatesh and Kalai, Adam T},
  journal={Advances in Neural Information Processing Systems (NeurIPS)},
  volume={29},
  year={2016}
}

@article{chakraborty2022,
  title={Fair rank aggregation},
  author={Chakraborty, Diptarka and Das, Syamantak and Khan, Arindam and Subramanian, Aditya},
  journal={Advances in Neural Information Processing Systems},
  volume={35},
  pages={23965--23978},
  year={2022},
  note = {Full version: arXiv preprint arXiv:2308.10499}
}

@inproceedings{cao2024understanding,
  title={Understanding the Cluster Linear Program for Correlation Clustering},
  author={Cao, Nairen and Cohen-Addad, Vincent and Lee, Euiwoong and Li, Shi and Newman, Alantha and Vogl, Lukas},
  booktitle={Proceedings of the 56th Annual ACM Symposium on Theory of Computing},
  pages={1605--1616},
  year={2024}
}

@inproceedings{ahmadian2023improved,
  title={Improved approximation for fair correlation clustering},
  author={Ahmadian, Sara and Negahbani, Maryam},
  booktitle={International Conference on Artificial Intelligence and Statistics},
  pages={9499--9516},
  year={2023},
  organization={PMLR}
}

@article{ahmadi2020fair,
  title={Fair correlation clustering},
  author={Ahmadi, Saba and Galhotra, Sainyam and Saha, Barna and Schwartz, Roy},
  journal={arXiv preprint arXiv:2002.03508},
  year={2020}
}

@article{lancichinetti2012consensus,
  title={Consensus clustering in complex networks},
  author={Lancichinetti, Andrea and Fortunato, Santo},
  journal={Scientific reports},
  volume={2},
  number={1},
  pages={336},
  year={2012},
  publisher={Nature Publishing Group UK London}
}

@inproceedings{goder2008consensus,
  title={Consensus clustering algorithms: Comparison and refinement},
  author={Goder, Andrey and Filkov, Vladimir},
  booktitle={2008 Proceedings of the Tenth Workshop on Algorithm Engineering and Experiments (ALENEX)},
  pages={109--117},
  year={2008},
  organization={SIAM}
}

@article{monti2003consensus,
  title={Consensus clustering: a resampling-based method for class discovery and visualization of gene expression microarray data},
  author={Monti, Stefano and Tamayo, Pablo and Mesirov, Jill and Golub, Todd},
  journal={Machine learning},
  volume={52},
  pages={91--118},
  year={2003},
  publisher={Springer}
}

@article{wu2014k,
  title={K-means-based consensus clustering: A unified view},
  author={Wu, Junjie and Liu, Hongfu and Xiong, Hui and Cao, Jie and Chen, Jian},
  journal={IEEE transactions on knowledge and data engineering},
  volume={27},
  number={1},
  pages={155--169},
  year={2014},
  publisher={IEEE}
}

@article{BonizzoniVDJ08,
  author       = {Paola Bonizzoni and
                  Gianluca Della Vedova and
                  Riccardo Dondi and
                  Tao Jiang},
  title        = {On the Approximation of Correlation Clustering and Consensus Clustering},
  journal      = {J. Comput. Syst. Sci.},
  volume       = {74},
  number       = {5},
  pages        = {671--696},
  year         = {2008}
}

@article{filkov2004integrating,
  title={Integrating microarray data by consensus clustering},
  author={Filkov, Vladimir and Skiena, Steven},
  journal={International Journal on Artificial Intelligence Tools},
  volume={13},
  number={04},
  pages={863--880},
  year={2004},
  publisher={World Scientific}
}

@inproceedings{filkov2004heterogeneous,
  title={Heterogeneous data integration with the consensus clustering formalism},
  author={Filkov, Vladimir and Skiena, Steven},
  booktitle={International Workshop on Data Integration in the Life Sciences},
  pages={110--123},
  year={2004},
  organization={Springer}
}

@inproceedings{topchy2003combining,
  title={Combining multiple weak clusterings},
  author={Topchy, Alexander and Jain, Anil K and Punch, William},
  booktitle={Third IEEE international conference on data mining},
  pages={331--338},
  year={2003},
  organization={IEEE}
}

@article{kvrivanek1986np,
  title={NP-hard problems in hierarchical-tree clustering},
  author={K{\v{r}}iv{\'a}nek, Mirko and Mor{\'a}vek, Jaroslav},
  journal={Acta informatica},
  volume={23},
  pages={311--323},
  year={1986},
  publisher={Springer}
}

@inproceedings{swamy2004correlation,
  title={Correlation Clustering: maximizing agreements via semidefinite programming.},
  author={Swamy, Chaitanya},
  booktitle={SODA},
  volume={4},
  pages={526--527},
  year={2004},
  organization={Citeseer}
}

@article{ailon2008aggregating,
  title={Aggregating inconsistent information: ranking and clustering},
  author={Ailon, Nir and Charikar, Moses and Newman, Alantha},
  journal={Journal of the ACM (JACM)},
  volume={55},
  number={5},
  pages={1--27},
  year={2008},
  publisher={ACM New York, NY, USA}
}

@article{braverman2021metric,
  title={Metric k-median clustering in insertion-only streams},
  author={Braverman, Vladimir and Lang, Harry and Levin, Keith and Rudoy, Yevgeniy},
  journal={Discrete Applied Mathematics},
  volume={304},
  pages={164--180},
  year={2021},
  publisher={Elsevier}
}

@book{indyk2001high,
  title={High-dimensional computational geometry},
  author={Indyk, Piotr},
  year={2001},
  publisher={stanford university}
}

@inproceedings{indyk1999sublinear,
  title={Sublinear time algorithms for metric space problems},
  author={Indyk, Piotr},
  booktitle={Proceedings of the thirty-first annual ACM symposium on Theory of computing},
  pages={428--434},
  year={1999}
}

@article{Muthukrishnan05,
  title={Data streams: Algorithms and applications},
  author={Muthukrishnan, Shanmugavelayutham and others},
  journal={Foundations and Trends{\textregistered} in Theoretical Computer Science},
  volume={1},
  number={2},
  pages={117--236},
  year={2005},
  publisher={Now Publishers, Inc.}
}

@inproceedings{BabcockEtAl02,
  title={Models and issues in data stream systems},
  author={Babcock, Brian and Babu, Shivnath and Datar, Mayur and Motwani, Rajeev and Widom, Jennifer},
  booktitle={Proceedings of the twenty-first ACM SIGMOD-SIGACT-SIGART symposium on Principles of database systems},
  pages={1--16},
  year={2002}
}

@inproceedings{HarPeledMazumdar04,
  author    = {Sariel Har{-}Peled and Soham Mazumdar},
  title     = {On coresets for k-means and k-median clustering and their applications to approximate nearest neighbor and clustering in data streams},
  booktitle = {Proceedings of the 36th Annual ACM Symposium on Theory of Computing (STOC)},
  pages     = {291--300},
  year      = {2004}
}

@article{bentley1980decomposable,
  title={Decomposable searching problems I. Static-to-dynamic transformation},
  author={Bentley, Jon Louis and Saxe, James B},
  journal={Journal of Algorithms},
  volume={1},
  number={4},
  pages={301--358},
  year={1980},
  publisher={Elsevier}
}

@inproceedings{FeldmanLangberg11,
  author    = {Dan Feldman and Michael Langberg},
  title     = {A unified framework for approximating and clustering data},
  booktitle = {Proceedings of the 43rd Annual ACM Symposium on Theory of Computing (STOC)},
  pages     = {569--578},
  year      = {2011}
}

@inproceedings{CharikarOCallaghanPanigrahy03,
  author    = {Moses Charikar and Liadan O'Callaghan and Rina Panigrahy},
  title     = {Better streaming algorithms for clustering problems},
  booktitle = {Proceedings of the 35th Annual ACM Symposium on Theory of Computing (STOC)},
  pages     = {30--39},
  year      = {2003}
}

@inproceedings{schmidt2019fair,
  title={Fair coresets and streaming algorithms for fair k-means},
  author={Schmidt, Melanie and Schwiegelshohn, Chris and Sohler, Christian},
  booktitle={International Workshop on Approximation and Online Algorithms},
  pages={232--251},
  year={2019},
  organization={Springer}
}

@article{braverman2019streaming,
  title={Streaming coreset constructions for m-estimators},
  author={Braverman, Vladimir and Feldman, Dan and Lang, Harry and Rus, Daniela},
  journal={Approximation, Randomization, and Combinatorial Optimization. Algorithms and Techniques},
  year={2019}
}

@article{rosman2014coresets,
  title={Coresets for k-segmentation of streaming data},
  author={Rosman, Guy and Volkov, Mikhail and Feldman, Danny and Fisher, John W and Rus, Daniela},
  journal={Advances in neural information processing systems},
  volume={27},
  year={2014}
}

@article{huang2019coresets,
  title={Coresets for clustering with fairness constraints},
  author={Huang, Lingxiao and Jiang, Shaofeng and Vishnoi, Nisheeth},
  journal={Advances in neural information processing systems},
  volume={32},
  year={2019}
}

@inproceedings{chhaya2022coresets,
  title={On Coresets for Fair Regression and Individually Fair Clustering.},
  author={Chhaya, Rachit and Dasgupta, Anirban and Choudhari, Jayesh and Shit, Supratim},
  booktitle={AISTATS},
  pages={9603--9625},
  year={2022}
}

@inproceedings{braverman2022power,
  title={The power of uniform sampling for coresets},
  author={Braverman, Vladimir and Cohen-Addad, Vincent and Jiang, H-C Shaofeng and Krauthgamer, Robert and Schwiegelshohn, Chris and Toftrup, Mads Bech and Wu, Xuan},
  booktitle={2022 IEEE 63rd Annual Symposium on Foundations of Computer Science (FOCS)},
  pages={462--473},
  year={2022},
  organization={IEEE}
}

@article{xiong2024fair,
  title={Fair wasserstein coresets},
  author={Xiong, Zikai and Dalmasso, Niccol{\`o} and Sharma, Shubham and Lecue, Freddy and Magazzeni, Daniele and Potluru, Vamsi and Balch, Tucker and Veloso, Manuela},
  journal={Advances in Neural Information Processing Systems},
  volume={37},
  pages={132--168},
  year={2024}
}

@inproceedings{caruana2006meta,
  title={Meta clustering},
  author={Caruana, Rich and Elhawary, Mohamed and Nguyen, Nam and Smith, Casey},
  booktitle={Sixth International Conference on Data Mining (ICDM'06)},
  pages={107--118},
  year={2006},
  organization={IEEE}
}

@article{topchy2005clustering,
  title={Clustering ensembles: Models of consensus and weak partitions},
  author={Topchy, Alexander and Jain, Anil K and Punch, William},
  journal={IEEE transactions on pattern analysis and machine intelligence},
  volume={27},
  number={12},
  pages={1866--1881},
  year={2005},
  publisher={IEEE}
}

@article{bellec2010multi,
  title={Multi-level bootstrap analysis of stable clusters in resting-state fMRI},
  author={Bellec, Pierre and Rosa-Neto, Pedro and Lyttelton, Oliver C and Benali, Habib and Evans, Alan C},
  journal={Neuroimage},
  volume={51},
  number={3},
  pages={1126--1139},
  year={2010},
  publisher={Elsevier}
}
